\documentclass[11pt]{article}
\usepackage[utf8]{inputenc}
\usepackage[english]{babel}
\usepackage[margin=1in]{geometry}
\usepackage{graphicx}
\usepackage{subcaption}
\usepackage{xurl}
\usepackage{amsmath,amssymb,amsthm,graphicx}
\usepackage{algorithm}
\usepackage{algpseudocode,xcolor}

\usepackage{booktabs,colortbl,hyperref}

\usepackage{hyperref,enumitem}
\usepackage[square,numbers,sort&compress]{natbib}
\newcommand{\kl}{\mathsf{KL}}
\usepackage{todonotes}
\usepackage{tikz}
\usepackage{thmtools} 
\usepackage{thm-restate}
\usetikzlibrary{fit}
\usetikzlibrary{positioning,arrows.meta,calc}
\usepackage{graphicx}
\usepackage{tcolorbox}
\usepackage{listings}
\renewcommand{\Pr}{\mathrm{P}}

\newtheorem{theorem}{Theorem}
\newtheorem{lemma}{Lemma}
\newtheorem{proposition}{Proposition}
\newtheorem{remark}{Remark}

\title{Dependence-Aware Label Aggregation for LLM-as-a-Judge \\ via Ising Models}
\author{Krishnakumar Balasubramanian$^{1,2}$, Aleksandr Podkopaev$^{2}$, \\ Shiva Prasad Kasiviswanathan$^{2}$\\
$^1$Department of Statistics, University of California, Davis\\
$^2$Amazon Web Services}
\date{\today}

\begin{document}
\maketitle

\begin{abstract}
Large-scale AI evaluation increasingly relies on aggregating binary judgments from $K$ annotators, including LLMs used as judges. Most classical methods, e.g., Dawid-Skene or (weighted) majority voting, assume annotators are conditionally independent given the true label $Y\in\{0,1\}$, an assumption often violated by LLM judges due to shared data, architectures, prompts, and failure modes. Ignoring such dependencies can yield miscalibrated posteriors and even confidently incorrect predictions. We study label aggregation through a hierarchy of dependence-aware models based on Ising graphical models and latent factors. For class-dependent Ising models, the Bayes log-odds is generally quadratic in votes; for class-independent couplings, it reduces to a linear weighted vote with correlation-adjusted parameters.
We present finite-$K$ examples showing that methods based on conditional independence can flip the Bayes label despite matching per-annotator marginals. We prove separation results demonstrating that these methods remain strictly suboptimal as the number of judges grows, incurring nonvanishing excess risk under latent factors. Finally, we evaluate the proposed method on three real-world datasets, demonstrating improved performance over the classical baselines.
\end{abstract}

\section{Introduction}

Large-scale evaluation of modern AI systems increasingly relies on \emph{aggregating binary judgments} from multiple annotators that are predominantly LLMs used as judges. Given an item with unknown label $Y\in\{0,1\}$ and $K$ noisy votes $J=(J_1,\ldots,J_K)\in\{0,1\}^K$, the core statistical problem is to infer $Y$ from $J$. Many classical aggregators such as majority vote, weighted majority vote, and Dawid-Skene (DS)~\cite{dawid1979maximum} estimators are built around the \emph{conditional independence} (CI) assumption: they treat annotators as independent voters given $Y$, i.e., $\Pr(J|Y) = \prod_{j=1}^K \Pr(J_j|Y)$. Under this assumption, the weighted majority vote estimator is Bayes-optimal, and the DS estimator provides a practical approach for estimating the weights using the popular Expectation-Maximization algorithm, subject to appropriate identifiability conditions.



The CI assumption is however increasingly mismatched to contemporary evaluation pipelines, especially for LLM judges, where correlations arise from shared pretraining corpora, architectures, prompts, and failure modes~\citep{goel2025great,kim2025correlated,wenger2025we}. Dependence is not a benign nuisance: it can fundamentally change the information content of the votes. When judges are redundant or co-vary, aggregation approaches based on the CI assumption (which we refer to as {\em CI-predictors} and which are typically based on weighted majority voting) can \emph{over-count agreement} and yield systematically miscalibrated predictions. In Appendix~\ref{app:mex}, we demonstrate this through a three-annotator example: when the dependence is captured via an Ising model explained shortly, the posterior prediction for the labels (assuming CI) predicts the opposite label with near-certainty, even given \emph{correct} per-annotator marginals. This illustrates a key insight: misspecified dependence structure can dominate correct marginal modeling.

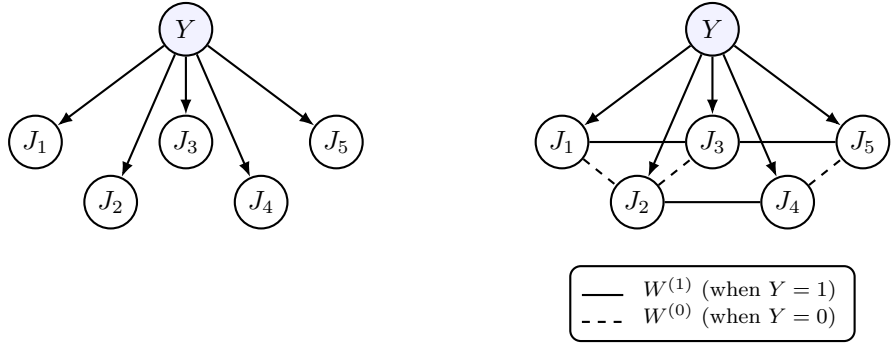
\begin{figure*}[t]
\centering
\begin{tikzpicture}[
  font=\small,
  node/.style={draw, circle, thick, minimum size=7mm, inner sep=0pt},
  ynode/.style={draw, circle, thick, minimum size=7mm, inner sep=0pt, fill=blue!05},
  harrow/.style={-{Latex[length=2mm]}, thick},
  edge/.style={thick},
  dedge/.style={thick, dashed},
  box/.style={draw, rounded corners, thick, inner sep=10pt},
  title/.style={font=\bfseries},
  note/.style={font=\scriptsize, align=center},
  wlabel/.style={font=\scriptsize, fill=white, inner sep=1pt}
]

\begin{scope}[xshift=0cm]

  \node[ynode] (Ya) at (0, -0.6) {$Y$};

  \node[node] (a1) at (-2.0, -2.1) {$J_1$};
  \node[node] (a2) at (-1.0, -2.9) {$J_2$};
  \node[node] (a3) at (0.0, -2.1)  {$J_3$};
  \node[node] (a4) at (1.0, -2.9)  {$J_4$};
  \node[node] (a5) at (2.0, -2.1)  {$J_5$};

  \draw[harrow] (Ya) -- (a1);
  \draw[harrow] (Ya) -- (a2);
  \draw[harrow] (Ya) -- (a3);
  \draw[harrow] (Ya) -- (a4);
  \draw[harrow] (Ya) -- (a5);


\end{scope}

\begin{scope}[xshift=7.0cm]

  \node[ynode] (Yb) at (0, -0.6) {$Y$};

  \node[node] (b1) at (-2.0, -2.1) {$J_1$};
  \node[node] (b2) at (-1.0, -2.9) {$J_2$};
  \node[node] (b3) at (0.0, -2.1)  {$J_3$};
  \node[node] (b4) at (1.0, -2.9)  {$J_4$};
  \node[node] (b5) at (2.0, -2.1)  {$J_5$};

  \draw[harrow] (Yb) -- (b1);
  \draw[harrow] (Yb) -- (b2);
  \draw[harrow] (Yb) -- (b3);
  \draw[harrow] (Yb) -- (b4);
  \draw[harrow] (Yb) -- (b5);

  \draw[edge]  (b1) -- (b3);
  \draw[edge]  (b3) -- (b5);
  \draw[edge]  (b2) -- (b4);

  \draw[dedge] (b1) -- (b2);
  \draw[dedge] (b4) -- (b5);
  \draw[dedge] (b2) -- (b3);

  \node[draw, rounded corners, thick, inner sep=4pt, font=\scriptsize, align=left]
    (leg) at (0,-4.25) {
      \begin{tabular}{@{}l@{}}
      \tikz{\draw[thick] (0,0)--(0.55,0);} \, $W^{(1)}$ (when $Y=1$)\\
      \tikz{\draw[thick,dashed] (0,0)--(0.55,0);} \, $W^{(0)}$ (when $Y=0$)
      \end{tabular}
    };


\end{scope}

\end{tikzpicture}
\vspace{-1mm}
\caption{\textbf{Graphical models for LLM-as-a-judge.}
Conditional independence (CI); \textit{(left)}: judges are independent given $Y$ (represented by lack arrows connecting the Judge LLMs).  Class-dependent Ising; \textit{(right)}: judges exhibit pairwise dependence whose pattern can change with the label
($W^{(0)}\neq W^{(1)}$), enabling class information to affect directly the correlations among judges.}
\label{fig:gm_ci_vs_cd_ising}
\vspace{-2mm}
\end{figure*}



In this work, we revisit label aggregation through the lens of \emph{Ising graphical models\footnote{We note that the multiclass setting can be handled by working with Potts models~\cite{rozikov2022gibbs}.}}, i.e., quadratic Markov random fields of the form $$\Pr(J\mid Y=y)\ \propto\ \exp\!\Big( J^\top h^{(y)} + J^\top W^{(y)} J \Big),$$ for $J\in\{0,1\}^K,\ y\in\{0,1\}$; see Figure~\ref{fig:gm_ci_vs_cd_ising} for a graphical model illustration. Here $h^{(y)}\in\mathbb{R}^K$ is a vector of \emph{class-$y$ local fields} capturing per-judge bias/strength: $h_j^{(y)}$ controls how likely judge $j$
is to output $1$ under class $y$ (holding other votes fixed). The symmetric matrix $W^{(y)}\in\mathbb{R}^{K\times K}$ (with zero diagonal) is the
\emph{class-$y$ coupling} matrix encoding pairwise dependencies: $W_{jk}^{(y)}>0$ encourages judges $j$ and $k$ to co-vote $1$, while
$W_{jk}^{(y)}<0$ discourages co-voting and captures antagonistic or compensatory behavior. In LLM-as-a-judge settings, $h^{(y)}$ reflects each judge’s
label-conditional tendency to vote for class~1, while $W^{(y)}$ represents persistent agreement/disagreement patterns induced by shared,
training data, architectures, and failure modes.

Building on this representation, we introduce the model hierarchy in Figure~\ref{fig:venn_ci_ising_arrows}. The most expressive model is the
\emph{class-dependent Ising model}, which allows class-specific interactions ($W^{(0)}\neq W^{(1)}$) and yields Bayes log-odds that are
generally \emph{quadratic} in the votes. An important special case is the \emph{class-independent Ising model} where couplings are shared across
classes ($W^{(0)}=W^{(1)}$): in this case, the quadratic terms cancel in the likelihood ratio and the Bayes log-odds reduce to a \emph{linear} weighted vote in $J$. Importantly, dependence still matters---correlations are absorbed into correlation-corrected weights and intercepts---allowing this model to discount redundant agreement while retaining the simplicity of a linear aggregation rule.

\subsection{Our Contributions}
\begin{itemize}
  \item \textbf{Dependence-aware Model Hierarchy.} In Section~\ref{sec:depmodels}, we formalize the model hierarchy spanning CI model, class-independent Ising model (shared interactions), and class-dependent Ising model (class-specific interactions). We derive the exact Bayes log-odds for class-dependent Ising model which is quadratic in votes, and show that under class-independent Ising model, the Bayes rule reduces to a linear weighted vote with dependence-adjusted parameters.

  \item \textbf{Separation Results.} In Section~\ref{sec:sep}, we establish a sharp separation result under the class-dependent Ising model. We show that there exist regimes where each judge is better than random, yet the CI-predictor still makes a constant fraction of errors by misinterpreting correlated ``wrong-mode'' agreement as strong evidence. In contrast, the Bayes-optimal predictor exploits the dependence structure and achieves vanishing error as the number of judges grows, creating non-vanishing risk separation between the two methods. 

  \item \textbf{Relation to Factor Models.} In the crowdsourcing literature, another way to model judge dependence is via \emph{latent-factor model} with a shared random effect $Z$ (independent of $Y$) such that judges are conditionally independent only given $(Y,Z)$: $\Pr(J\mid Y,Z)=\prod_{j=1}^K \Pr(J_j\mid Y,Z)$~\cite{whitehill2009whose,xu2024crowdsourcing}. We show that the CI-predictor which ignores $Z$ can remain strictly suboptimal when data are generated from such factor model, \emph{even as $K\to\infty$}. For weak coupling, we show that latent-factor models induce an approximately low-rank Ising structure, placing them between CI and class-dependent Ising model. The details are deferred to Appendix~\ref{app:factorconnection}.
  \item \textbf{Experimental Validation.} We evaluate the proposed aggregation method on three real-world tasks: relevance, toxicity, and summarization evaluations, using six judge models: Claude Sonnet 4.5, Claude Haiku 4.5, OpenAI gpt-oss-120b, Llama 4 Maverick 17B Instruct, Llama 4 Scout 17B Instruct, and DeepSeek-R1. In Appendix~\ref{app:isingsimulations}, we include synthetic simulations that illustrate our theoretical separation results.
\end{itemize}

We defer a discussion placing our work in the context the larger literature on LLM-as-a-judge, unsupervised label aggregation and crowdsourcing to Appendix~\ref{app:related}.

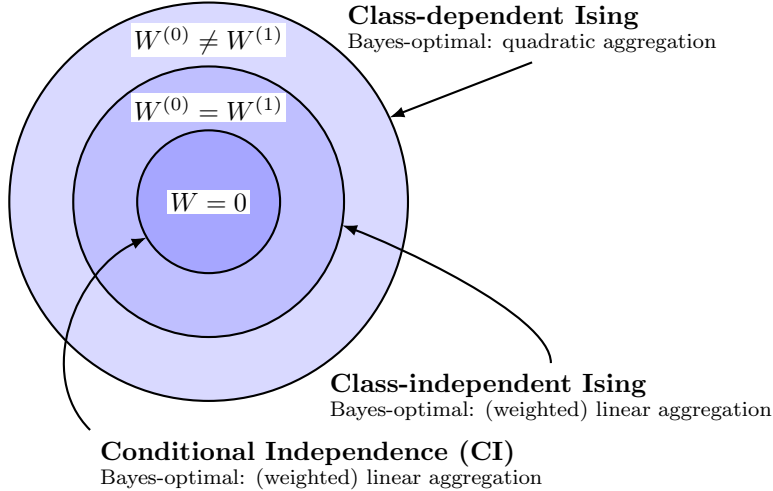
\begin{figure}[t]
\centering
\begin{tikzpicture}[
  font=\small,
  arrow/.style={-{Latex[length=2mm]}, thick},
  lbl/.style={align=left, font=\small},
  inlbl/.style={font=\small, inner sep=1pt, fill=white}
]

\def\Rout{2.65}
\def\Rmid{1.80}
\def\Rin{0.95}

\draw[thick, fill=blue!15] (0,0) circle (\Rout);  
\draw[thick, fill=blue!25] (0,0) circle (\Rmid);  
\draw[thick, fill=blue!35] (0,0) circle (\Rin);   

\node[inlbl] at (0, 2.15) {$W^{(0)}\neq W^{(1)}$};  
\node[inlbl] at (0, 1.25) {$W^{(0)}=W^{(1)}$};      
\node[inlbl] at (0, 0.00) {$W=0$};                  

\node[lbl] (lab_cd) at (4.3, 2.3) {\textbf{Class-dependent Ising}\\[-1mm]
\scriptsize Bayes-optimal: quadratic   aggregation};
\draw[arrow] (lab_cd.south) to[out=225,in=30,looseness=0] (25:\Rout);

\node[lbl] (lab_ciising) at (4.55, -2.6) {\textbf{Class-independent Ising}\\[-1mm]
\scriptsize Bayes-optimal: (weighted) linear aggregation};
\draw[arrow] (lab_ciising.north) to[out=90,in=-20,looseness=0.5] (-10:\Rmid);

\node[lbl] (lab_ci) at (1.5,-3.5) {\textbf{Conditional Independence (CI)}\\[-1mm]
\scriptsize Bayes-optimal: (weighted) linear aggregation};
\draw[arrow] (lab_ci.north west) to[out=135,in=205,looseness=1] (-150:\Rin);

\end{tikzpicture}
\caption{\textbf{Model hierarchy via set inclusion:} $\text{Conditional Independence (CI)} \subset \text{Class-independent\ Ising} \subset \text{Class-dependent Ising}$.}
\label{fig:venn_ci_ising_arrows}
\vspace{-0.15in}
\end{figure}
\section{Dependent LLMs-as-a-Judge Models}\label{sec:depmodels}

Suppose that we observe $n$ independent items. Item $i$ has an unobserved label $Y_i\in\{0,1\}$,
with prior $\Pr(Y_i=1)=\pi\in(0,1)$, and is annotated by $K$ judges producing a binary vote vector
$J_i=(J_{i1},\dots,J_{iK})\in\{0,1\}^K$.  We assume items are independent and model \emph{within-item dependencies} among judges via Ising (pairwise MRF) distribution
conditional on the label: $\Pr(\{J_i,Y_i\}_{i=1}^n)\;=\;\prod_{i=1}^n \Pr(Y_i)\,\Pr(J_i\mid Y_i)$. This makes the label inference problem for each item separable given model parameters\footnote{The literature on Ising models use spins: $\pm1$. Our inference algorithm is unchanged under the bijection $X_{ij}=2J_{ij}-1$; we keep $Y,J\in\{0,1\}$ throughout for consistency.}.

In LLM-as-a-judge pipelines, the within-item dependencies arise due to shared pretraining data, architectures, and prompt templates (leading to label-independent redundancy and shared failure modes), while in other settings dependence itself can be label-dependent (e.g., certain classes trigger common hallucination patterns or refusal behaviors), motivating two practically distinct regimes: a \emph{class-dependent} structure $W^{(0)}\neq W^{(1)}$ that allows correlation patterns to change with the true label and a \emph{class-independent} interaction structure $W^{(0)}=W^{(1)}$ that captures persistent correlations across all items.

\subsection{Class-dependent Ising Model}

In the most general form, the class-conditional distribution of the $K$ votes is allowed to differ across labels: for each $y\in\{0,1\}$, we have
\begin{align}
\label{eq:ising_class_conditional}
&\Pr(J_i \mid Y_i=y)
=\frac{1}{Z^{(y)}}\,
\exp\!\Bigg(
\sum_{j=1}^K h_j^{(y)} J_{ij}
\;+\; \frac{1}{2}\sum_{j\ne k} W_{jk}^{(y)}\, J_{ij}J_{ik}
\Bigg),
\end{align}
where $h^{(y)}\in\mathbb R^K$ are class-$y$ local fields (biases), $W^{(y)}\in\mathbb R^{K\times K}$
is a symmetric coupling matrix with zero diagonal, and $Z^{(y)}$ is the corresponding partition function.
This model captures the possibility that judges are correlated \emph{even after conditioning on the label}
where the strength/pattern of correlation may itself depend on the class.

\noindent\textbf{Bayes-optimal Posterior.}
For a single item, we write $J=(J_1,\ldots,J_K)$.
The Bayes' rule gives posterior log-odds:
\begin{align*}
\Lambda(J):=& \log\frac{\Pr(Y=1\mid J)}{\Pr(Y=0\mid J)}\\
=& \log\frac{\pi}{1-\pi} + \log\frac{P(J\mid Y=1)}{P(J\mid Y=0)} \notag\\
=& \log\frac{\pi}{1-\pi}
+ \sum_{j=1}^K \Delta h_j\, J_j
+ \frac12\sum_{j\ne k} \Delta W_{jk}\, J_jJ_k
+ \Delta_Z,
\end{align*}
where $\Delta h_j := h_j^{(1)}-h_j^{(0)}$, $\Delta W_{jk}:=W_{jk}^{(1)}-W_{jk}^{(0)}$ and $\Delta_Z := -\log Z^{(1)}+\log Z^{(0)}$. The term $\Delta_Z$ is constant in $J$ (for fixed parameters and fixed $K$) and can be absorbed into the intercept. Since $W^{(y)}$ is symmetric with zero diagonal, one can equivalently rewrite the quadratic term as $\frac12\sum_{j\ne k}\Delta W_{jk}J_jJ_k
=\sum_{1\le j<k\le K}\Delta W_{jk}\,J_jJ_k.$ Therefore, the Bayes-optimal predictor takes form:
\begin{equation}
\begin{aligned}
g^\star(J)
=&\mathbf 1\{\Lambda(J)\ge 0\}\\
=&\mathbf 1\Big\{
b_0+\sum_{j=1}^K a_j J_j+\sum_{1\le j<k\le K} b_{jk}J_jJ_k \ge 0\Big\},
\end{aligned}
\label{eq:bayes_quad_revised}
\end{equation}
where $a_j=\Delta h_j$, $b_{jk}=\Delta W_{jk}$ and $b_0=\log\frac{\pi}{1-\pi}+\Delta_Z$. When $W^{(1)}\neq W^{(0)}$, the optimal decision boundary is \emph{quadratic} in the votes since class information may be present not only in marginal accuracies (fields) but also in
\emph{label-dependent correlation structure} (couplings).

\subsection{Class-Independent Couplings}


A common and interpretable special case is one in which the \emph{dependence structure} among judges is shared across both classes but individual biases and accuracies shift with the label. Specifically, we assume
\begin{align}
\label{eq:ising_conditional_revised}
&\Pr(J_i\mid Y_i=y)=
\frac{1}{Z^{(y)}}\exp\!\Big(
\sum_{j=1}^K \big(h_j+(y-\tfrac12)c_j\big)J_{ij}
+\frac12\sum_{j\ne k} W_{jk}\,J_{ij}J_{ik}
\Big),
\end{align}
where $h_j$ represents a label-independent baseline field, $c_j$ controls how the label shifts judge $j$'s field, and $W$ is a shared symmetric coupling matrix (with zero diagonal). Here, the normalizer $Z^{(y)}$ may still depend on $y$ since the fields differ across classes.

\noindent\textbf{Bayes-optimal Posterior.}
Under \eqref{eq:ising_conditional_revised}, the posterior log-odds simplify substantially:
\begin{equation}
\begin{aligned}
\log\frac{\Pr(Y=1\mid J)}{\Pr(Y=0\mid J)}
=& \log\frac{\pi}{1-\pi}
+ \log\frac{\Pr(J\mid Y=1)}{\Pr(J\mid Y=0)}\\
=& \log\frac{\pi}{1-\pi} + \sum_{j=1}^K c_j J_j + \Delta_Z,
\end{aligned}
\label{eq:posterior_log_odds_revised}
\end{equation}
where $\Delta_Z=-\log Z^{(1)}+\log Z^{(0)}$ is a constant in $J$. The quadratic terms cancel since the coupling matrix is shared across the two classes. Therefore, the Bayes-optimal predictor is a \emph{linear} threshold rule:
\begin{equation}
\label{eq:weighted_majority_rule_revised}
\begin{aligned}
g^\star(J) &=
\mathbf 1\!\Big\{
\sum_{j=1}^K c_j J_j + b_0 \ge 0
\Big\},\\
b_0&=\log\frac{\pi}{1-\pi}+\Delta_Z.
\end{aligned}
\end{equation}
If one prefers $\pm1$ labels instead of $\{0,1\}$, one can define centered spins $X_j:=2J_j-1\in\{\pm1\}$. Then $\sum_j c_j J_j = \frac12\sum_j c_j X_j + \frac12\sum_j c_j$, so the rule remains a weighted vote on $X$ after absorbing $\frac12\sum_j c_j$ into the intercept. If $W\equiv 0$, then the judges are conditionally independent given $Y$ and \eqref{eq:ising_conditional_revised}
reduces to a product of Bernoulli marginals: $\Pr(J\mid Y=y)=\prod_{j=1}^K \Pr(J_j\mid Y=y)$ with $\Pr(J_j=1\mid Y=y)=\sigma\!\big(h_j+(y-\tfrac12)c_j\big)$, where $\sigma(t)=(1+e^{-t})^{-1}$.
In this case, the per-judge contribution to the posterior log-odds can be written as an affine function of $J_j$:
\begin{align*}
&\log\frac{\Pr(J_j\mid Y=1)}{\Pr(J_j\mid Y=0)}=
J_j\Big(\operatorname{logit}(p_j^{(1)})-\operatorname{logit}(p_j^{(0)})\Big)
+\log\frac{1-p_j^{(1)}}{1-p_j^{(0)}},
\end{align*}
where $p_j^{(y)}:=\Pr(J_j=1\mid Y=y)$. Summing over $j$ recovers the classical CI or Naive-Bayes linear aggregation rule with weights equal to differences of logits; in the parameterization \eqref{eq:ising_conditional_revised}, $\operatorname{logit}(p_j^{(1)})-\operatorname{logit}(p_j^{(0)}) = c_j.$ When $W\neq 0$ but is \emph{shared across classes}, correlations do not introduce quadratic terms into the Bayes log-odds (they cancel in \eqref{eq:posterior_log_odds_revised}), and the Bayes decision remains linear in $J$.
Nevertheless, correlations still matter statistically: they change the joint law of $J$ within each class and therefore
affect likelihoods, partition functions (hence the intercept $b_0$), and parameter estimation from finite data.
In contrast, if the couplings differ across classes ($W^{(1)}\neq W^{(0)}$), then class information is present in
the dependence structure and the Bayes decision becomes quadratic as in \eqref{eq:bayes_quad_revised}. 

For the sake of completeness, we discuss the CI model with asymmetric errors (which leads to weighted majority vote aggregators) in Appendix~\ref{app:ci_asymmetric}. We also briefly discuss the similarities and differences with the linear and quadratic discriminant analysis model, that are standard Gaussian models (as opposed to the binary Ising models) in Appendix~\ref{app:lda}.

\section{Separation Results}\label{sec:sep}

In this section, we use our model hierarchy to clarify when standard aggregation rules are justified for LLM-as-a-judge pipelines and when those fail. Weighted majority vote (with sufficiently accurate weights) is Bayes-optimal under conditional independence (CI)~\citet[Theorem 1]{ai2025beyond}. However, as argued in the introduction and illustrated by our motivating examples (Appendix~\ref{app:mex}), satisfying CI is often implausible for LLM judges due to shared pretraining corpora, similar architectures, reused prompt templates, and common safety/refusal or hallucination failure modes. These dependencies are not merely noise: they can encode item properties (e.g., difficulty or trigger patterns) and the redundancy of evidence across judges. 

We show a separation (in terms of risk) between majority voting procedures and the Bayes-optimal predictor under two dependence mechanisms relevant to LLM-as-a-judge: (a) shared latent factors inducing low-rank correlations (Appendix~\ref{app:separation-latent-factor}) and (b) interaction-driven dependence captured by the Curie-Weiss model, a special case of the Ising model, with strong agreement modes (Section~\ref{isingsep}). These separation results clearly demonstrate the limitations in expressive power of schemes such as majority voting. The proofs are deferred to Appendix~\ref{app:proof}.

\subsection{Sub-optimality of CI-predictor under Ising~Dependence}\label{isingsep}
The risk of a binary aggregator $g$ in our setting is defined as $R(g):=\Pr(g(J)\neq Y)$. We start by establishing a separation result in terms of risk between a special case of Ising model, namely the Curie-Weiss model, which has been widely used in opinion dynamics~\cite{bahr1998statistical}. Informally, we show that even with infinitely many judges, any CI-predictor 
remains strictly suboptimal, whereas there exists a Bayes predictor that leverages dependence
(quadratic structure) to classify essentially perfectly.
\begin{restatable}[Nonvanishing Bayes vs.\ CI Separation for Class-conditional Ising Models]{theorem}{separation}
\label{thm:ising_separation}
Fix a prior $\Pr(Y=1)=\pi\in(0,1)$.
For each $K\ge 1$, let $J=(J_1,\dots,J_K)\in\{0,1\}^K$ denote the $K$ judges' votes for a single item and define the recoded spins $X_j := 2J_j-1 \in \{-1,+1\}$ and let $M_K := \frac{1}{K}\sum_{j=1}^K X_j \in \Big\{-1,-1+\frac{2}{K},\dots,1\Big\}$. Assume the following \emph{class-conditional Curie-Weiss Ising} model: there exist constants $0<\beta_0<1<\beta_1$ such that, conditional on $Y=y\in\{0,1\}$,
{\small\begin{align}
\label{eq:CW_model_X}
\Pr(X=x\mid Y=y)
=
\frac{1}{Z_K(\beta_y)}\exp\!\Big(\frac{\beta_y}{2K}\Big(\sum_{j=1}^K x_j\Big)^2\Big),
\end{align}}%
for $x\in\{-1,+1\}^K$. Equivalently (writing $x_j=2j_j-1$), this is a special case of the $\{0,1\}$-Ising form \eqref{eq:ising_class_conditional} with $h_j^{(y)}=-2\beta_y\Big(1-\frac{1}{K}\Big)$ and $W_{jk}^{(y)}=\frac{4\beta_y}{K}\quad(j\neq k)$, up to an additive constant absorbed into $Z^{(y)}$.

Let $g_K^\star$ be the Bayes-optimal predictor under the true model \eqref{eq:CW_model_X}.
Let $g_K^{\mathrm{ind}}$ be the population CI-predictor that replaces the true joint by the product of true one-dimensional marginals, i.e., $
\Pr^{\mathrm{ind}}(J=j\mid Y=y):=\prod_{r=1}^K q_y^{j_r}(1-q_y)^{1-j_r}$ with $q_y:=\Pr(J_r=1\mid Y=y) \ \ (\text{independent of }r)$, and then thresholds the induced posterior $\Pr^{\mathrm{ind}}(Y=1\mid J)$ at $1/2$. Then the following results hold:
\begin{enumerate}
\item \textbf{(CI Collapses to the Prior)}
For every $K$ and $y\in\{0,1\}$, one has $q_y=\tfrac12$.
Consequently, $\Pr^{\mathrm{ind}}(Y=1\mid J)=\pi$ for all $J$, so $g_K^{\mathrm{ind}}(J)\equiv \mathbf 1\{\pi\ge\tfrac12\}$, and $R(g_K^{\mathrm{ind}})=\min\{\pi,1-\pi\}, \forall K$.
\item \textbf{(Bayes Risk Vanishes)}
Let $m_\star=m_\star(\beta_1)\in(0,1)$ denote the unique positive solution to $m=\tanh(\beta_1 m).$ Then for any fixed threshold $t\in(0,m_\star^2)$, the quadratic statistic test:
\begin{align*}
\tilde g_K(J)&:=\mathbf 1\{M_K(J)^2\ge t\},\\
\qquad M_K(J)&=\frac{1}{K}\sum_{j=1}^K(2J_j-1),
\end{align*}
satisfies $R(\tilde g_K)\to 0$ as $K\to\infty$. Hence $R(g_K^\star)\to 0$ as $K\to\infty$.

\item \textbf{(Nonvanishing Separation)}
We have:
\begin{equation*}
    \underset{K\to\infty}{\lim}\Big(R(g_K^{\mathrm{ind}})-R(g_K^\star)\Big)
=\min\{\pi,1-\pi\}\;>\;0.
\end{equation*}
\end{enumerate}
\end{restatable}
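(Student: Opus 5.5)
The plan is to handle the three parts in order. Parts 1 and 3 are essentially bookkeeping; the real content is a large-deviation analysis of the Curie--Weiss magnetization in Part 2.

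\textbf{Part 1.} We use the spin-flip symmetry $x\mapsto -x$. The Curie--Weiss weight in \eqref{eq:CW_model_X} depends on $x$ only through $(\sum_j x_j)^2$, so it is invariant under this global flip. Hence $\Pr(X_r=1\mid Y=y)=\Pr(X_r=-1\mid Y=y)$, which gives $q_y=\tfrac12$ for every $K$ and both $y$.

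Since $q_1=q_0$, the product likelihood $\Pr^{\mathrm{ind}}(J\mid Y=y)=2^{-K}$ does not depend on $y$. Bayes' rule therefore returns $\Pr^{\mathrm{ind}}(Y=1\mid J)=\pi$, and the predictor is the constant $\mathbf 1\{\pi\ge \tfrac12\}$. Its risk is $1-\pi$ when $\pi\ge\tfrac12$ and $\pi$ otherwise, i.e.\ $\min\{\pi,1-\pi\}$. The convention at $\pi=\tfrac12$ is harmless because both values coincide there.

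\textbf{Part 2.} We decompose the risk as
\[
R(\tilde g_K)=\pi\,\Pr(M_K^2<t\mid Y=1)+(1-\pi)\,\Pr(M_K^2\ge t\mid Y=0),
\]
and show that both conditional probabilities vanish. Under class $y$, the law of $M_K$ is explicit:
\[
\Pr(M_K=m\mid Y=y)\propto \binom{K}{K(1+m)/2}\exp\!\big(K\beta_y m^2/2\big).
\]
By Stirling's formula, uniformly over the lattice,
\[
\binom{K}{K(1+m)/2}=\exp\!\big(K\,H(m)+O(\log K)\big),\qquad H(m)=-\tfrac{1+m}{2}\log\tfrac{1+m}{2}-\tfrac{1-m}{2}\log\tfrac{1-m}{2}.
\]
So $\Pr(M_K=m\mid Y=y)=\exp\{K[\phi_y(m)-\max\phi_y]+O(\log K)\}$ with free-energy function $\phi_y(m)=\beta_y m^2/2+H(m)$. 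The normalizer has at most $K+1$ terms, which costs only a polynomial factor.

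Next we locate the maximizers of $\phi_y$. Its critical points satisfy $\beta_y m=\operatorname{artanh}(m)$, i.e.\ $m=\tanh(\beta_y m)$.
\begin{itemize}
\item For $\beta_0<1$, the function $\phi_0$ is strictly concave: $\phi_0''(m)=\beta_0-1/(1-m^2)<0$. Its unique maximizer is $m=0$.
\item For $\beta_1>1$, the origin is a local minimum. The maximizers are $\pm m_\star$, where $m_\star$ is the unique positive root of $m=\tanh(\beta_1 m)$. Uniqueness follows from strict concavity of $\tanh$ on $(0,\infty)$ together with slope $\beta_1>1$ at $0$.
\end{itemize}

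Finally we apply these to the two error events.
\begin{itemize}
\item The set $\{m:m^2\ge t\}$ is compact and excludes $0$, so $\sup\phi_0$ over it is strictly below $\phi_0(0)$. Hence $\Pr(M_K^2\ge t\mid Y=0)\le (K+1)e^{-cK+O(\log K)}\to0$.
\item The set $\{m:m^2\le t\}$ with $t<m_\star^2$ is compact and excludes $\pm m_\star$, so by the same argument $\Pr(M_K^2<t\mid Y=1)\to 0$.
\end{itemize}
Therefore $R(\tilde g_K)\to0$. Since $g_K^\star$ minimizes risk, $0\le R(g_K^\star)\le R(\tilde g_K)\to 0$.

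The main obstacle is making the Stirling/Laplace estimate uniform, including near the endpoints $m=\pm1$ where $H$ is not smooth. This is handled by the crude bounds
\[
(K+1)^{-1}e^{KH(m)}\le \binom{K}{K(1+m)/2}\le e^{KH(m)},
\]
which hold uniformly and suffice here. The separation argument also needs continuity of $\phi_y$ on $[-1,1]$ and a strict gap between each compact error set and the maximizers.

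\textbf{Part 3.} Combining the two parts, $R(g_K^{\mathrm{ind}})=\min\{\pi,1-\pi\}$ for every $K$ by Part 1, and $R(g_K^\star)\to0$ by Part 2. So the limit of the difference equals $\min\{\pi,1-\pi\}$, which is positive because $\pi\in(0,1)$.
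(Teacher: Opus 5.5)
Your proposal is correct and follows essentially the same route as the paper: spin-flip symmetry for Part~1, then the explicit magnetization law with Stirling/entropy asymptotics, the location of the maximizers of the free energy $\phi_y$ (at $0$ for $\beta_0<1$, at $\pm m_\star$ for $\beta_1>1$), and a numerator/denominator large-deviation bound on the two error events. Your type-class bounds on the binomial coefficient and your strict-concavity-of-$\tanh$ argument for uniqueness of $m_\star$ are slightly cleaner than the paper's corresponding steps; the one detail worth writing out is that the endpoints $\pm1$ are not maximizers of $\phi_1$, which holds because $\phi_1'(m)=\beta_1 m-\operatorname{artanh}(m)\to\mp\infty$ as $m\to\pm1$.
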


Theorem~\ref{thm:ising_separation} gives a clean population-level separation between the Bayes-optimal predictor and a CI-predictor when the judges are \emph{dependent} given the label. We consider a setting where, for each label $y\in\{0,1\}$, the $K$ votes are drawn from a class-conditional Ising model.  In the particular Curie-Weiss specialization used in the theorem, the conditional law depends only on the \emph{global magnetization} $M_K$ and the dependence strength differs across classes: the negative class is in a \emph{high-temperature} regime ($\beta_0<1$), while the positive class is in a \emph{low-temperature} regime ($\beta_1>1$).

In this model the two classes have the \emph{same} one-dimensional marginals:
for every judge $j$ and both labels $y\in\{0,1\}$, $\Pr(J_j=1\mid Y=y)=1/2.$ Thus, looking at each judge in isolation provides no information about the label.  The only distinguishing signal is in the \emph{dependence structure}: under $Y=1$ the votes exhibit strong global alignment (many judges tend to agree), whereas under $Y=0$ they do not. The CI-predictor replaces the true joint likelihood by a product of these one-dimensional marginals.  Since the marginals are identical across classes, under CI, the likelihood ratio is identically $1$, so the posterior stays equal to the prior $\pi$ for every vote vector $J$.  Consequently, the CI-predictor ignores the data entirely and achieves risk $R(g_K^{\mathrm{ind}})=\min\{\pi,1-\pi\},
$ independently of $K$.

In contrast, the Bayes-optimal posterior uses the correct class-conditional joint likelihood, which depends on quadratic (pairwise) interactions among votes and, in this Curie-Weiss case, can be expressed through the statistic $M_K^2$.
As $K$ grows, the magnetization concentrates at $0$ under the high-temperature class ($Y=0$) and concentrates near a nonzero value under the low-temperature class ($Y=1$) (up to a random global sign flip).
Therefore, a simple quadratic rule such as $\mathbf 1\{M_K^2\ge t\}$ (for any fixed $t$ between $0$ and the squared limiting magnetization under $Y=1$) separates the two classes with vanishing error as $K\to\infty$.
Since the Bayes rule is optimal, its risk also converges to $0$. Putting the two pieces together, the Bayes-optimal risk goes to zero while the CI risk stays bounded away from zero.

\subsection{Extension to Informative Marginals}

The Curie-Weiss separation example in Theorem~\ref{thm:ising_separation} uses the zero-field Curie-Weiss model, which is invariant under the global flip $X\mapsto -X$. This symmetry forces
$q_0=q_1=\Pr(J_j=1\mid Y=y)=\tfrac12$ for both classes, so a CI-predictor that only uses one-dimensional marginals collapses to the prior.
While this may look like a ``corner case,'' the underlying failure mechanism is \emph{not}:
the CI-predictor cannot exploit class information carried by \emph{dependence structure} (correlation/interaction patterns), and this can remain true even when individual marginals are (slightly) informative.

At a basic level, risks vary continuously with the data-generating law: if $P$ and $Q$ are two joint distributions on $(Y,J)$ and
$d_{\mathrm{TV}}(P,Q):=\sup_{A}|P(A)-Q(A)|$ is their total variation distance, then for any classifier $g$, $
|R_P(g)-R_Q(g)| \le d_{\mathrm{TV}}(P,Q)$, and in particular we have $|R_P^\star-R_Q^\star|\le d_{\mathrm{TV}}(P,Q)$, because $R_P(g)=P(g(J)\neq Y)$ is the probability of a measurable event, where $R_P^\star$ is Bayes-optimal risk under distribution $P$. Thus, for any fixed (moderate) $K$, the large finite-sample gaps exhibited by the symmetric Curie-Weiss example persist under small perturbations of the class-conditional distributions, including perturbations that move the marginals away from $1/2$.

The next result gives an explicit \emph{asymptotic} variant in which each judge is individually better than random
($q_0<1/2<q_1$), yet the CI risk remains bounded away from zero while the Bayes-optimal risk still vanishes as $K\to\infty$.
The key idea is to retain \emph{phase coexistence} in the low-temperature class so that, with constant probability, the judges collectively
enter a ``wrong-sign'' agreement mode that a CI-predictor interprets as decisive evidence for the wrong label, whereas the Bayes-optimal predictor uses
a dependence-sensitive statistic (here $|M_K|$) to identify the true class.

\begin{restatable}[Curie-Weiss separation with informative marginals]{theorem}{separationextension}
\label{prop:cw_informative_marginals}
Let $Y\in\{0,1\}$ with $\Pr(Y=1)=\pi\in(0,1)$.
For each $K\ge 1$, define spins $X=(X_1,\dots,X_K)\in\{-1,+1\}^K$ and votes $J_j=(X_j+1)/2\in\{0,1\}$.
Let the class-conditional laws of $X$ be Curie-Weiss models with (possibly $K$-dependent) external fields:
\begin{align}
\label{eq:cw_fields}
&\Pr(X=x\mid Y=y)=
\frac{1}{Z_{K}^{(y)}}\exp\!\Big(
\frac{\beta_y}{2K}\Big(\sum_{j=1}^K x_j\Big)^2
+ h_{y,K}\sum_{j=1}^K x_j
\Big),
\end{align}
for $y\in\{0,1\}$. Assume parameters satisfy:
\begin{enumerate}
\item (High-temperature Class) $\beta_0\in(0,1)$ and $h_{0,K}\equiv h_0<0$ is a fixed negative constant;
\item (Low-temperature Class with Weak Symmetry Breaking) $\beta_1>1$ and $h_{1,K}=c/K$ with some fixed $c>0$.
\end{enumerate}
Let $M_K:=\frac{1}{K}\sum_{j=1}^K X_j\in[-1,1]$ be the magnetization. Let $m_0\in(-1,0)$ be the unique solution of the mean-field equation $m_0=\tanh(\beta_0 m_0+h_0),$ and let $m_\star=m_\star(\beta_1)\in(0,1)$ be the unique positive solution of $m_\star=\tanh(\beta_1 m_\star).$ Define
\begin{align*}
p&:=\frac{e^{c m_\star}}{e^{c m_\star}+e^{-c m_\star}}=\sigma(2c m_\star)\in\Big(\frac12,1\Big),\\
q_0&:=\Pr(J_1=1\mid Y=0)=\frac{1+m_0}{2}\in\Big(0,\frac12\Big),\\
q_1&:=\Pr(J_1=1\mid Y=1)=\frac{1+(2p-1)m_\star}{2}\in\Big(\frac12,1\Big).
\end{align*}
Assume additionally that
\begin{align}
\label{eq:threshold_condition_simple}
\frac{1-m_\star}{2} < q_0
\Longleftrightarrow
m_\star > 1-2q_0.
\end{align}
Let $g_K^\star$ denote the Bayes predictor under the true model \eqref{eq:cw_fields}.
Let $g_K^{\mathrm{ind}}$ denote the population CI-predictor that replaces
$\Pr(J\mid Y=y)$ by the product of the true marginals $\prod_{j=1}^K q_y^{J_j}(1-q_y)^{1-J_j}$.Then, the following hold:
\begin{enumerate}
\item \textbf{(Informative Marginals)}
Each judge is individually better than random: $q_0<\tfrac12<q_1$ (equivalently, specificity $1-q_0>\tfrac12$ and sensitivity $q_1>\tfrac12$).

\item \textbf{(Bayes Risk Vanishes)}
For any fixed threshold $t$ satisfying $|m_0|<t<m_\star$, the aggregator $\tilde g_K(J):=\mathbf 1\{|M_K|\ge t\}$ has $R(\tilde g_K)\to 0$ as $K\to\infty$. Consequently, $R(g_K^\star)\to 0$.

\item \textbf{(CI Remains Bounded away from Bayes)}
As $K\to\infty$, we have $R(g_K^{\mathrm{ind}})\ \rightarrow\ \pi(1-p),$ and hence $\lim_{K\to\infty}\big(R(g_K^{\mathrm{ind}})-R(g_K^\star)\big)\ \ge\ \pi(1-p)\ >0$.
\end{enumerate}
\end{restatable}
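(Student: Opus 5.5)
The plan is to reduce everything to the large-$K$ behaviour of the magnetization $M_K$ under each class, since both class-conditional laws in \eqref{eq:cw_fields} depend on $x$ only through $M_K$. We will use the standard Curie--Weiss large-deviation description. Write $\Pr(M_K=m\mid Y=y)\propto \binom{K}{K(1+m)/2}\exp\big(K[\tfrac{\beta_y}{2}m^2+h_{y,K}m]\big)$ and apply Stirling. This gives a rate function $I_y(m)=\tfrac{1+m}{2}\log\tfrac{1+m}{2}+\tfrac{1-m}{2}\log\tfrac{1-m}{2}-\tfrac{\beta_y}{2}m^2-h_y m$, up to constants.

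For $Y=0$ the field $h_0<0$ is fixed and $\beta_0<1$, so $I_0$ is strictly convex. Its unique minimizer is $m_0=\tanh(\beta_0 m_0+h_0)\in(-1,0)$, and therefore $M_K\to m_0$ in probability.

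For $Y=1$ the field is $h_{1,K}=c/K$. The exponent then carries the extra term $c\,KM_K/K=cM_K$, which is $O(1)$ rather than $O(K)$. The rate function is therefore the symmetric double-well one with minima at $\pm m_\star$, and the tilt $e^{cM_K}$ only reweights the two wells. We will show that $M_K$ concentrates on $\{\pm m_\star\}$, with $\Pr(M_K\text{ near }m_\star\mid Y=1)\to e^{cm_\star}/(e^{cm_\star}+e^{-cm_\star})=p$. To do this, compare the sums of $\binom{K}{k}e^{\beta_1 K m^2/2}$ over neighbourhoods of the two wells: by symmetry these sums are equal, and $e^{cm}$ is essentially constant $e^{\pm cm_\star}$ on each neighbourhood. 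This limit also gives $\mathbb E[M_K\mid Y=1]\to(2p-1)m_\star$ and $\mathbb E[M_K\mid Y=0]\to m_0$. By exchangeability $q_y^{(K)}=(1+\mathbb E[M_K\mid Y=y])/2$, which matches the stated $q_0,q_1$ in the limit; we interpret $q_y$ as these limits, or we carry the $K$-dependent $q_y^{(K)}$ and pass to the limit. Part 1 is then immediate: $m_0<0$ gives $q_0<\tfrac12$, and $p>\tfrac12$ with $m_\star>0$ gives $q_1>\tfrac12$.

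Part 2 follows directly from the concentration statements. Under $Y=0$, $|M_K|\to|m_0|<t$. Under $Y=1$, $|M_K|\to m_\star>t$. Both error probabilities of $\tilde g_K$ therefore tend to zero, and $R(g_K^\star)\le R(\tilde g_K)\to0$.

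For Part 3, the CI log-odds equals $\log\frac{\pi}{1-\pi}+K\big[\tfrac{1+M_K}{2}\,\mathrm{logit}\text{-term}\big]$. Concretely it is
$\log\frac{\pi}{1-\pi}+K\big[\tfrac{1+M_K}{2}\log\tfrac{q_1}{q_0}+\tfrac{1-M_K}{2}\log\tfrac{1-q_1}{1-q_0}\big]$, which is affine in $M_K$ with slope of order $K$. Hence $g^{\mathrm{ind}}_K=\mathbf 1\{M_K\ge\tau_K\}$, where $\tau_K\to\tau$ and $\tau$ solves $\tfrac{1+\tau}{2}=\theta$. Here $\theta\in(q_0,q_1)$ is the KL-balance point
\[
\theta=\frac{\log\frac{1-q_0}{1-q_1}}{\log\frac{q_1}{q_0}+\log\frac{1-q_0}{1-q_1}}.
\]
A quick check shows that $\theta$ lies strictly between $q_0$ and $q_1$.

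The outcomes of the CI rule are then determined by where $M_K$ sits. Under $Y=0$, $M_K\to m_0$ with $\tfrac{1+m_0}{2}=q_0<\theta$, so the rule is correct with probability tending to one. Under $Y=1$, the mode $+m_\star$ has $\tfrac{1+m_\star}{2}\ge q_1>\theta$, so that mode is classified correctly. The wrong-sign mode $-m_\star$ has $\tfrac{1-m_\star}{2}<q_0<\theta$ by \eqref{eq:threshold_condition_simple}, so it is misclassified. The error therefore tends to $\pi(1-p)$.

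The main obstacle is handling the $K$-dependence of $q_y^{(K)}$ and of the threshold $\tau_K$. This needs $q_y^{(K)}\to q_y$ so that the threshold $\tau_K$ stays a fixed distance away from the limit points $m_0$ and $\pm m_\star$; uniform convergence of the expectations, which holds because $|M_K|\le1$, suffices for this. A second delicate point is the precise well-weight limit $p$ for the $c/K$ field. We will obtain it through a local Laplace argument showing that the mass of each well factorizes as (common symmetric mass)$\times e^{\pm cm_\star}(1+o(1))$.
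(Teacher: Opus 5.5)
Your proposal is correct and follows essentially the same route as the paper. Both arguments reduce to the magnetization via Stirling, get concentration at $m_0$ for $Y=0$, and get two-well concentration at $\pm m_\star$ with weights $p,1-p$ for $Y=1$ by comparing well masses under the $e^{cm}$ tilt; both then read off the rule $\mathbf 1\{|M_K|\ge t\}$ and the CI rule as a threshold in $s_K=(1+M_K)/2$ strictly between $q_0$ and $q_1$. Your use of the exact $m\mapsto -m$ symmetry of the untilted weights to equate the two well masses, and your explicit handling of the $K$-dependent $q_y^{(K)}$ and CI threshold, are if anything slightly more careful than the corresponding steps in the paper.
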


\begin{remark}[Continuity viewpoint]\label{rem:continuity_viewpoint}
In Theorem~\ref{prop:cw_informative_marginals}, the limiting CI error is $\pi(1-p)$ with $p=\sigma(2cm_\star)$,
which varies continuously with the ``weak symmetry-breaking'' strength $c$.
As $c\downarrow 0$, one has $p\to 1/2$ and $q_1\downarrow 1/2$, recovering the symmetric example;
for any $c>0$, the marginals become informative ($q_1>\tfrac12$) yet CI still makes errors on a constant fraction $(1-p)$ of the positive-class items
because it thresholds the \emph{sign} of the vote proportion, whereas Bayes exploits the dependence-induced structure (here, large $|M_K|$ regardless of sign).
Thus the separation is robust: it is driven by a persistent correlated ``wrong-sign'' mode, not by the exact equality $q_y=1/2$.
\end{remark}

The separation results in this section are interesting for two reasons. First, they show that the limitations with the CI assumption are \emph{structural}, not a finite-judge artifact: even as $K\to\infty$, the CI-predictor remain strictly suboptimal because it cannot exploit dependence-induced information or distinguish redundant agreement from independent evidence (Ising interactions). Second, they explain why LLM ensembles can become overconfident for the wrong reasons: if judges share a failure mode, their agreement should be discounted rather than counted multiple times, which CI weighting cannot do. Practically, this suggests evaluation pipelines should not rely solely on per-judge accuracies or majority counts, but should monitor and model dependence among judges (e.g., residual correlations after accounting for label uncertainty). When dependence is present, our results motivate moving up the hierarchy: from CI to various Ising models, to improve accuracy, calibration, and uncertainty for downstream decisions.
\begin{table*}[t]
\centering
{
        \begin{tabular}{lcccc}
          \toprule
          Dataset  & Class-Dep.\ Ising (ours)  & Class-Indep.\ Ising (ours) & CI-WMV & CI-UMV  \\
          \midrule
          Relevance  & \cellcolor{green!25}\textbf{0.90}  &  0.88  & 0.82 &  0.80  \\
          Toxicity & \cellcolor{green!25} \textbf{0.79} & 0.77&  0.72& 0.69 \\
          Summarization & \cellcolor{green!25}\textbf{0.68} & 0.64  &0.61  & 0.60  \\
          \bottomrule
        \end{tabular}
 }
\caption{Test accuracy for the various methods on the three datasets. Numbers are averages over 20 trials. CI-WMV and CI-UMV correspond to weighted and uniform majority vote both of which operator under conditional independence assumptions. Class-Dep.\ and Class-Indep.\ refer to class-dependent and class-independent Ising models, respectively.}
\label{tab:real_datasets}
\end{table*}

\section{Experimental Evaluation}
For all experimental results, we use the Expectation-Maximization algorithm for posterior label prediction. We defer the details to Appendix~\ref{app:em_general}. Simulation results comparing weighted (with the weights estimated by EM algorithm) and uniform majority voting are provided in Appendix~\ref{app:exp_ci_mv}. Simulation results regarding CI-predictor and factor/Ising model predictors are provided in Appendix~\ref{app:isingsimulations}. Below, we provide experiments on real-world datasets.

\subsection{Real World Datasets}\label{sec:rd}

We consider LLMaaJ label aggregation in three tasks: relevance, toxicity, and summarization assessment. We use the following models as judges: (a) Claude Sonnet 4.5, (b) Claude Haiku 4.5, (c) OpenAI gpt-oss-120b~\citep{openai2025gptoss120bgptoss20bmodel}, (d) Llama 4 Maverick 17B Instruct, (e) Llama 4 Scout 17B Instruct, and (f) DeepSeek-R1~\citep{deepseekai2025deepseekr1incentivizingreasoningcapability}. For all models, the temperature parameter is set to zero. The prompts used in our evaluations are deferred to Appendix~\ref{app:prompts}.

\noindent\textbf{Relevance.} We use the WikiQA dataset~\citep{yang-etal-2015-wikiqa} to evaluate query-passage relevance classification, a task which arises in the context of retrieval-augmented generation (RAG). The dataset consists of natural language questions paired with multiple sentences extracted from Wikipedia. Each sentence is annotated with a binary label which indicates whether it correctly answers the question at hand. To construct inputs suitable for the relevance classification, we collect all the sentences associated with each question and concatenate them into a single text passage. The resulting passage is labeled relevant to the question if and only if at least one of the original sentences is labeled as a correct answer; see examples in Appendix~\ref{app:wikiqa_preprocessing}. In our evaluation, we use all available splits for the original dataset, resulting in approximately 3000 evaluation instances.


\noindent\textbf{Toxicity.} We use the Jigsaw Unintended Bias in Toxicity Classification dataset~\citep{jigsaw-unintended-bias-in-toxicity-classification} for toxicity classification. We use comments from the private leaderboard test set, filtering for those with at least five human annotators and a minimum length of 100 characters. To ensure balanced representation across all toxicity levels, we use stratified sampling based on comment toxicity scores, i.e., the fraction of annotators who marked a comment as toxic. We randomly sample 1000 comments from each of the four toxicity score buckets: $[0, 0.25)$, $[0.25, 0.5)$, $[0.5, 0.75)$, $[0.75,1]$, and label a comment as toxic (positive class) if at least half of the annotators marked it as toxic. Additional preprocessing steps are deferred to Appendix~\ref{app:jigsaw_preprocessing}.

\noindent\textbf{Summarization.} We use the CNN/DailyMail news dataset~\citep{hermann_2015_daily, see-etal-2017-get} for summarization assessment. The original dataset contains news articles along with short author-written summaries. For binary summarization assessment, we use a preprocessed variant of the dataset from \href{https://arize.com/docs/phoenix/evaluation/running-pre-tested-evals/summarization-eval}{Arize Phoenix summarization benchmark}. This benchmark augments a subset of the original articles and summaries with synthetically generated incorrect summaries designed to resemble correct ones while containing factual inconsistencies. The dataset contains 1100 instances with approximately the same number of correct and incorrect summaries.

\begin{figure*}[ht]
    \centering
    \includegraphics[width=0.95\textwidth]{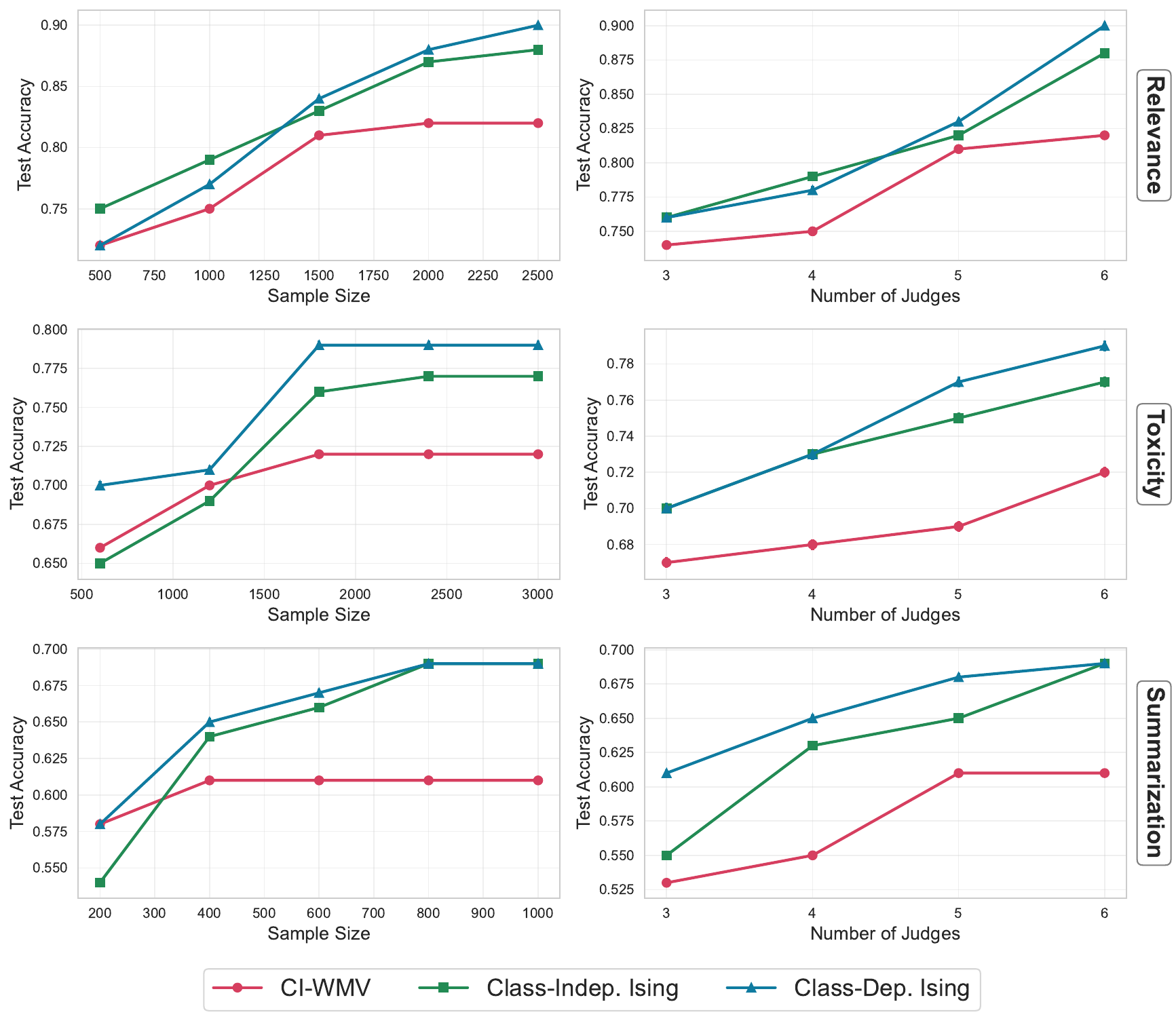}
    \caption{Effect of varying the number of training samples (left) and number of judges (right) on the test accuracy. Top, middle and bottom rows correspond respectively to Relevance, Toxicity and Summarization datasets. The standard errors are of small width, although they are plotted.}
    \label{fig:samplejudge}
\end{figure*}




We conducted two experiments on those datasets: (i) studying the effect of varying the number of training samples $n$ while keeping the number of judges fixed ($K=6$), and (ii) studying effect of varying $K$ while keeping the number of training samples $n$ fixed at roughly 10\%--20\% of the overall dataset. We compared the class-independent Ising-predictors, class-dependent Ising-predictors, and CI-predictors (specifically, the weighted majority vote procedure resulting from the asymmetric models described in Appendix~\ref{app:ci_asymmetric}). For these experiments, we randomly sampled the training data and the judges and report the average test accuracy (and standard error) over 20 trials in Figure~\ref{fig:samplejudge}. 

From Figure~\ref{fig:samplejudge}, we note that the two Ising predictors invariably outperform the CI-predictor once the number of training samples and the number of judges exceed a modest threshold (and in some regimes even with fewer training samples). As a summary, in Table~\ref{tab:real_datasets}, we also report the result of comparing the aforementioned procedures, along with the uniform majority voting procedure, when all six judges are used and when the maximum amount of training data is used for each dataset (i.e., 2500 samples for relevance, 3000 samples for toxicity, and 1000 samples for summarization tasks). We notice that Ising models outperform (weighted) majority voting procedures. In particular, as we have large number of training (unsupervised) samples relative to the number of model parameters, the class-dependent Ising model outperforms the class-independent Ising model, illustrating the benefit of moving up the hierarchy of proposed models.

The above experiments demonstrate the practical value of explicitly modeling judge dependence: when correlations are present, interaction-aware aggregation leverage the additional structure among the pool of judges and deliver consistently lower error than any conditional-independence weighting which is based only on marginals.

\section{Limitations and Conclusion}

While our separation results are stated in terms of the Bayes-optimal posterior under the true generative model, in practice one must rely on computational procedures such as generalized EM with approximate E-steps and surrogate M-steps, e.g., pseudo-likelihood. A key next step is to extend the theory from the population Bayes rule to these algorithms: establishing conditions under which the learned parameters and induced posteriors converge to the correct decision rule or, at minimum, inherit the same separation from conditional-independence baselines. 

Beyond parameter estimation, our framework motivates a hypothesis-testing view of LLM-as-a-judge evaluation: before fitting more expressive dependence models, practitioners can test whether class-conditional independence is plausible by checking for residual correlations among judges after accounting for label uncertainty, or by comparing CI and Ising/factor-model pseudo-likelihoods on holdout items. 

From a practical standpoint, the hierarchy presented in this paper provides actionable guidance: one may start with CI as a cheap baseline and then move to latent-factor (low-rank) dependence if correlations appear global or prompt-driven. When pairwise agreement patterns are strong or class-dependent, one may escalate to class-independent or class-dependent Ising model. Simple diagnostics---estimated coupling strength, stability across random initializations, and predictive calibration on small labeled validation sets---can help determine the appropriate level of dependence modeling and prevent overfitting. Such considerations may make dependence-aware aggregation a reliable component of real-world evaluation pipelines.




\bibliography{ref}
\bibliographystyle{plainnat}

\newpage
\appendix
\onecolumn
\noindent\rule{\textwidth}{1pt}
\begin{center}
\vspace{7pt}
{\Large  Appendix}
\end{center}
\noindent\rule{\textwidth}{1pt}
\section{Related Works} \label{app:related}

The LLM-as-a-judge literature uses LLMs to approximate human preferences in open-ended evaluation, including MT-Bench/Chatbot Arena~\cite{zheng2023judging} and prompt-based evaluators such as G-Eval~\cite{liu2023geval}, as well as benchmark suites like AlpacaEval and debiasing methods for judge preferences (e.g., length control)~\cite{dubois2024lengthcontrolled,lietal2025generation}. See, for example,~\citet{gu2024survey} for a survey and~\citet{lee2025correctly, chen2026efficient} for some recent works.


In the context of binary classification, however, label aggregation has been carried out using majority voting and its weighted variants much before their use in LLM-as-a-judge applications. Our focus is on the case of unsupervised label aggregation in which the unknown true label is treated as a latent variable and EM algorithm is used to infer it. The approach was proposed by~\citet{dawid1979maximum} and analyzed by~\citet{donmez2010unsupervised, raykar2010learning,berend2015finite, gao2016exact}. A large  literature on \emph{crowdsourcing} extends this template by enriching the annotator/item structure while typically retaining conditional independence of votes given latent variables, e.g., modeling annotator expertise and item difficulty (GLAD)~\cite{whitehill2009whose,xu2024crowdsourcing,davani2022dealing}, multi-dimensional annotator effects~\cite{welinder2010multidimensional}, and incorporating item features jointly with label aggregation (also called as ``learning from crowds'')~\cite{raykar2010learning}.
Bayesian variants and alternatives include Bayesian classifier combination~\cite{kim2012bayesian} and models designed to detect spammers/adversaries such as MACE~\cite{hovy2013mace}.

Motivated by economics and customer study literatures~\citep{chen2023wisdom,prelec2017solution}, recently~\citet{ai2025beyond} study LLM aggregation beyond majority vote by proposing \emph{optimal weight} using first-order statistics and \emph{Inverse Surprising Popularity} using second-order cross-agent conditional response statistics with theoretical improvements and empirical gains. But their main guarantees effectively assume oracle-quality second-order information $P(J_i\!\mid J_j)$ for Judges $i$ and $j$  (treated as “accurate” before finite-sample estimation), which can be sample-hungry to estimate and can be brittle under distribution shift or latent-difficulty confounding that changes apparent correlations. Furthermore, they assume the judge labels are conditionally independent given the true label. 

While the aforementioned works document biases and variance in LLM judging, they typically aggregate multiple judge calls using independence-based heuristics; our focus is on explicitly modeling and exploiting \emph{dependence} among judges (e.g., shared prompts or shared failure modes) via Ising structure.

Initial steps towards handling dependencies have been taken by~\citet{jaffe2016unsupervised} and~\citet{shaham2016deep}. In particular, these works study the case of hierarchically dependent judges/annotators, a relaxation of fully conditionally independent judges. While appropriate for certain crowdsourcing applications (e.g., workers clustered by source or organization), this assumption is poorly aligned with LLM-as-a-judge settings: dependencies among LLM judges are rarely tree-structured or nested, and instead arise from overlapping pretraining data, shared architectures, and reused prompting templates that induce \emph{dense, non-hierarchical} correlations and shared failure modes. As a result, hierarchical-dependence models can miss the dominant correlation patterns in practice and provide limited guidance for correcting over-counted agreement in modern LLM evaluation pipelines.~\citet{mazzetto2021semi} studied a setting where not all the judges are conditionally independent, however a subset of them are. Their approach, however, is fundamentally semi-supervised meaning that they required some amount of labeled data. We also remark that in the fully-supervised setting, several works~\citep{steinhardt2016avoiding,kleindessner2018crowdsourcing} considered adversarial annotators which maybe arbitrarily correlated. Compared to these works, our focus is in the purely unsupervised setting.

Finally, we remark that recent works have used additional human (supervised) annotated labels to improve the unknown label inference via the framework of prediction powered inference; see, for example~\cite{angelopoulos2023prediction, angelopoulos2023ppiplus, boyeau2025autoeval}. While we work under the purely unsupervised setting, we remark that our proposed models integrate seamlessly with the aforementioned framework.  


\section{Motivating Example}\label{app:mex}

Consider three LLM annotators producing binary votes.
Annotators~1 and~3 share substantial training data and prompt templates, so their outputs are highly correlated (they tend to agree, including on shared mistakes).
Annotator~2 uses a different prompt style and often behaves differently from the other two.
These agreement/disagreement patterns are driven by shared modeling choices and therefore persist \emph{regardless of the true class label}.
Consequently, certain vote patterns primarily reflect shared failure modes rather than independent evidence.

Under CI, methods such as the Dawid-Skene method, treats annotator votes as independent given $Y$,
and therefore interprets agreement as multiple independent pieces of evidence.
In contrast, an Ising model explicitly represents dependencies via pairwise interactions, allowing the posterior to correctly discount
redundant agreement and to recognize “unlikely” vote configurations created by structural correlations.

We now give a concrete population example in which the a CI-predictor predicts the opposite label from the Bayes-optimal predictor under a class-independent Ising model, \emph{even though CI is given the correct one-dimensional marginals}.

We now consider a single item ($n=1$) with label $Y\in\{0,1\}$ and three annotators
\[
J=(J_1,J_2,J_3)\in\{0,1\}^3,
\]
with a uniform prior $\Pr(Y=1)=\Pr(Y=0)=\tfrac12$.
We interpret $J_k=1$ as annotator $k$ voting for class~$1$ and $J_k=0$ as voting for class~$0$.

We now consider the case when the judge labels are generated from the \emph{class-independent coupling} Ising model
\begin{align}
\label{eq:motivating_true_ising}
\Pr(J\mid Y=y)
=
\frac{1}{Z^{(y)}}\exp\!\Big(
\sum_{k=1}^3 h_k^{(y)} J_k
+\frac12\sum_{k\neq \ell} W_{k\ell}\,J_kJ_\ell
\Big),
\qquad y\in\{0,1\},
\end{align}
where the coupling matrix $W$ is shared across classes (capturing label-independent dependence), and only the fields $h^{(y)}$
depend on $y$.
In this numerical instance,
\begin{align*}
W&=
\begin{pmatrix}
0 & -2.7496 & \phantom{-}4.4583\\
-2.7496 & 0 & -4.8249\\
\phantom{-}4.4583 & -4.8249 & 0
\end{pmatrix},\\
h^{(0)}&=(-1.7447,\ 2.2991,\ 3.5085),\\
h^{(1)}&=(-2.0094,\ 0.1721,\ -2.7597).
\end{align*}
Here $W_{13}>0$ encourages annotators~1 and~3 to co-activate, while $W_{12},W_{23}<0$ discourage annotator~2 from co-activating
with annotators~1 and~3; this encodes the “$\{1,3\}$ similar, 2 different” structure.

Since there are only $2^3=8$ vote patterns, we can normalize \eqref{eq:motivating_true_ising} exactly by enumeration.
The resulting conditional distributions are:
\[
\begin{array}{c|cc}
(J_1,J_2,J_3) & \Pr(J\mid Y=0) & \Pr(J\mid Y=1)\\\hline
(0,0,0) & 0.00181 & 0.3196\\
(0,0,1) & 0.0603  & 0.0202\\
(0,1,0) & 0.0180  & 0.3796\\
(0,1,1) & 0.00483 & 1.93\times 10^{-4}\\
(1,0,0) & 3.16\times 10^{-4} & 0.0428\\
(1,0,1) & 0.9099  & 0.2342\\
(1,1,0) & 2.01\times 10^{-4} & 0.00325\\
(1,1,1) & 0.00465 & 1.43\times 10^{-4}
\end{array}
\]

We first calculate the Bayes-optimal posterior under the true Ising model. Consider the observed votes
\[
J=(0,1,1).
\]
From the table,
\[
\Pr(J\mid Y=0)=0.0048253,
\qquad
\Pr(J\mid Y=1)=0.0001929.
\]
With a uniform prior,
\[
\Pr(Y=1\mid J)
=
\frac{\Pr(J\mid Y=1)}{\Pr(J\mid Y=0)+\Pr(J\mid Y=1)}
\approx
\frac{1.93\times 10^{-4}}{4.83\times 10^{-3}+1.93\times 10^{-4}}
\approx 0.038.
\]
Thus the Bayes-optimal prediction is $Y=0$.

Next we show that a CI-predictor, i.e., assuming conditional independence across judges predicts the opposite label. Let
\[
\pi_k^{(y)}:=\Pr(J_k=1\mid Y=y)
\]
denote the \emph{true} class-conditional one-dimensional marginals implied by the Ising model \eqref{eq:motivating_true_ising}.
From the table (summing over the other coordinates),
\[
(\pi_1^{(0)},\pi_2^{(0)},\pi_3^{(0)})\approx(0.9150,\ 0.0277,\ 0.9797),
\qquad
(\pi_1^{(1)},\pi_2^{(1)},\pi_3^{(1)})\approx(0.2804,\ 0.3832,\ 0.2548).
\]
The CI likelihood replaces the true joint distribution by the product of these marginals:
\begin{align}
\label{eq:motivating_ci_lik}
\Pr_{\mathrm{CI}}(J\mid Y=y)
=
\prod_{k=1}^3 \big(\pi_k^{(y)}\big)^{J_k}\big(1-\pi_k^{(y)}\big)^{1-J_k}.
\end{align}
For $J=(0,1,1)$,
\[
\Pr_{\mathrm{CI}}(J\mid Y=0)
=
(1-\pi_1^{(0)})\,\pi_2^{(0)}\,\pi_3^{(0)}
\approx 0.00230,
\qquad
\Pr_{\mathrm{CI}}(J\mid Y=1)
=
(1-\pi_1^{(1)})\,\pi_2^{(1)}\,\pi_3^{(1)}
\approx 0.0702.
\]
Therefore, with the same uniform prior,
\[
\Pr_{\mathrm{CI}}(Y=1\mid J)
=
\frac{\Pr_{\mathrm{CI}}(J\mid Y=1)}{\Pr_{\mathrm{CI}}(J\mid Y=0)+\Pr_{\mathrm{CI}}(J\mid Y=1)}
\approx
0.968,
\]
so the CI-predictor generates $Y=1$ with high confidence.

This example isolates a purely \emph{model-misspecification} effect:
CI is fed the correct class-conditional marginals $(\pi_k^{(y)})$, but it still fails because it assumes independence.
The true Ising model assigns extremely low probability to $J=(0,1,1)$ under $Y=1$ (about $1.9\times 10^{-4}$),
reflecting the fact that the vote pattern is structurally inconsistent with the label-independent correlation pattern encoded by $W$.
CI cannot represent this structural constraint and therefore overestimates $\Pr(J\mid Y=1)$ by orders of magnitude,
leading to the wrong posterior label.

If dependence patterns also differ by class (i.e., $W^{(1)}\neq W^{(0)}$), then the Bayes log-odds contains quadratic terms
$J_iJ_j$ and the same phenomenon can be even more pronounced.
For example, with
\[
W^{(0)}=
\begin{pmatrix}
0 & -2.4445 & \phantom{-}2.4553\\
-2.4445 & 0 & -2.9206\\
\phantom{-}2.4553 & -2.9206 & 0
\end{pmatrix},
\quad
W^{(1)}=
\begin{pmatrix}
0 & -3.3637 & \phantom{-}3.0718\\
-3.3637 & 0 & -0.0677\\
\phantom{-}3.0718 & -0.0677 & 0
\end{pmatrix},
\]
\[
h^{(0)}=(2.7369,\ 1.3602,\ 1.9559),
\qquad
h^{(1)}=(-2.5484,\ -2.2580,\ -0.9266),
\]
and $\Pr(Y=1)=\tfrac12$, the observation $J=(1,1,0)$ satisfies
\[
\Pr(J\mid Y=0)\approx 0.00393,\qquad \Pr(J\mid Y=1)\approx 1.24\times 10^{-4},
\]
so Bayes predicts $Y=0$ (posterior $\approx 0.031$),
whereas CI built from the correct marginals yields $\Pr_{\mathrm{CI}}(Y=1\mid J)\approx 0.957$ and predicts $Y=1$.
This illustrates the same qualitative failure mode when class information appears directly in correlation differences.

\section{Additional Discussion on the Models}

\subsection{Conditionally Independent Model with Asymmetric Errors}
\label{app:ci_asymmetric}
For the sake of completeness, we introduce the conditionally independent model which assume asymmetric errors. We assume that, conditional on $Y_i$, judges act independently and their accuracies do not depend on the item index $i$.
For each judge $j\in\{1,\dots,K\}$ define the \emph{sensitivity} and \emph{specificity}
\[
\alpha_j := \Pr(J_{ij}=1\mid Y_i=1),\qquad
\beta_j := \Pr(J_{ij}=0\mid Y_i=0).
\]
Equivalently, the false-negative and false-positive rates are
\[
\Pr(J_{ij}=0\mid Y_i=1)=1-\alpha_j,\qquad
\Pr(J_{ij}=1\mid Y_i=0)=1-\beta_j.
\]
Let $\pi:=\Pr(Y_i=1)$ denote the class prior. The generative model is
\[
Y_i\sim\mathrm{Bernoulli}(\pi),
\qquad
J_{ij}\mid (Y_i=y)\ \overset{\text{ind}}{\sim}\
\mathrm{Bernoulli}\!\big(\alpha_j\mathbf 1\{y=1\}+(1-\beta_j)\mathbf 1\{y=0\}\big),
\]
independently across judges $j$ and items $i$ given the parameters. When parameters are unknown, a convenient conjugate choice is independent Beta priors
\[
\alpha_j\sim\mathrm{Beta}(a_{1j},b_{1j}),\qquad
\beta_j\sim\mathrm{Beta}(a_{0j},b_{0j}),\qquad
\pi\sim\mathrm{Beta}(c,d),
\]
with density proportional to $p^{a-1}(1-p)^{b-1}$ on $[0,1]$. The next result gives the \emph{exact} posterior log-odds under CI when $(\alpha,\beta,\pi)$ are known
(or when a plug-in estimate is used). The posterior log-odds satisfies
\begin{align}
\label{eq:ci_asym_logodds}
\log \frac{\Pr(Y_i=1 \mid J_i)}{\Pr(Y_i=0 \mid J_i)}
=
\log\frac{\pi}{1-\pi}
+ \sum_{j=1}^K \Big[
J_{ij}\log\frac{\alpha_j}{1-\beta_j}
+ (1-J_{ij})\log\frac{1-\alpha_j}{\beta_j}
\Big].
\end{align}
Equivalently, \eqref{eq:ci_asym_logodds} can be written as an affine (weighted-vote) score:
\begin{align}
\label{eq:ci_asym_linear}
\log \frac{\Pr(Y_i=1 \mid J_i)}{\Pr(Y_i=0 \mid J_i)}
=
b + \sum_{j=1}^K w_j\,J_{ij},
\qquad
w_j=\log\frac{\alpha_j\beta_j}{(1-\alpha_j)(1-\beta_j)},
\qquad
b=\log\frac{\pi}{1-\pi}+\sum_{j=1}^K \log\frac{1-\alpha_j}{\beta_j}.
\end{align}
Hence the Bayes-optimal CI decision rule is the \emph{weighted majority vote}, given by 
\[
\hat Y_i
=
\mathbf 1\!\left\{
b+\sum_{j=1}^K w_j J_{ij}\ge 0
\right\}.
\]

The weight $w_j$ is positive iff $\alpha_j+\beta_j>1$, i.e., judge $j$ is better than random guessing; it is negative for adversarial judges. In the \emph{symmetric} special case $\alpha_j=\beta_j=p_j$, one obtains $w_j=2\log\frac{p_j}{1-p_j}$ and an intercept depending on $\pi$, recovering the familiar weighted-majority log-odds form under conditional independence.

\subsection{Relation to LDA and QDA}\label{app:lda}
The separation between conditional-independence aggregation and class-dependent Ising aggregation is closely analogous to the classical gap between \emph{linear} and \emph{quadratic} discriminant analysis in Gaussian classification.
In the Gaussian setting, the Bayes log-likelihood ratio for $x\in\mathbb R^d$ takes the form
\[
\log\frac{\Pr(x\mid Y=1)}{\Pr(x\mid Y=0)}
=  x^\top A x + b^\top x + \textsf{c},
\]
where $\textsf{c}$ is constant and the quadratic term $x^\top A x$ vanishes exactly when the class covariances coincide ($\Sigma_0=\Sigma_1$), yielding LDA; otherwise QDA is optimal and the decision boundary is quadratic.
In our discrete vote setting with $J\in\{0,1\}^K$, the class-dependent Ising model yields an \emph{exactly analogous} decomposition:
\[
\log\frac{\Pr(J\mid Y=1)}{\Pr(J\mid Y=0)}
=
\Delta_Z + \sum_{j=1}^K \Delta h_j\,J_j
+\sum_{1\le j<k\le K} \Delta W_{jk}\,J_jJ_k,
\]
so the Bayes decision boundary is quadratic in the votes whenever $W^{(1)}\neq W^{(0)}$, and it collapses to a linear weighted vote precisely when couplings are shared across classes ($W^{(1)}=W^{(0)}$).
From this perspective, the Ising coupling matrix $W$ plays a role analogous to \emph{second-order structure} (covariance/precision) in Gaussian models, and $\Delta_Z=-\log Z^{(1)}+\log Z^{(0)}$ is the discrete analog of the log-determinant term that appears in QDA.

The analogy is not merely formal: both frameworks say that when class information lives in \emph{dependence structure} rather than only in \emph{marginals}, linear/CI rules can be fundamentally misspecified.
In particular, just as QDA can separate classes even when means coincide but covariances differ, a class-dependent Ising model can separate classes even when one-dimensional marginals are uninformative, provided the pairwise interaction patterns differ across classes.
This is exactly the mechanism behind our separation results (proved later in Section~\ref{sec:sep}): CI uses only products of marginals (a linear log-odds in $J$), so it cannot exploit label-dependent correlations encoded by $\Delta W$ and can remain strictly suboptimal.
At the same time, there are important differences from the Gaussian discriminant analysis problem. In particular, the statistical/computational trade-off is sharper for Ising models because likelihood evaluation involves partition functions, motivating pseudo-likelihood/approximate inference in EM. We leave a detailed examination of this tradeoff for future work.

\subsection{Relation to Factor Model}\label{app:factorconnection}

While our focus so far has been on Ising models for capturing dependency, yet another class of models widely used to capture dependencies are factor models. In the context of unsupervised aggregation,~\citet{whitehill2009whose} and~\citet{xu2024crowdsourcing} study factor models to handle potential latent dependencies between the annotators. It is hence natural to explore the connections between the two classes of models. Below, we show that such latent-factor models are approximate special cases of the class-independent Ising models. To the best of our knowledge, this connection has not been observed in the literature despite a considerable amount of work on both models. 

\begin{proposition}[Latent-factor $\Rightarrow$ low-rank class-independent Ising couplings to second order]
\label{prop:latent-to-ising-rigorous}
Fix $K,r\in\mathbb N$ and let $J=(J_1,\dots,J_K)\in\{0,1\}^K$.
Let $\sigma(t)=(1+e^{-t})^{-1}$ and define $\eta_j(y):=a_jy+b_j$ for $y\in\{0,1\}$.
Let $Z\sim\mathcal N(0,I_r)$ be independent of $Y\sim\mathrm{Bernoulli}(\pi)$, and assume that conditional on $(Y=y,Z=z)$,
\[
J_j \mid (Y=y,Z=z)\ \sim\ \mathrm{Bernoulli}\!\big(\sigma(\eta_j(y)+\lambda_j^\top z)\big),
\qquad j=1,\dots,K,
\]
independently over $j$. Let $\varepsilon>0$ and define the loadings as $\lambda_j=\varepsilon\,\tilde\lambda_j$ with fixed $\tilde\lambda_j\in\mathbb R^r$
satisfying $\max_j\|\tilde\lambda_j\|\le L$.
Let $\tilde\Lambda\in\mathbb R^{K\times r}$ have rows $\tilde\lambda_j^\top$ so that $\Lambda=\varepsilon\tilde\Lambda$.
For a fixed class $y\in\{0,1\}$ write $\eta_j=\eta_j(y)$ and $p_j:=\sigma(\eta_j)$.

Then there exist $C_y(\varepsilon)$ independent of $J$ and a remainder $R_y(J;\varepsilon)$ such that, for all $\varepsilon$ small enough,
\begin{align}
\log \Pr_\varepsilon(J\mid Y=y)
=
C_y(\varepsilon)
+\sum_{j=1}^K h_j^{(y)}(\varepsilon)\,J_j
+\frac12\sum_{j\neq k}(\lambda_j^\top\lambda_k)\,J_jJ_k
+R_y(J;\varepsilon),
\label{eq:ising-expansion-corrected}
\end{align}
where the (class-dependent) fields admit the explicit second-order expansion
\begin{align}
h_j^{(y)}(\varepsilon)
=
\eta_j
+\Big(\tfrac12-p_j\Big)\|\lambda_j\|^2
-\sum_{k\neq j}p_k\,\lambda_j^\top\lambda_k,
\qquad j=1,\dots,K,
\label{eq:fields-corrected}
\end{align}
and the remainder is uniformly bounded as
\begin{align}
\sup_{J\in\{0,1\}^K}\,|R_y(J;\varepsilon)|
\;\le\;
C\,\varepsilon^3\sum_{j=1}^K\|\tilde\lambda_j\|^3
\;+\;
C\,\varepsilon^4\Big(\sum_{j=1}^K\|\tilde\lambda_j\|^2\Big)^2
\;\le\;
C'\,\varepsilon^3\Big(\max_{j}\|\tilde\lambda_j\|\Big)\sum_{j=1}^K\|\tilde\lambda_j\|^2
+ C\,\varepsilon^4\|\tilde\Lambda\|_F^4,
\label{eq:remainder-bound-corrected}
\end{align}
for constants $C,C'>0$ depending only on $r$ and $\{\eta_j(y)\}_{j\le K}$ (in particular, not on $J$ or $\varepsilon$).

In particular, the second-order truncation is a quadratic binary MRF with
pairwise couplings given by the off-diagonal entries of the rank-$\le r$ matrix $
\Lambda\Lambda^\top$, since $(\Lambda\Lambda^\top)_{jk}=\lambda_j^\top\lambda_k$). The diagonal entries of $\Lambda\Lambda^\top$ correspond to $J_j^2=J_j$ terms and may be absorbed into the fields.
\end{proposition}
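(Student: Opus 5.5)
The plan is a cumulant (second-order Taylor) expansion of the mixture log-likelihood in $\varepsilon$, done uniformly over $J$. Conditional on $Z=z$, the votes are independent. So
\[
\Pr_\varepsilon(J\mid Y=y)=\mathbb E_Z\Big[\prod_{j}\sigma(\eta_j+\lambda_j^\top Z)^{J_j}\big(1-\sigma(\eta_j+\lambda_j^\top Z)\big)^{1-J_j}\Big]=\mathbb E_Z\big[e^{F_J(Z)}\big],
\]
where
\[
F_J(z)=\sum_j\Big[J_j(\eta_j+\lambda_j^\top z)-\log\big(1+e^{\eta_j+\lambda_j^\top z}\big)\Big].
\]
Write $A(t)=\log(1+e^t)$, so $A'=\sigma$ and $A''=\sigma(1-\sigma)$. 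Taylor expanding each $A(\eta_j+\lambda_j^\top z)$ to second order around $\eta_j$ gives
\[
F_J(z)=F_J(0)+\sum_j (J_j-p_j)\lambda_j^\top z-\tfrac12\sum_j p_j(1-p_j)(\lambda_j^\top z)^2+\rho_J(z).
\]
The remainder satisfies $|\rho_J(z)|\le C\sum_j|\lambda_j^\top z|^3$, since $A'''$ is bounded. Here $F_J(0)=\sum_j \eta_jJ_j+\text{const}$.

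The Gaussian integral is then computed against the quadratic part. Set $u_J:=\sum_j (J_j-p_j)\lambda_j$ and $D:=\sum_j p_j(1-p_j)\lambda_j\lambda_j^\top$. Then
\[
\mathbb E\big[e^{u_J^\top Z-\frac12 Z^\top DZ}\big]=\det(I+D)^{-1/2}\exp\big(\tfrac12 u_J^\top(I+D)^{-1}u_J\big).
\]
Since $\|D\|=O(\varepsilon^2\sum_j\|\tilde\lambda_j\|^2)$, we can replace $(I+D)^{-1}$ by $I$. This costs at most $\|u_J\|^2\|D\|$, which is of order $\varepsilon^4(\sum_j\|\tilde\lambda_j\|^2)^2$ because $|J_j-p_j|\le1$. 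That gives exactly the $\varepsilon^4\|\tilde\Lambda\|_F^4$ term.

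Next expand $\tfrac12\|u_J\|^2=\tfrac12\sum_{j,k}(J_j-p_j)(J_k-p_k)\lambda_j^\top\lambda_k$ using $J_j^2=J_j$:
\begin{itemize}
\item The off-diagonal pairs give the couplings $\tfrac12\sum_{j\ne k}\lambda_j^\top\lambda_kJ_jJ_k$.
\item The cross terms give the field correction $-\sum_{k\ne j}p_k\lambda_j^\top\lambda_k$ on $J_j$.
\item The diagonal gives $\tfrac12(1-2p_j)\|\lambda_j\|^2J_j=(\tfrac12-p_j)\|\lambda_j\|^2J_j$.
\end{itemize}
Together these recover \eqref{eq:fields-corrected}. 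Everything independent of $J$, including $\log\det(I+D)^{-1/2}$ and the constants from $F_J(0)$, is collected into $C_y(\varepsilon)$.

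The main obstacle is controlling the cubic remainder $\rho_J$ inside the expectation, uniformly in $J$ and without the Gaussian tails spoiling things. Let $Q_J$ denote the Gaussian-quadratic measure with density proportional to $e^{u_J^\top z-\frac12 z^\top Dz}\,\phi(z)$. Then
\[
\log\mathbb E[e^{F_J}]=\log\mathbb E[e^{\text{quad}}]+\log\mathbb E_{Q_J}[e^{\rho_J}].
\]
I would bound the last term by $\mathbb E_{Q_J}|\rho_J|\,e^{\,\cdots}$, using the inequality $|e^x-1|\le|x|e^{|x|}$ and Cauchy–Schwarz. This requires $\mathbb E_{Q_J}\big[e^{2C\sum_j|\lambda_j^\top Z|^3}\big]$ to be finite. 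That fails for the crude cubic bound, because a cubic in $Z$ is not exponentially integrable against a Gaussian.

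To fix this I would use the sharper global bound $|\rho_J(z)|\le C\sum_j\min\{|\lambda_j^\top z|^3,(\lambda_j^\top z)^2\}$. It holds because $A$ is $1$-Lipschitz, so the remainder after the linear term grows at most quadratically. For $\varepsilon$ small enough, $Q_J$ is a Gaussian with covariance near $I$ and mean $O(\varepsilon\sum_j\|\tilde\lambda_j\|)$, and the quadratic growth is then exponentially integrable. Third absolute moments of $\lambda_j^\top Z$ under $Q_J$ are $O(\|\lambda_j\|^3)$, which yields the $C\varepsilon^3\sum_j\|\tilde\lambda_j\|^3$ term. Any contributions from the mean shift are of higher order and are absorbed into the $\varepsilon^4$ term.

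The final inequality in \eqref{eq:remainder-bound-corrected} is immediate from $\|\tilde\lambda_j\|^3\le\max_k\|\tilde\lambda_k\|\,\|\tilde\lambda_j\|^2$. The low-rank remark follows since the couplings are the off-diagonal entries of $\Lambda\Lambda^\top$.
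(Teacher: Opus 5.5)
Your proposal is correct and follows essentially the same route as the paper: the Gaussian-mixture integral, a second-order Taylor expansion of $\log(1+e^t)$, completing the square with $(I+M)^{-1}\approx I$, reading off fields and couplings from $\tfrac12 u^\top u$, and a quadratic envelope (from $0\le \sigma(1-\sigma)\le\tfrac14$) to make the cubic remainder exponentially integrable. The only difference is minor: you bound $\log\mathbb{E}_{Q_J}[e^{\rho_J}]$ using $|e^x-1|\le|x|e^{|x|}$ and Cauchy--Schwarz, whereas the paper splits at $\|Z\|=\varepsilon^{-1/2}$ and bounds the ball and the exponentially small tail separately.
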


\begin{proof}
Fix $y\in\{0,1\}$ and abbreviate $\eta_j=\eta_j(y)$ and $p_j=\sigma(\eta_j)$.
Write $g(t):=\log(1+e^t)$ so that $g'(t)=\sigma(t)$, $g''(t)=\sigma(t)(1-\sigma(t))$, and
\[
g^{(3)}(t)=\sigma(t)(1-\sigma(t))(1-2\sigma(t)),\qquad \sup_{t\in\mathbb R}|g^{(3)}(t)|\le \frac14.
\]
We first start with the following integral representation. Conditional on $Z=z$, we have that
\[
\Pr_\varepsilon(J\mid Y=y,Z=z)
=
\prod_{j=1}^K \sigma(\eta_j+\lambda_j^\top z)^{J_j}\big(1-\sigma(\eta_j+\lambda_j^\top z)\big)^{1-J_j}.
\]
Hence, marginalizing over $Z$, we obtain
\[
\Pr_\varepsilon(J\mid Y=y)
=\int_{\mathbb R^r}\exp\big(S_\varepsilon(z)\big)\,\phi_r(z)\,dz,
\]
where $\phi_r$ is the standard $r$-variate Gaussian density and
\[
S_\varepsilon(z)
:=\sum_{j=1}^K\Big(J_j(\eta_j+\lambda_j^\top z)-g(\eta_j+\lambda_j^\top z)\Big).
\]

We now do a second-order Taylor expansion in $\lambda_j^\top z$. By Taylor's theorem, for each $\ell\in\mathbb R$,
\begin{align}
g(\eta_j+\ell)=g(\eta_j)+p_j\,\ell+\tfrac12 g''(\eta_j)\,\ell^2 + R_j(\ell),
\qquad
R_j(\ell)=\frac{g^{(3)}(\eta_j+\theta\ell)}{6}\,\ell^3
\label{eq:taylor}
\end{align}
for some $\theta=\theta(\ell)\in(0,1)$. Therefore
\begin{align}
|R_j(\ell)|\le \frac{\sup_t|g^{(3)}(t)|}{6}\,|\ell|^3 \le \frac{1}{24}|\ell|^3.
\label{eq:taylor-remainder-bound}
\end{align}
Applying \eqref{eq:taylor} with $\ell=\lambda_j^\top z$ yields the exact decomposition
\[
S_\varepsilon(z)=A(J)+u^\top z-\tfrac12 z^\top M z+\mathcal R(z),
\]
where
\[
A(J):=\sum_{j=1}^K\big(J_j\eta_j-g(\eta_j)\big),\qquad
u:=\sum_{j=1}^K(J_j-p_j)\lambda_j\in\mathbb R^r,\qquad
M:=\sum_{j=1}^K g''(\eta_j)\,\lambda_j\lambda_j^\top\in\mathbb R^{r\times r},
\]
and the remainder is
\[
\mathcal R(z):=-\sum_{j=1}^K R_j(\lambda_j^\top z).
\]
By \eqref{eq:taylor-remainder-bound}, we  have that
\begin{align}
|\mathcal R(z)|
\le \frac{1}{24}\sum_{j=1}^K|\lambda_j^\top z|^3
\le \frac{1}{24}\|z\|^3\sum_{j=1}^K\|\lambda_j\|^3.
\label{eq:Rz-bound}
\end{align}
Combining with $\phi_r(z)\propto e^{-\|z\|^2/2}$ gives the following integral representation:
\[
\Pr_\varepsilon(J\mid Y=y)
=(2\pi)^{-r/2}e^{A(J)}
\int_{\mathbb R^r}\exp\!\Big(u^\top z-\tfrac12 z^\top(I+M)z\Big)\,e^{\mathcal R(z)}\,dz.
\]

We now derive an exact Gaussian integral for the quadratic part.  For $\varepsilon$ small enough, $\|M\|<1$ (indeed $\|M\|\le \frac14\sum_j\|\lambda_j\|^2$) so $I+M\succ0$.
Let $\Sigma:=(I+M)^{-1}$ and $\mu:=\Sigma u$. Then
\[
\int_{\mathbb R^r}\exp\!\Big(u^\top z-\tfrac12 z^\top(I+M)z\Big)\,dz
=(2\pi)^{r/2}\det(I+M)^{-1/2}\exp\!\Big(\tfrac12 u^\top\Sigma u\Big),
\]
and hence
\begin{align}
\log \Pr_\varepsilon(J\mid Y=y)
=
A(J)
-\tfrac12\log\det(I+M)
+\tfrac12 u^\top\Sigma u
+\Delta(J;\varepsilon),
\label{eq:logP-decomp}
\end{align}
where
\[
\Delta(J;\varepsilon):=\log\mathbb E_{Z\sim\mathcal N(\mu,\Sigma)}\big[e^{\mathcal R(Z)}\big].
\]

We now proceed with bounding $\Delta(J;\varepsilon)$ from~\eqref{eq:logP-decomp}. Define $B_2:=\sum_{j=1}^K\|\tilde\lambda_j\|^2$ and $B_3:=\sum_{j=1}^K\|\tilde\lambda_j\|^3$.
Since $\lambda_j=\varepsilon\tilde\lambda_j$, \eqref{eq:Rz-bound} implies
\[
|\mathcal R(z)|\le \frac{1}{24}\,\varepsilon^3\,B_3\,\|z\|^3.
\]
Also, $u=O(\varepsilon)$ uniformly in $J$ because $|J_j-p_j|\le 1$ gives
\[
\|u\|\le \sum_{j=1}^K\|\lambda_j\| = \varepsilon\sum_{j=1}^K\|\tilde\lambda_j\|.
\]
Moreover, $\|M\|=O(\varepsilon^2)$ uniformly in $J$ (in fact $M$ does not depend on $J$), so for $\varepsilon$ small
the eigenvalues of $\Sigma=(I+M)^{-1}$ are bounded above and below by absolute constants, and $\|\mu\|=\|\Sigma u\|=O(\varepsilon)$ uniformly in $J$.

Let $Z\sim\mathcal N(\mu,\Sigma)$ and set $R:=\varepsilon^{-1/2}$. Split
\[
\mathbb E[e^{\mathcal R(Z)}]
=\mathbb E\big[e^{\mathcal R(Z)}\mathbf 1_{\{\|Z\|\le R\}}\big]
+\mathbb E\big[e^{\mathcal R(Z)}\mathbf 1_{\{\|Z\|>R\}}\big].
\]
On $\{\|Z\|\le R\}$ we have $|\mathcal R(Z)|\le \frac{1}{24}\varepsilon^3 B_3 R^3
=\frac{1}{24}\varepsilon^{3/2}B_3$, which is $\le 1/2$ for $\varepsilon$ small (since $B_3$ is fixed).
Therefore $|e^{\mathcal R(Z)}-1|\le 2|\mathcal R(Z)|$ on $\{\|Z\|\le R\}$ and
\[
\Big|\mathbb E\big[e^{\mathcal R(Z)}\mathbf 1_{\{\|Z\|\le R\}}\big]-\mathbb P(\|Z\|\le R)\Big|
\le 2\,\mathbb E\big[|\mathcal R(Z)|\big]
\le C\,\varepsilon^3 B_3,
\]
using $\mathbb E\|Z\|^3<\infty$ (uniformly in $J$) and the bound on $|\mathcal R|$.

For the tail term, one can use a quadratic-envelope bound ensuring $e^{\mathcal R(Z)}$ is at most Gaussian-quadratic in $Z$
(because $g''(\cdot)\le 1/4$), yielding
$e^{\mathcal R(Z)}\le \exp(c\,\varepsilon^2 B_2\|Z\|^2)$ for a constant $c$ depending only on $\{\eta_j\}$.
Since $Z$ is Gaussian with covariance uniformly comparable to $I_r$,
\[
\mathbb E\big[e^{\mathcal R(Z)}\mathbf 1_{\{\|Z\|>R\}}\big]
\le \mathbb E\big[\exp(c\,\varepsilon^2 B_2\|Z\|^2)\mathbf 1_{\{\|Z\|>R\}}\big]
\le \exp(-c'/\varepsilon)
\]
for some $c'>0$ and all $\varepsilon$ small enough (uniformly in $J$).

Combining these bounds shows that
\[
\mathbb E[e^{\mathcal R(Z)}]=1+O(\varepsilon^3 B_3)\quad\text{uniformly in $J$},
\]
and hence, for $\varepsilon$ small enough so that $|\mathbb E[e^{\mathcal R(Z)}]-1|\le 1/2$,
\[
|\Delta(J;\varepsilon)|=\big|\log\mathbb E[e^{\mathcal R(Z)}]\big|
\le 2\,\big|\mathbb E[e^{\mathcal R(Z)}]-1\big|
\le C\,\varepsilon^3 B_3,
\]
uniformly over $J$.

Next, we expand the quadratic Gaussian terms. Since $\|M\|=O(\varepsilon^2 B_2)$, we have
\[
\Sigma=(I+M)^{-1}=I+O(\varepsilon^2 B_2),\qquad
\log\det(I+M)=\mathrm{tr}(M)+O(\varepsilon^4 B_2^2).
\]
Therefore
\[
\tfrac12 u^\top\Sigma u = \tfrac12 u^\top u + O(\varepsilon^4 B_2^2),
\qquad
-\tfrac12\log\det(I+M) = C(J)+ O(\varepsilon^4 B_2^2),
\]
where $C(J)$ is constant in $J$ and the $J$-independent pieces are absorbed into $C_y(\varepsilon)$. Plugging into \eqref{eq:logP-decomp} yields
\[
\log \Pr_\varepsilon(J\mid Y=y)
=
C_y(\varepsilon)
+\sum_{j=1}^K \eta_j J_j
+\tfrac12 u^\top u
+R_y(J;\varepsilon),
\]
with
\[
\sup_J |R_y(J;\varepsilon)|\le C\varepsilon^3B_3 + C\varepsilon^4B_2^2.
\]

It remains now to extract the fields and the couplings from $u^\top u$. Recall $u=\sum_{j=1}^K(J_j-p_j)\lambda_j$. Then
\[
\tfrac12 u^\top u
=\frac12\sum_{j,k=1}^K (J_j-p_j)(J_k-p_k)\,\lambda_j^\top\lambda_k.
\]
Expanding $(J_j-p_j)(J_k-p_k)$ and separating the $j\neq k$ and $j=k$ parts gives
\[
\tfrac12 u^\top u
=
\frac12\sum_{j\neq k}(\lambda_j^\top\lambda_k)J_jJ_k
+\sum_{j=1}^K\Big[\Big(\tfrac12-p_j\Big)\|\lambda_j\|^2-\sum_{k\neq j}p_k\,\lambda_j^\top\lambda_k\Big]J_j
+C(J).
\]
Absorbing the constant into $C_y(\varepsilon)$ yields \eqref{eq:ising-expansion-corrected}--\eqref{eq:fields-corrected}.
Finally, the bound \eqref{eq:remainder-bound-corrected} follows from $B_3\le (\max_j\|\tilde\lambda_j\|)B_2$.
\end{proof}
\begin{remark}[Ising on $\pm1$ spins and the choice of label set for $Y$]
The result is naturally stated with $Y\in\{0,1\}$ here since $Y\sim\mathrm{Bernoulli}(\pi)$ and $\eta_j(y)=a_jy+b_j$.
If one prefers $Y\in\{\pm1\}$, set $\tilde Y:=2Y-1$ and rewrite
$\eta_j(Y)=b_j+a_jY=(b_j+\tfrac12 a_j)+(\tfrac12 a_j)\tilde Y$.

For the observed binary variables, define spins $X_j:=2J_j-1\in\{\pm1\}$.
Then $J_j=(X_j+1)/2$, and any quadratic MRF
\[
\log \Pr(J\mid Y=y)=C+\sum_j a_j J_j+\sum_{j<k} b_{jk}J_jJ_k
\]
becomes an Ising model
\[
\log \Pr(X\mid Y=y)=\tilde C+\sum_j \tilde h_j X_j+\sum_{j<k}\tilde W_{jk}X_jX_k,
\]
with $\tilde W_{jk}=b_{jk}/4$ and $\tilde h_j = a_j/2 + \frac14\sum_{k\neq j} b_{jk}$.
Thus, in \eqref{eq:ising-expansion-corrected}, the $\{\pm1\}$ couplings are $\tilde W_{jk}=(\lambda_j^\top\lambda_k)/4$ up to $O(\varepsilon^3)$,
and the low-rank structure is preserved (scaling does not change rank).
\end{remark}

\begin{remark}\label{rem:small_coupling_latent_to_ising}
The reduction from a latent-factor model to an (approximately) pairwise Ising model is inherently a \emph{local} approximation in a
``coupling strength'' parameter.
When $\varepsilon$ is not small, the neglected terms in $L^{(y)}_\varepsilon(J)$ need not be well-approximated by a pairwise Ising energy.
The next terms in the Taylor/cumulant expansion generate \emph{higher-order interactions}:
in general,
\begin{align}
\label{eq:higher_order_ising}
\log \Pr_\varepsilon(J\mid Y=y)
=
C(\varepsilon)
+\sum_j h^{(y)}_j J_j
+\sum_{j<k} W^{(y)}_{jk}J_jJ_k
+\sum_{j<k<\ell} U^{(y)}_{jk\ell}J_jJ_kJ_\ell
+\cdots,
\end{align}
where the leading triple-interaction coefficients scale like
\[
U^{(y)}_{jk\ell} = O\!\big(\varepsilon^3\,\kappa_3(Z)\,u^{(y)}_j u^{(y)}_k u^{(y)}_\ell\big)
\]
(and more generally the order-$m$ coefficients scale with $\varepsilon^m$ times the $m$th cumulant of $Z$ and higher derivatives of $A$).
Therefore, if the latent factor distribution is \emph{non-Gaussian} (skewed, so $\kappa_3(Z)\neq 0$), a \emph{third-order} Ising/log-linear model
(i.e., adding $J_jJ_kJ_\ell$ terms) is the natural next approximation beyond pairwise.
In the logistic-normal case we considered (i.e., $Z$ Gaussian with mean zero), $\kappa_3(Z)=0$ so the leading correction after the pairwise term
typically begins at order $\varepsilon^4$ (corresponding to four-way interactions); nevertheless, for moderate-to-large $\varepsilon$ it can still be important
to move beyond pairwise models, either by fitting a higher-order log-linear model as in \eqref{eq:higher_order_ising} or by performing inference directly in
the latent-factor model without truncating the expansion.
\end{remark}

\subsubsection{Suboptimality of CI-Posterior under Latent-factors}
\label{app:separation-latent-factor}
We now show a separation result between (exchangeable) latent-factor models and conditionally independent models. Fix parameters $a,b,\lambda\in\mathbb R$ and $\sigma_Z^2>0$, and let $\sigma(t)=(1+e^{-t})^{-1}$ denote the logistic function. Let $Z\sim \mathcal N(0,\sigma_Z^2)$ be a scalar latent factor independent of $Y$.
Conditional on $(Y=y,Z=z)$, the $K$ judges produce i.i.d.\ votes
\begin{align}
  J_1,\dots,J_K \ \Big|\ (Y=y,Z=z)
  \ \overset{\text{iid}}{\sim}\ \mathrm{Bernoulli}\!\big(p(y,z)\big),
  \qquad
  p(y,z):=\sigma\!\big(b+a(2y-1)+\lambda(2y-1)z\big).
  \label{eq:latent-factor-model}
\end{align}
Note that $\lambda\neq 0$ and $\sigma_Z^2>0$ imply a nondegenerate factor that induces dependence among judges given $Y$. Write $S_K=\sum_{j=1}^K J_j$ and $s_K=S_K/K$.

We now discuss the true posterior and Bayes rule under these model. Let $\Pr^\star(\,\cdot\mid J)$ denote the posterior under the true model \eqref{eq:latent-factor-model}, and define the Bayes predictor
\[
g_K^\star(J):=\mathbf 1\big\{\Pr^\star(Y=1\mid J)\ge \tfrac12\big\}.
\]

\noindent\textbf{Conditional-independence (CI) Predictor.}
Define the class-conditional marginal success probabilities
\[
\Pr_{\mathrm{marg}}(y):=\mathbb E_Z\big[\Pr(y,Z)\big]\in(0,1),\qquad y\in\{0,1\}.
\]
The CI approximation replaces the true joint likelihood by the product of marginal Bernoulli likelihoods, i.e.,
\[
L^{\mathrm{ind}}_y(S)
:=\binom{K}{S}\,\Pr_{\mathrm{marg}}(y)^S\big(1-\Pr_{\mathrm{marg}}(y)\big)^{K-S}.
\]
Let $\Pr^{\mathrm{ind}}(\,\cdot\mid J)$ be the posterior induced by $L^{\mathrm{ind}}_y$ and prior $\pi$, and define
\[
g_K^{\mathrm{ind}}(J):=\mathbf 1\big\{\Pr^{\mathrm{ind}}(Y=1\mid J)\ge \tfrac12\big\}.
\]

Define the Bernoulli KL divergence for $s,q\in(0,1)$ by
\[
\kl(s\|q):=s\log\frac{s}{q}+(1-s)\log\frac{1-s}{1-q}.
\]

\begin{theorem}[Asymptotic Bayes-CI separation under an exchangeable logistic-normal factor]
\label{prop:bayes-ci-separation}
Assume the exchangeable latent-factor model \eqref{eq:latent-factor-model} with $\lambda\neq 0$ and $\sigma_Z^2>0$.
Let $S_K=\sum_{j=1}^K J_j$ and $s_K=S_K/K$.
Write $q_y:=\Pr_{\mathrm{marg}}(y)=\mathbb E_Z[\Pr(y,Z)]\in(0,1)$.
Define, for $s\in(0,1)$,
\[
\operatorname{logit}(s):=\log\frac{s}{1-s},\qquad
\ell^\star(s):=\log\frac{\pi}{1-\pi}+\frac{2a}{\lambda^2\sigma_Z^2}\big(\operatorname{logit}(s)-b\big),
\]
and
\[
\ell^{\mathrm{ind}}(s):=
s\log\frac{q_1}{q_0}+(1-s)\log\frac{1-q_1}{1-q_0}
\;=\;
-\kl(s\|q_1)+\kl(s\|q_0).
\]
Let $s_\infty:=p(Y,Z)\in(0,1)$.

Assume the (mild) no-tie conditions
\[
\Pr\big(\ell^\star(s_\infty)=0\big)=0,
\qquad
\Pr\big(\ell^{\mathrm{ind}}(s_\infty)=0\big)=0.
\]
Then, as $K\to\infty$,
\[
g_K^\star \xrightarrow{\mathrm{a.s.}} g_\infty^\star:=\mathbf 1\{\ell^\star(s_\infty)\ge 0\},
\qquad
g_K^{\mathrm{ind}} \xrightarrow{\mathrm{a.s.}} g_\infty^{\mathrm{ind}}:=\mathbf 1\{\ell^{\mathrm{ind}}(s_\infty)\ge 0\},
\]
and the excess risk has the limit
\[
\lim_{K\to\infty}\big(R(g_K^{\mathrm{ind}})-R(g_K^\star)\big)
=
\mathbb E\!\left[
|2\eta_\infty-1|\cdot \mathbf 1\{g_\infty^{\mathrm{ind}}\neq g_\infty^\star\}
\right],
\qquad
\eta_\infty:=\sigma\big(\ell^\star(s_\infty)\big).
\]
In particular, if $\Pr(g_\infty^{\mathrm{ind}}\neq g_\infty^\star)>0$, then the limit is strictly positive.
\end{theorem}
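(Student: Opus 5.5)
We treat the two predictors separately and then pass to the limit in the excess-risk formula by dominated convergence. The first step is to get an almost-sure limit for the data. Conditionally on $(Y,Z)$ the votes are i.i.d.\ $\mathrm{Bernoulli}(p(Y,Z))$, so the strong law gives $s_K\to s_\infty=p(Y,Z)$ a.s. Moreover $s_\infty\in(0,1)$ a.s., since $Z$ is Gaussian and $\sigma$ takes values in $(0,1)$.

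\emph{CI predictor.} Since $L^{\mathrm{ind}}_y$ is binomial in $S_K$, the CI log posterior odds are exactly
\[
\log\tfrac{\pi}{1-\pi}+K\,\ell^{\mathrm{ind}}(s_K).
\]
The binomial coefficients cancel. Dividing by $K$, the sign is eventually that of $\ell^{\mathrm{ind}}(s_K)$, whenever this quantity stays bounded away from zero. By continuity of $\ell^{\mathrm{ind}}$ on $(0,1)$ we have $\ell^{\mathrm{ind}}(s_K)\to\ell^{\mathrm{ind}}(s_\infty)$ a.s. On the event $\{\ell^{\mathrm{ind}}(s_\infty)\neq0\}$, which has probability one by the no-tie assumption, this gives $g_K^{\mathrm{ind}}\to g_\infty^{\mathrm{ind}}$ a.s.

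\emph{Bayes predictor.} We reparametrize the mixing variable by the natural parameter $u=\operatorname{logit}p(y,z)$.
\begin{itemize}
\item Under $y=1$ we have $u=b+a+\lambda z$, so $u\sim\mathcal N(b+a,v)$ with $v:=\lambda^2\sigma_Z^2$.
\item Under $y=0$ we have $u=b-a-\lambda z$, so $u\sim\mathcal N(b-a,v)$.
\end{itemize}
Hence
\[
\Pr(J\mid Y=y)=\int \sigma(u)^{S_K}\big(1-\sigma(u)\big)^{K-S_K}\,\phi_v\big(u-b\mp a\big)\,du .
\]
Both classes share the same binomial kernel and differ only in the Gaussian weight. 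The density ratio is
\[
\frac{\phi_v(u-b-a)}{\phi_v(u-b+a)}=\exp\!\Big(\tfrac{2a}{v}(u-b)\Big).
\]
Therefore the true likelihood ratio equals $\mathbb E_{\nu_K}\big[\exp(\tfrac{2a}{v}(U-b))\big]$, where $\nu_K$ is the probability measure on $u$ proportional to $\sigma(u)^{S_K}(1-\sigma(u))^{K-S_K}\phi_v(u-b+a)$. This measure is the class-$0$ "posterior" of $u$. Its log-density is $-K\,\kl(s_K\|\sigma(u))+O(u^2)$, which is uniquely maximized near $\operatorname{logit}(s_K)$ with curvature of order $K$.

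The key step, and the main obstacle, is a Laplace/Bernstein--von Mises argument showing that $\nu_K$ concentrates at $\operatorname{logit}(s_\infty)$ a.s. It must also control the exponential moment of the unbounded test function $e^{cU}$. I would first split the integral into the region $|u-\operatorname{logit}(s_K)|\le\delta$ and its complement. Since $s_K$ eventually lies in a compact subinterval of $(0,1)$, strict convexity of $u\mapsto\kl(s\|\sigma(u))$ holds uniformly there. Off the window, the kernel is at least $e^{-cK\delta^2}$ smaller than its peak, while the Gaussian tails make $e^{cu}\phi_v$ integrable. These two facts kill the tail contribution relative to the peak, which itself has mass of order $K^{-1/2}$. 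Letting $K\to\infty$ and then $\delta\to0$ gives
\[
\log\frac{\Pr^\star(Y=1\mid J)}{\Pr^\star(Y=0\mid J)}\to \log\tfrac{\pi}{1-\pi}+\tfrac{2a}{v}\big(\operatorname{logit}(s_\infty)-b\big)=\ell^\star(s_\infty)\quad\text{a.s.}
\]
Setting $\eta_K:=\Pr^\star(Y=1\mid J)$, this means $\eta_K\to\eta_\infty=\sigma(\ell^\star(s_\infty))$ a.s. By the no-tie condition $\ell^\star(s_\infty)\neq0$ a.s., so $g_K^\star\to g_\infty^\star$ a.s.

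\emph{Risk limit.} For any rule $g$ we have $R(g)=\mathbb E\big[\eta_K\mathbf 1\{g=0\}+(1-\eta_K)\mathbf 1\{g=1\}\big]$. Since $g_K^\star$ picks the smaller of the two terms pointwise,
\[
R(g_K^{\mathrm{ind}})-R(g_K^\star)=\mathbb E\big[|2\eta_K-1|\,\mathbf 1\{g_K^{\mathrm{ind}}\neq g_K^\star\}\big].
\]
The integrand is bounded by $1$ and converges a.s. to $|2\eta_\infty-1|\,\mathbf 1\{g_\infty^{\mathrm{ind}}\neq g_\infty^\star\}$, because both indicators are eventually constant a.s. Dominated convergence then yields the stated limit.

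Finally, $|2\eta_\infty-1|>0$ a.s. by the no-tie assumption on $\ell^\star$. Hence the limit is strictly positive whenever $\Pr(g_\infty^{\mathrm{ind}}\neq g_\infty^\star)>0$.
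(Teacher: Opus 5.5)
Your proof is correct. The CI step and the excess-risk identity with dominated convergence match the paper; the real difference is how you obtain the limit of the Bayes log-odds. The paper computes each class likelihood separately on the $1/K$ scale. It combines Stirling's formula for $\binom{K}{S}$ with a Laplace approximation in $z$ to get $K\,L_y^\star(S_K)\to f_y(s_\infty)$, where $f_y$ is the density of $p(y,Z)$, and then evaluates $f_1/f_0$ through the roots $z_y(s)$, with the Jacobian factors cancelling. You instead pass to the natural parameter $u$, so the two classes share one binomial kernel and the likelihood ratio becomes the posterior expectation $\mathbb E_{\nu_K}[e^{2a(U-b)/v}]$ of the ratio of mixing densities. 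The binomial coefficient and the $K^{-1/2}$ width factor then cancel automatically. You only need first-order concentration of $\nu_K$ around $\operatorname{logit}(s_K)$. The paper, by contrast, needs sharp Laplace constants holding uniformly in $s$ in order to plug in the random $s_K$, and it leaves implicit the tail control of its Laplace integral, where $\psi_y$ grows only linearly in $z$. Your tail step is even simpler than you indicate: $e^{2a(u-b)/v}\phi_v(u-b+a)=\phi_v(u-b-a)$ is itself a probability density, so the off-window part of the numerator is at most $e^{-cK\delta^2}$ with no moment estimate needed. What the paper's route buys in return is the individual asymptotics of $L_y^\star$. It also yields the interpretation of $\ell^\star$ as the log-ratio of the densities of $s_\infty=p(Y,Z)$ under the two classes, i.e., the limiting Bayes rule is the Bayes rule for observing $s_\infty$ directly. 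Your justification that $|2\eta_\infty-1|>0$ a.s.\ via the no-tie condition on $\ell^\star$ is the right one. The paper's parenthetical appeal to $g_\infty^\star$ thresholding at $1/2$ does not by itself rule out $\eta_\infty=1/2$.
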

\begin{proof}
We start with a reduction to the sufficient statistic $S_K$. Fix $y\in\{0,1\}$ and let $j=(j_1,\dots,j_K)\in\{0,1\}^K$ with $\sum_{k=1}^K j_k=S$.
Under \eqref{eq:latent-factor-model}, we have that 
\[
\Pr(J=j\mid Y=y)
=
\int_{\mathbb R}\prod_{k=1}^K p(y,z)^{j_k}(1-p(y,z))^{1-j_k}\,\phi_{\sigma_Z}(z)\,dz
=
\int_{\mathbb R}p(y,z)^S(1-p(y,z))^{K-S}\,\phi_{\sigma_Z}(z)\,dz,
\]
which depends on $j$ only through $S$. Hence $S_K$ is sufficient for $Y$ under the true model,
and likewise $S_K$ is sufficient under the CI model by construction.
Therefore both posteriors and both decision rules can be written as functions of $S_K$ (equivalently $s_K$).

We next derive almost sure limits of the vote fraction. First condition on $(Y=y,Z=z)$. Then $J_1,\dots,J_K$ are i.i.d.\ $\mathrm{Bernoulli}(p(y,z))$, so by the strong law,
$s_K\to p(y,z)$ almost surely. Unconditioning yields $s_K\to p(Y,Z)=:s_\infty$ almost surely.

We now derive sharp asymptotics of the true marginal likelihood $L_y^\star(S)$. Recall
\[
L_y^\star(S)=\binom{K}{S}\int_{\mathbb R} p(y,z)^S(1-p(y,z))^{K-S}\,\phi_{\sigma_Z}(z)\,dz.
\]
Fix $s\in(0,1)$ and take $S=\lfloor Ks\rfloor$ (so $S/K\to s$).
By Stirling's formula, uniformly for $s$ in any compact $[\varepsilon,1-\varepsilon]\subset(0,1)$,
\[
\binom{K}{S}
=
\frac{1+o(1)}{\sqrt{2\pi K s(1-s)}}\exp\big(KH(s)\big),
\qquad
H(s):=-s\log s-(1-s)\log(1-s).
\]
Furthermore, we have that
\[
p(y,z)^S(1-p(y,z))^{K-S}
=
\exp\Big(K\big[s\log p(y,z)+(1-s)\log(1-p(y,z))\big]+o(K)\Big).
\]
Now, using the identity
\[
s\log q+(1-s)\log(1-q)=-\kl(s\|q)-H(s)
\quad(q\in(0,1))
\]
gives
\[
L_y^\star(S)
=
\frac{1+o(1)}{\sqrt{2\pi K s(1-s)}}
\int_{\mathbb R}\exp\!\big(-K\,\psi_y(z;s)\big)\,\phi_{\sigma_Z}(z)\,dz,
\qquad
\psi_y(z;s):=\kl\!\big(s\|p(y,z)\big),
\]
uniformly for $s$ in compact subsets of $(0,1)$.

Because $\lambda\neq 0$, the map $z\mapsto p(y,z)$ is strictly monotone and continuous with range $(0,1)$.
Hence for each $s\in(0,1)$ there is a unique $z_y(s)\in\mathbb R$ such that $p(y,z_y(s))=s$.
Since $\kl(s\|q)\ge 0$ with equality iff $q=s$, we have $\psi_y(z;s)\ge 0$ with a unique minimizer at $z=z_y(s)$
and $\psi_y(z_y(s);s)=0$.

Moreover, $\psi_y(\cdot;s)$ has a nondegenerate quadratic minimum at $z_y(s)$.
Indeed, write $q(z):=p(y,z)$ and note that for fixed $s$,
\[
\frac{d^2}{dq^2}\kl(s\|q)\Big|_{q=s}=\frac{1}{s(1-s)}.
\]
By the chain rule and $q'(z)=\lambda(2y-1)\,q(z)(1-q(z))$, we get at $z=z_y(s)$ (where $q(z)=s$)
\[
\frac{\partial^2}{\partial z^2}\psi_y(z;s)\Big|_{z=z_y(s)}
=
\frac{1}{s(1-s)}\big(q'(z_y(s))\big)^2
=
\frac{1}{s(1-s)}\big(\lambda(2y-1)s(1-s)\big)^2
=
\lambda^2 s(1-s)>0.
\]
Therefore Laplace's approximation method (for an interior, nondegenerate minimum) yields, uniformly for $s$ in compact subsets of $(0,1)$,
\[
\int_{\mathbb R}\exp\!\big(-K\,\psi_y(z;s)\big)\,\phi_{\sigma_Z}(z)\,dz
=
\phi_{\sigma_Z}\!\big(z_y(s)\big)\sqrt{\frac{2\pi}{K\lambda^2 s(1-s)}}\,(1+o(1)).
\]
Combining the last two displays gives the desired sharp $1/K$-scale asymptotic:
\begin{align}
\label{eq:Ly-sharp}
L_y^\star(\lfloor Ks\rfloor)
=
\frac{1+o(1)}{K}\,f_y(s),
\qquad
f_y(s):=\frac{\phi_{\sigma_Z}(z_y(s))}{|\lambda|\,s(1-s)}.
\end{align}
In the above $f_y$ is indeed exactly the density of the transformed random variable $p(y,Z)$ by change of variables.

We now proceed with the limit of the true posterior and Bayes decision. By the aforementioned sufficiency argument, we have that
\[
\eta_K:=\Pr^\star(Y=1\mid J)=\Pr^\star(Y=1\mid S_K)
=
\frac{\pi L_1^\star(S_K)}{\pi L_1^\star(S_K)+(1-\pi)L_0^\star(S_K)}.
\]
Let $s_K=S_K/K\to s_\infty$ almost surely by Step~1.
Applying \eqref{eq:Ly-sharp} to the (random) sequence $s_K$ (using local uniformity on a neighborhood of the a.s.\ limit point $s_\infty\in(0,1)$),
we obtain almost surely
\[
\frac{L_1^\star(S_K)}{L_0^\star(S_K)}\to \frac{f_1(s_\infty)}{f_0(s_\infty)}.
\]
Now compute $f_1/f_0$ explicitly.
Writing $\theta:=\operatorname{logit}(s)$, the unique solutions to $p(y,z)=s$ are
\[
z_1(s)=\frac{\theta-(b+a)}{\lambda},
\qquad
z_0(s)=\frac{(b-a)-\theta}{\lambda}.
\]
The Jacobian factors in \eqref{eq:Ly-sharp} cancel because $|\lambda|$ is the same for $y=0,1$, so
\[
\frac{f_1(s)}{f_0(s)}
=
\frac{\phi_{\sigma_Z}(z_1(s))}{\phi_{\sigma_Z}(z_0(s))}
=
\exp\!\left(\frac{z_0(s)^2-z_1(s)^2}{2\sigma_Z^2}\right).
\]
A direct algebraic simplification gives
\[
z_0(s)^2-z_1(s)^2
=
\frac{(b-a-\theta)^2-(\theta-b-a)^2}{\lambda^2}
=
\frac{4a(\theta-b)}{\lambda^2}.
\]
Hence
\[
\log\frac{\pi f_1(s)}{(1-\pi)f_0(s)}
=
\log\frac{\pi}{1-\pi}+\frac{2a}{\lambda^2\sigma_Z^2}\big(\operatorname{logit}(s)-b\big)
=\ell^\star(s),
\]
and therefore almost surely
\[
\eta_K \to \eta_\infty:=\sigma\big(\ell^\star(s_\infty)\big).
\]
Since $g_K^\star=\mathbf 1\{\eta_K\ge 1/2\}$ and $\Pr(\ell^\star(s_\infty)=0)=0$ by assumption,
the sign stabilizes and
\[
g_K^\star \to \mathbf 1\{\ell^\star(s_\infty)\ge 0\}=g_\infty^\star
\qquad\text{a.s.}
\]

Next, we derive the limit of the CI decision rule. Under CI, we have that
\[
L_y^{\mathrm{ind}}(S)=\binom{K}{S}q_y^S(1-q_y)^{K-S}.
\]
Thus the CI log-posterior odds equal
\[
\log\frac{\Pr^{\mathrm{ind}}(Y=1\mid S_K)}{\Pr^{\mathrm{ind}}(Y=0\mid S_K)}
=
\log\frac{\pi}{1-\pi}
+S_K\log\frac{q_1}{q_0}+(K-S_K)\log\frac{1-q_1}{1-q_0}
=
\log\frac{\pi}{1-\pi}+K\,\ell^{\mathrm{ind}}(s_K).
\]
Since $s_K\to s_\infty$ almost surely and $\Pr(\ell^{\mathrm{ind}}(s_\infty)=0)=0$,
the term $K\,\ell^{\mathrm{ind}}(s_K)$ diverges to $\pm\infty$ with the sign of $\ell^{\mathrm{ind}}(s_\infty)$, hence
\[
g_K^{\mathrm{ind}}=\mathbf 1\{\Pr^{\mathrm{ind}}(Y=1\mid S_K)\ge 1/2\}
\to \mathbf 1\{\ell^{\mathrm{ind}}(s_\infty)\ge 0\}=g_\infty^{\mathrm{ind}}
\qquad\text{a.s.}
\]

We are now ready to calculate the excess risks. For any (measurable) classifier $g(J)\in\{0,1\}$,
\[
\Pr(g(J)\neq Y\mid J)
=
\eta_K\,\mathbf 1\{g(J)=0\}+(1-\eta_K)\,\mathbf 1\{g(J)=1\}.
\]
The Bayes rule $g_K^\star=\mathbf 1\{\eta_K\ge 1/2\}$ minimizes this conditional risk, and a direct case check gives
\[
\Pr(g(J)\neq Y\mid J)-\Pr(g_K^\star(J)\neq Y\mid J)
=
|2\eta_K-1|\cdot \mathbf 1\{g(J)\neq g_K^\star(J)\}.
\]
Taking expectations and substituting $g=g_K^{\mathrm{ind}}$ yields the exact finite-$K$ identity
\[
R(g_K^{\mathrm{ind}})-R(g_K^\star)
=
\mathbb E\!\left[|2\eta_K-1|\cdot \mathbf 1\{g_K^{\mathrm{ind}}\neq g_K^\star\}\right].
\]

Note that we also have $\eta_K\to\eta_\infty$ and $g_K^{\mathrm{ind}}\to g_\infty^{\mathrm{ind}}$ and $g_K^\star\to g_\infty^\star$ almost surely.
Since the integrand is bounded by $1$, dominated convergence gives
\[
\lim_{K\to\infty}\big(R(g_K^{\mathrm{ind}})-R(g_K^\star)\big)
=
\mathbb E\!\left[|2\eta_\infty-1|\cdot \mathbf 1\{g_\infty^{\mathrm{ind}}\neq g_\infty^\star\}\right].
\]
Finally, on the event $\{g_\infty^{\mathrm{ind}}\neq g_\infty^\star\}$ we must have $\eta_\infty\neq 1/2$ (since $g_\infty^\star$ is the threshold at $1/2$),
so $|2\eta_\infty-1|>0$ there; thus if $\Pr(g_\infty^{\mathrm{ind}}\neq g_\infty^\star)>0$ the expectation is strictly positive.
\end{proof}

Theorem~\ref{prop:bayes-ci-separation} formalizes a simple but important phenomenon: if the judges share a \emph{common latent factor} that affects their votes, then their votes are \emph{dependent} given the label, and a CI-predictor can remain strictly suboptimal even as the number of judges $K$ grows.

In the logistic-normal factor model, conditional on the label $Y$ and a latent scalar $Z$, the judges vote independently:
\[
J_1,\dots,J_K \ \big|\ (Y,Z)\ \text{i.i.d. Bernoulli}\big(\Pr(Y,Z)\big).
\]
The key point is that $Z$ is \emph{shared across all judges} for a given item, so after marginalizing out $Z$ the votes are no longer independent given $Y$.
Equivalently, $Z$ induces a label-dependent correlation/failure mode (e.g.\ shared bias, shared noise, or common difficulty of the item).

Now, we discuss the case what happens when $K\to\infty$. As the number of judges grows, the empirical vote fraction
\[
s_K=\frac{1}{K}\sum_{j=1}^K J_j
\]
concentrates almost surely around the random limit
\[
s_\infty = \Pr(Y,Z).
\]
Thus, with many judges, the data effectively reveals the latent factor through the realized value of $s_\infty$ (because different $Z$ values shift the success probability).

The Bayes posterior $\Pr(Y=1\mid J)$ integrates over $Z$ using the correct mixture likelihood.
In this specific logistic-normal setup, the large-$K$ limit of the Bayes posterior depends on $s_\infty$ through an explicit one-dimensional score
\[
\ell^\star(s_\infty)
=
\log\frac{\pi}{1-\pi}
+\frac{2a}{\lambda^2\sigma_Z^2}\big(\operatorname{logit}(s_\infty)-b\big),
\]
so the Bayes predictor converges to the limiting rule
\[
g_\infty^\star=\mathbf 1\{\ell^\star(s_\infty)\ge 0\}.
\]
Intuitively, Bayes predictor recognizes that extreme values of the vote fraction $s_\infty$ may be better explained by certain $Z$ realizations under one class than the other.

Under CI the true joint likelihood is replaced by a product of class-conditional marginals. This yields a different limiting score,
\[
\ell^{\mathrm{ind}}(s_\infty)
=
s_\infty\log\frac{q_1}{q_0}+(1-s_\infty)\log\frac{1-q_1}{1-q_0},
\qquad q_y=\mathbb E_Z[p(y,Z)],
\]
and hence a potentially different limiting decision
\[
g_\infty^{\mathrm{ind}}=\mathbf 1\{\ell^{\mathrm{ind}}(s_\infty)\ge 0\}.
\]
Because CI discards the correlation structure induced by $Z$, it generally assigns different relative likelihoods to the same observed vote fraction $s_\infty$ than the true Bayes model does.

In summary, the proposition shows that both rules converge (almost surely) to deterministic limit classifiers that depend only on $s_\infty=p(Y,Z)$.
Moreover, it gives an exact expression for the asymptotic excess risk:
\[
\lim_{K\to\infty}\big(R(g_K^{\mathrm{ind}})-R(g_K^\star)\big)
=
\mathbb E\!\left[|2\eta_\infty-1|\cdot \mathbf 1\{g_\infty^{\mathrm{ind}}\neq g_\infty^\star\}\right],
\]
where $\eta_\infty$ is the limiting Bayes posterior.
This quantity is strictly positive whenever the limiting CI and Bayes decisions disagree on a set of latent-factor realizations of positive probability.
In other words, adding more judges does not necessarily save CI; as $K$ grows, the ensemble increasingly reveals the shared latent factor, and Bayes exploits it, but CI cannot, so the performance gap can converge to a nonzero constant.

\section{Posterior Inference with Unknown Labels via (Generalized)~EM}
\label{app:em_general}

Recall that, all models in this paper share the same high-level structure: each item $i\in\{1,\dots,n\}$ has an unobserved label $Y_i\in\{0,1\}$ and an observed vote vector
$J_i=(J_{i1},\dots,J_{iK})\in\{0,1\}^K$.  The joint model factorizes across items as
\begin{align}
\label{eq:joint_factorization}
\Pr_\Theta(\{Y_i,J_i\}_{i=1}^n)
=
\prod_{i=1}^n \Pr_\Theta(Y_i)\,\Pr_\Theta(J_i\mid Y_i),
\qquad
\Pr_\Theta(Y_i=1)=\pi,
\end{align}
where $\Theta$ denotes model parameters (e.g., CI accuracies, Ising fields/couplings, latent-factor loadings, etc.).
The inferential goal is to compute the posterior label probabilities $
\gamma_i \;:=\; \Pr_\Theta(Y_i=1\mid J_i),\qquad i=1,\dots,n,$ and to produce point predictions $\hat Y_i=\mathbf 1\{\gamma_i\ge 1/2\}$.


We use the Expectation-Maximization framework~\cite{dawid1979maximum} for the above purpose. The observed-data log-likelihood is
\begin{align}
\label{eq:obs_loglik}
\ell(\Theta)
:=
\sum_{i=1}^n \log\Big( \pi\,\Pr_\Theta(J_i\mid Y_i=1) + (1-\pi)\,\Pr_\Theta(J_i\mid Y_i=0)\Big).
\end{align}
EM maximizes $\ell(\Theta)$ by iteratively optimizing a lower bound based on the complete-data log-likelihood.
Given current parameters $\Theta^{(t)}$, define
\[
\gamma_i^{(t)}:=\Pr_{\Theta^{(t)}}(Y_i=1\mid J_i),\qquad 1-\gamma_i^{(t)}=\Pr_{\Theta^{(t)}}(Y_i=0\mid J_i).
\]
The expected complete-data objective (the $Q$-function) is then given by
\begin{align}
\label{eq:Q_function}
\begin{aligned}
Q(\Theta\mid \Theta^{(t)})
&:=
\sum_{i=1}^n \mathbb E_{Y_i\mid J_i;\Theta^{(t)}}\big[\log \Pr_\Theta(Y_i,J_i)\big]\notag\\
&=
\sum_{i=1}^n\Big[
\gamma_i^{(t)}\log \pi + (1-\gamma_i^{(t)})\log(1-\pi)
\Big]\\
&+
\sum_{i=1}^n\Big[
\gamma_i^{(t)}\log \Pr_\Theta(J_i\mid Y_i=1)
+(1-\gamma_i^{(t)})\log \Pr_\Theta(J_i\mid Y_i=0)
\Big].
\end{aligned}
\end{align}
When we include regularization or Bayesian priors, we maximize $Q(\Theta\mid\Theta^{(t)})+\log \Pr(\Theta)$ instead (MAP-EM). The overall procedure is provided in Algorithm~\ref{alg:general_em}. We emphasize that the procedure is purely unsupervised.


\begin{algorithm}[t]
\caption{Generalized EM for unsupervised binary label aggregation (CI, latent factors, Ising)}
\label{alg:general_em}
\begin{algorithmic}[1]
\State \textbf{Input:} votes $J_i\in\{0,1\}^K$ for $i=1,\dots,n$; model family $P_\Theta(J\mid Y)$; initialization $\Theta^{(0)}$; tolerances.
\For{$t=0,1,2,\dots$ until convergence}
  \State \textbf{E-step (posterior labels):} For each item $i$, compute class scores
  \[
    \widehat \ell_{iy}^{(t)} \approx \log P_{\Theta^{(t)}}(J_i\mid Y_i=y),
    \qquad y\in\{0,1\},
  \]
  using an exact evaluator (when tractable) or an approximation (e.g.\ mean-field, loopy BP, Monte Carlo, or a variational bound).
  Set
  \begin{align}
  \label{eq:responsibility_update}
    \gamma_i^{(t)}
    \leftarrow
    P_{\Theta^{(t)}}(Y_i=1\mid J_i)
    =
    \sigma\!\Big(
      \operatorname{logit}(\pi^{(t)})
      + \widehat \ell_{i1}^{(t)}-\widehat \ell_{i0}^{(t)}
    \Big),
  \end{align}
  where $\sigma(u)=(1+e^{-u})^{-1}$.

  \State \textbf{M-step (parameter update):}
  Update the class prior (MLE form)
  \[
    \pi^{(t+1)} \leftarrow \frac{1}{n}\sum_{i=1}^n \gamma_i^{(t)}
    \qquad
    \text{(or MAP update if a Beta prior is used).}
  \]
  Update remaining parameters by (approximately) maximizing the weighted objective
  \begin{align}
  \label{eq:M_step_objective}
    \Theta^{(t+1)}
    \in
    \arg\max_{\Theta}\;
    \sum_{i=1}^n\Big[
      \gamma_i^{(t)} \log P_\Theta(J_i\mid Y_i=1)
      + (1-\gamma_i^{(t)}) \log P_\Theta(J_i\mid Y_i=0)
    \Big]
    \;+\;\log p(\Theta),
  \end{align}
  using a model-appropriate solver (closed-form updates, gradient methods, pseudo-likelihood, etc.).
\EndFor
\State \textbf{Output:} parameters $\widehat\Theta$ and posteriors $\widehat\gamma_i$; predicted labels $\widehat Y_i=\mathbf 1\{\widehat\gamma_i\ge 1/2\}$.
\end{algorithmic}
\end{algorithm}

\subsection{Specializations of Algorithm~\ref{alg:general_em}}
In the conditionally independent (CI) family, including Dawid-Skene and its asymmetric sensitivity/specificity variants, the class-conditional likelihood factorizes across judges:
\(
\Pr_\Theta(J_i\mid Y_i=y)=\prod_{j=1}^K \Pr_\Theta(J_{ij}\mid Y_i=y).
\)
As a result, the E-step is exact and inexpensive: the score difference
$\widehat\ell_{i1}-\widehat\ell_{i0}$ is just a sum of per-judge log-likelihood ratios, yielding closed-form responsibilities
$\gamma_i$ via a logistic transform.
The M-step is also simple: maximizing \eqref{eq:M_step_objective} reduces to fitting per-judge Bernoulli parameters from
soft counts, i.e., weighted averages of votes under $\gamma_i$ (and $1-\gamma_i$), together with the closed-form update for the class prior.
This recovers the classical David-Skene EM algorithm ~\cite{dawid1979maximum} as a special case of Algorithm~\ref{alg:general_em}.

For latent-factor models, the conditional likelihood typically has the form
\(
\Pr_\Theta(J_i\mid Y_i=y)=\int \Pr_\Theta(J_i\mid Y_i=y, Z_i=z)\,\Pr_\Theta(z)\,dz,
\)
where $Z_i$ is a per-item latent variable (e.g., item difficulty, topic, or shared failure mode).
The E-step therefore requires marginalizing over $Z_i$, which is generally not available in closed form in flexible models.
In practice one can compute approximate class scores $\widehat\ell_{iy}$ using a tractable approximation to the integral,
such as a variational bound, Laplace approximation, or Monte Carlo estimate, and then form $\gamma_i$ as in \eqref{eq:responsibility_update}.
The M-step becomes a (regularized) maximum-likelihood or MAP problem for the latent-factor parameters (loadings, biases, prior variance, etc.)
with soft labels; we solve it with gradient-based optimization, optionally interleaving updates for per-item latent posterior parameters
(as in variational EM) when a fully Bayesian treatment of $Z_i$ is used.

For Ising models, the main difficulty is that the exact likelihood
\(
\Pr_\Theta(J_i\mid Y_i=y)
\)
involves the partition function $Z^{(y)}(\Theta)$, making both the E-step scores and the M-step objective intractable at scale.
We therefore use the \emph{generalized} EM strategy based on tractable surrogates.
A standard choice (which we use in our experiments) is the (regularized) pseudo-likelihood~\cite{hofling2009estimation}, which replaces the log-likelihood by a sum of conditional node log-likelihoods
$\sum_{j}\log \Pr_\Theta(J_{ij}\mid J_{i,-j},Y_i=y)$ and avoids $Z^{(y)}$ entirely.
In the M-step, maximizing \eqref{eq:M_step_objective} under pseudo-likelihood reduces to a set of weighted logistic regressions (one per node)
with weights given by the current responsibilities $\gamma_i$ for class $y=1$ and $(1-\gamma_i)$ for class $y=0$.
In the E-step, we can compute $\widehat\ell_{iy}$ either using the same pseudo-likelihood score (yielding a fully consistent surrogate EM). Other possibilities include using a approximate inference methods including mean-field methods~\cite{bhattacharya2018inference}, belief propagation~\cite{liu2012variational}, or Markov chain Monte Carlo~\cite{miasojedow2018sparse} to approximate the true class-conditional evidence. 

Finally, we remark that for our experiments in Sections~\ref{sec:rd} and~\ref{app:isingsimulations}, when using specific instantiations of Algorithm~\ref{alg:general_em}, we manually tune the hyperparameters (if any) for best performance. More principled ways to set them include general purpose procedures like cross-validation. 

\subsection{Illustration of Theorem~\ref{prop:cw_informative_marginals} via Algorithm~\ref{alg:general_em}}
For the sake of the reader's convenience, we connect the magnetization-threshold rule in Theorem~\ref{prop:cw_informative_marginals} to posterior-threshold prediction as in Algorithm~\ref{alg:general_em}. Define for each item $i$ the (spin) magnetization
\[
M_i \;:=\; M_K(J_i)\;=\;\frac{1}{K}\sum_{j=1}^K X_{ij},
\qquad X_{ij}:=2J_{ij}-1\in\{\pm1\}.
\]
Because the Curie-Weiss model is exchangeable, the Bayes posterior depends on the votes $J_i$ only through $M_i$ (equivalently through the vote fraction).
In particular, if we form a (possibly approximate/plug-in) posterior based on the statistic $|M_i|$,
\[
\widehat\gamma_i
\;:=\;
\Pr_{\widehat\Theta}(Y_i=1\mid |M_i|)
=
\frac{\pi\,\widehat p_1(|M_i|)}{\pi\,\widehat p_1(|M_i|)+(1-\pi)\,\widehat p_0(|M_i|)},
\]
where $\widehat p_y(\cdot)$ denotes the (estimated) class-conditional pmf/density of $|M_K|$ under $Y=y$, then the usual posterior-threshold prediction is
\[
\widehat Y_i \;=\; \mathbf 1\{\widehat\gamma_i\ge 1/2\}
\;\;\Longleftrightarrow\;\;
\log\frac{\pi}{1-\pi}+\log\frac{\widehat p_1(|M_i|)}{\widehat p_0(|M_i|)} \;\ge\; 0.
\]
In the Curie-Weiss regimes used for the separation result in Theorem~\ref{prop:cw_informative_marginals}, the log-likelihood ratio
$m\mapsto \log\frac{\widehat p_1(m)}{\widehat p_0(m)}$ is (asymptotically) increasing in $m\in[0,1]$,
so there exists a threshold $t\in(0,1)$ such that
\[
\mathbf 1\{\widehat\gamma_i\ge 1/2\}
\;=\;
\mathbf 1\{|M_i|\ge t\}.
\]
Thus the magnetization classifier $\tilde g_K(J)=\mathbf 1\{|M_K(J)|\ge t\}$ is exactly a posterior-threshold rule of the form
$\widehat Y_i=\mathbf 1\{\widehat\gamma_i\ge 1/2\}$ when $\widehat\gamma_i$ is computed from (or approximated by) evidence in $|M_K|$.

\section{Proofs from Section~\ref{sec:sep}} \label{app:proof}

 \subsection{Proof  of Theorem~\ref{thm:ising_separation}}
\separation*
\begin{proof}
We start with the proof of assertion~(1).

Fix $y\in\{0,1\}$ and $K$.
Under \eqref{eq:CW_model_X}, the density depends on $x$ only through $\big(\sum_j x_j\big)^2$, and hence is invariant under the global spin-flip $x\mapsto -x$:
\[
\Pr(X=x\mid Y=y)=\Pr(X=-x\mid Y=y)\qquad\forall x\in\{-1,+1\}^K.
\]
Therefore, for each coordinate $r$,
\[
\mathbb E[X_r\mid Y=y]
=
\sum_{x} x_r\,\Pr(X=x\mid Y=y)
=
\sum_{x} (-x_r)\,\Pr(X=-x\mid Y=y)
=
-\mathbb E[X_r\mid Y=y],
\]
so $\mathbb E[X_r\mid Y=y]=0$ and hence $\Pr(X_r=+1\mid Y=y)=\Pr(X_r=-1\mid Y=y)=\tfrac12$.
Since $J_r=(X_r+1)/2$, we get $q_y=\Pr(J_r=1\mid Y=y)=\tfrac12$ for both $y=0,1$.

With $q_0=q_1=\tfrac12$, the CI likelihoods coincide:
\[
\Pr^{\mathrm{ind}}(J\mid Y=1)=\Pr^{\mathrm{ind}}(J\mid Y=0) = 2^{-K}\qquad\forall J,
\]
so Bayes' rule under the CI model gives $\Pr^{\mathrm{ind}}(Y=1\mid J)=\pi$ for all $J$ and
$g_K^{\mathrm{ind}}(J)\equiv \mathbf 1\{\pi\ge\tfrac12\}$.
Its (true) misclassification risk is then
\[
R(g_K^{\mathrm{ind}})
=
\Pr(g_K^{\mathrm{ind}}\neq Y)
=
\begin{cases}
\Pr(Y=0)=1-\pi,& \pi\ge\tfrac12,\\
\Pr(Y=1)=\pi,& \pi<\tfrac12,
\end{cases}
=
\min\{\pi,1-\pi\}.
\]
This proves assertion~(1).

Before proving assertion~(2), we require some intermediate results.

We start by proving a magnetization representation and a uniform combinatorial bound for the Ising model case. Fix $\beta>0$ and consider the Curie-Weiss law
\[
\Pr_\beta (X=x)=\frac{1}{Z_K(\beta)}\exp\!\Big(\frac{\beta}{2K}\Big(\sum_{j=1}^K x_j\Big)^2\Big).
\]
For $m\in\{-1,-1+2/K,\dots,1\}$, let $N_K(m)$ be the number of configurations with magnetization $M_K=m$.
Writing $r=\#\{j:x_j=+1\}=\frac{K(1+m)}{2}$, we have $N_K(m)=\binom{K}{r}$ and
\begin{align}
\label{eq:magnetization_pmf}
\Pr_\beta(M_K=m)
=
\frac{1}{Z_K(\beta)}\binom{K}{\frac{K(1+m)}{2}}\exp\!\Big(\frac{\beta K}{2}m^2\Big),
\end{align}
where $Z_K(\beta)$ is the corresponding normalizer (sum of the numerator over all admissible $m$).

Define the binary entropy (natural logs)
\[
H(p):=-p\log p-(1-p)\log(1-p),\qquad p\in[0,1],
\]
and the mean-field objective
\[
\Phi_\beta(m):=H\Big(\frac{1+m}{2}\Big)+\frac{\beta}{2}m^2,\qquad m\in[-1,1].
\]
A standard consequence of Stirling's bounds is the uniform approximation
\begin{align}
\label{eq:binom_uniform}
\log \binom{K}{\frac{K(1+m)}{2}}
=
K\,H\Big(\frac{1+m}{2}\Big)+O(\log K),
\qquad\text{uniformly over }m\in\Big\{-1,-1+\frac{2}{K},\dots,1\Big\}.
\end{align}
Combining \eqref{eq:magnetization_pmf}--\eqref{eq:binom_uniform} yields
\begin{align}
\label{eq:pmf_asymptotic}
\Pr_\beta(M_K=m)
=
\frac{\exp\!\big(K\Phi_\beta(m)+O(\log K)\big)}
{\sum_{m'}\exp\!\big(K\Phi_\beta(m')+O(\log K)\big)},
\end{align}
where the sum runs over the $(K+1)$ admissible magnetization values and the $O(\log K)$ term is uniform in $m,m'$.

We next identify the location of the maximizers of $\Phi_\beta$.

\begin{lemma}
\label{lem:Phi_maximizers}
Let $\beta>0$ and $\Phi_\beta(m)=H(\frac{1+m}{2})+\frac{\beta}{2}m^2$ on $[-1,1]$.
\begin{enumerate}
\item If $\beta<1$, then $\Phi_\beta$ is strictly concave on $(-1,1)$ and has a unique maximizer at $m=0$.
\item If $\beta>1$, then there exists a unique $m_\star(\beta)\in(0,1)$ solving $m=\tanh(\beta m)$.
Moreover, $\Phi_\beta$ has exactly two global maximizers at $\pm m_\star(\beta)$, and $m=0$ is a strict local minimum.
\end{enumerate}
\end{lemma}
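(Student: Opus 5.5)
The plan is a direct calculus analysis of $\Phi_\beta$ on $(-1,1)$ through its first two derivatives. Writing $p=\frac{1+m}{2}$ and using $H'(p)=\log\frac{1-p}{p}$ together with $dp/dm=\tfrac12$, we get
\[
\Phi_\beta'(m)=\tfrac12\log\frac{1-m}{1+m}+\beta m=-\operatorname{artanh}(m)+\beta m,
\qquad
\Phi_\beta''(m)=-\frac{1}{1-m^2}+\beta .
\]
Hence the critical points in $(-1,1)$ are exactly the solutions of $\operatorname{artanh}(m)=\beta m$, which is equivalent to $m=\tanh(\beta m)$. The function $\Phi_\beta$ is continuous on $[-1,1]$, even, and has $\Phi_\beta'(m)\to\mp\infty$ as $m\to\pm1$. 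So its maximum over $[-1,1]$ is attained in the open interval, at a critical point.

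\textbf{Case $\beta<1$.} Here $\Phi_\beta''(m)\le \beta-1<0$ on $(-1,1)$, so $\Phi_\beta$ is strictly concave. Since $\Phi_\beta'(0)=0$, the point $m=0$ is the unique maximizer.

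\textbf{Case $\beta>1$.} Define $f(m)=\tanh(\beta m)-m$ on $[0,1]$. Then $f(0)=0$, $f'(0)=\beta-1>0$, and $f(1)=\tanh\beta-1<0$. Also $f''(m)=-2\beta^2\tanh(\beta m)\operatorname{sech}^2(\beta m)<0$ on $(0,1]$, so $f$ is strictly concave there. A strictly concave function that vanishes at $0$ has at most one further zero, so by the intermediate value theorem there is exactly one $m_\star\in(0,1)$ with $f(m_\star)=0$.

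By oddness, the critical points of $\Phi_\beta$ in $(-1,1)$ are exactly $\{-m_\star,0,m_\star\}$. Since $\Phi_\beta''(0)=\beta-1>0$, the point $m=0$ is a strict local minimum. The global maximum lies at a critical point and cannot be at $0$. By evenness it is attained at both $\pm m_\star$, which are therefore the only global maximizers.

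The only step needing care is uniqueness of the positive root. The concavity of $\tanh(\beta m)-m$ handles it; alternatively, one can use strict convexity of $\operatorname{artanh}(m)/m$ on $(0,1)$. The rest is routine.
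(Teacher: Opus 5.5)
Your proof is correct and follows the same calculus route as the paper: you derive $\Phi_\beta'$ and $\Phi_\beta''$, use strict concavity when $\beta<1$, and use the auxiliary function $f(m)=\tanh(\beta m)-m$ with the intermediate value theorem when $\beta>1$. In two places your argument is tighter than the paper's: showing uniqueness of $m_\star$ via $f''<0$ is cleaner than the paper's argument through $f'$ at roots, and your observation that $\Phi_\beta'(m)\to\mp\infty$ as $m\to\pm1$ actually rules out the endpoints, whereas the paper only compares $\Phi_\beta(\pm m_\star)$ with $\Phi_\beta(0)=\log 2$ and never with $\Phi_\beta(\pm1)=\beta/2$, which exceeds $\log 2$ once $\beta>2\log 2$.
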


\begin{proof}
For $m\in(-1,1)$, differentiating gives
\[
\frac{d}{dm}H\Big(\frac{1+m}{2}\Big)
=
\frac12\log\frac{1-m}{1+m}
=
-\operatorname{arctanh}(m),
\]
hence
\[
\Phi_\beta'(m)=-\operatorname{arctanh}(m)+\beta m,
\qquad
\Phi_\beta''(m)=-\frac{1}{1-m^2}+\beta.
\]

If $\beta<1$, then for all $m\in(-1,1)$,
\[
\Phi_\beta''(m)\le -1+\beta<0
\]
(since $1/(1-m^2)\ge 1$), so $\Phi_\beta$ is strictly concave and has at most one critical point.
Because $\Phi_\beta$ is even, $\Phi_\beta'(0)=0$, so $m=0$ is the unique maximizer.

If $\beta>1$, then $\Phi_\beta''(0)=\beta-1>0$, so $m=0$ is a strict local minimum.
Critical points satisfy $\Phi_\beta'(m)=0$, i.e.\ $\operatorname{arctanh}(m)=\beta m$, equivalently $m=\tanh(\beta m)$.
Define $f(m):=\tanh(\beta m)-m$ on $[0,1]$. Then $f(0)=0$ and $f'(0)=\beta-1>0$, while $f(1)=\tanh(\beta)-1<0$.
By continuity, there exists at least one root in $(0,1)$.
Moreover, $f'(m)=\beta(1-\tanh^2(\beta m))-1=\beta(1-m^2)-1$ at a root (since then $\tanh(\beta m)=m$),
and $m\mapsto \beta(1-m^2)-1$ is strictly decreasing on $[0,1]$.
This implies $f$ is strictly concave on any interval where it is positive and strictly decreasing once $m$ is large enough;
in particular, $f$ can cross zero at most once in $(0,1)$, so the positive root is unique; call it $m_\star(\beta)$.
By symmetry, $-m_\star(\beta)$ is also a critical point.

At $m=\pm m_\star(\beta)$, we have $\beta(1-m^2_\star(\beta))<1$ (equivalently the slope of $\tanh(\beta m)$ at the intersection is $<1$),
so $\Phi_\beta''(\pm m_\star(\beta))=-\frac{1}{1-m_\star(\beta)^2}+\beta<0$, hence $\pm m_\star(\beta)$ are strict local maxima.
Since $\Phi_\beta$ is continuous on compact $[-1,1]$, it attains global maxima; the only candidates are critical points and endpoints.
The endpoints satisfy $H(\frac{1\pm 1}{2})=0$, hence $\Phi_\beta(\pm 1)=\frac{\beta}{2}$, while $\Phi_\beta(\pm m_\star(\beta))>\Phi_\beta(0)=\log 2$ for $\beta>1$ (indeed $\pm m_\star(\beta)$ are maxima and $0$ is a local minimum).
Thus the global maxima are exactly $\pm m_\star(\beta)$.
\end{proof}

We next show exponential concentration of $M_K$ under $\beta_0$ and $\beta_1$. Our approach for proving concentration for Curie-Weiss models is motivated by~\citet[Chapter 2]{friedli2017statistical}. While more sophisticated approaches are available to obtain sharp concentration bounds, we use this simpler approach as a coarse concentration result suffices to prove our separation result of interest. 

\begin{lemma}[Concentration under $\beta<1$]
\label{lem:conc_highT}
Fix $\beta\in(0,1)$ and $\delta\in(0,1)$.
Then there exists $c=c(\beta,\delta)>0$ such that
\[
\Pr_\beta\big(|M_K|\ge \delta\big)\le \exp(-cK)
\quad\text{for all sufficiently large }K.
\]
\end{lemma}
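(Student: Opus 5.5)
We use the large-deviation representation \eqref{eq:pmf_asymptotic} together with the strict concavity from Lemma~\ref{lem:Phi_maximizers}(1). The idea is to bound the numerator mass on $\{|m|\ge\delta\}$ from above and the normalizer from below. The gap $\Phi_\beta(0)-\sup_{|m|\ge\delta}\Phi_\beta(m)$ is strictly positive, and it beats the polynomial factors.

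\textbf{Step 1 (energy gap).} Since $\beta<1$, Lemma~\ref{lem:Phi_maximizers}(1) says that $\Phi_\beta$ is continuous on $[-1,1]$, strictly concave on $(-1,1)$, and uniquely maximized at $m=0$, where $\Phi_\beta(0)=\log 2$. Continuity extends to the endpoints, and there $\Phi_\beta(\pm1)=\beta/2<\log 2$ also holds directly. The set $\{m\in[-1,1]:|m|\ge\delta\}$ is compact and does not contain $0$, so
\[
\gamma:=\Phi_\beta(0)-\max_{|m|\ge\delta}\Phi_\beta(m)>0.
\]
By evenness and concavity, this maximum equals $\Phi_\beta(\delta)$.

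\textbf{Step 2 (lower bound on the normalizer).} The denominator in \eqref{eq:pmf_asymptotic} is at least its single term at the admissible magnetization closest to $0$. This is $m=0$ if $K$ is even and $m=1/K$ if $K$ is odd. By continuity of $\Phi_\beta$ near $0$ (its derivative there is $0$), we have $K\Phi_\beta(1/K)=K\Phi_\beta(0)+o(1)$. Hence the denominator is at least $\exp(K\Phi_\beta(0)-C\log K)$. A cleaner alternative is to use the exact formula \eqref{eq:magnetization_pmf}: then $Z_K(\beta)\ge \binom{K}{\lfloor K/2\rfloor}\ge 2^K/(K+1)$, since $\exp(\beta K m^2/2)\ge1$. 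This avoids relying on the uniform $O(\log K)$ term.

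\textbf{Step 3 (upper bound on the tail mass).} There are at most $K+1$ admissible values with $|m|\ge\delta$. For each of them, $\binom{K}{K(1+m)/2}\le \exp(KH(\frac{1+m}{2}))$, which is the standard entropy bound and holds with no error term. So the numerator is at most $(K+1)\exp(K\max_{|m|\ge\delta}\Phi_\beta(m))$.

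\textbf{Step 4 (conclusion).} Dividing the bound of Step 3 by that of Step 2 gives
\[
\Pr_\beta(|M_K|\ge\delta)\le (K+1)^2 e^{-\gamma K}.
\]
Choosing $c=\gamma/2$ yields the bound $\exp(-cK)$ for all sufficiently large $K$.

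The only delicate point is handling the error terms uniformly in $m$. Using the exact binomial inequalities $\frac{1}{K+1}e^{KH(r/K)}\le\binom{K}{r}\le e^{KH(r/K)}$ sidesteps the issue entirely, and with them the argument is routine.
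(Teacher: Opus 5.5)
Your proof is correct and follows the same route as the paper. Both use the energy gap $\Phi_\beta(0)-\sup_{|m|\ge\delta}\Phi_\beta(m)>0$ from Lemma~\ref{lem:Phi_maximizers}(1), a $(K+1)$-term bound on the tail mass, and a lower bound on the normalizer from the term at or nearest to $m=0$. Your use of the exact inequalities $\frac{1}{K+1}e^{KH(r/K)}\le\binom{K}{r}\le e^{KH(r/K)}$ and the central term $\binom{K}{\lfloor K/2\rfloor}\ge 2^K/(K+1)$ is a small tightening. It avoids the uniform $O(\log K)$ bookkeeping and also covers odd $K$, where the paper's lower bound at $m=0$ tacitly assumes that $m=0$ is an admissible magnetization.
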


\begin{proof}
By assertion (1) of Lemma~\ref{lem:Phi_maximizers}, $\Phi_\beta$ has unique maximizer at $0$.
By continuity of $\Phi_\beta$ and compactness of $\{m\in[-1,1]:|m|\ge \delta\}$,
\[
\Delta:=\Phi_\beta(0)-\sup_{|m|\ge\delta}\Phi_\beta(m) >0.
\]
Using \eqref{eq:pmf_asymptotic}, for the numerator we bound
\[
\sum_{|m|\ge\delta}\exp\!\big(K\Phi_\beta(m)+O(\log K)\big)
\le (K+1)\exp\!\big(K\sup_{|m|\ge\delta}\Phi_\beta(m)+O(\log K)\big),
\]
and for the denominator,
\[
\sum_{m'}\exp\!\big(K\Phi_\beta(m')+O(\log K)\big)
\ge \exp\!\big(K\Phi_\beta(0)-O(\log K)\big).
\]
Taking the ratio gives
\[
\Pr_\beta(|M_K|\ge\delta)
\le
(K+1)\exp\!\big(-K\Delta+O(\log K)\big)
=
\exp\!\big(-K\Delta+O(\log K)\big),
\]
which is $\le \exp(-cK)$ for all large $K$ for any $c<\Delta$.
\end{proof}

\begin{lemma}[Concentration under $\beta>1$]
\label{lem:conc_lowT}
Fix $\beta>1$ and let $m_\star=m_\star(\beta)\in(0,1)$ be the unique positive solution to $m=\tanh(\beta m)$.
For any $\varepsilon\in(0,m_\star)$, there exists $c=c(\beta,\varepsilon)>0$ such that
\[
\Pr_\beta\big(\big||M_K|-m_\star\big|\ge \varepsilon\big)\le \exp(-cK)
\quad\text{for all sufficiently large }K.
\]
\end{lemma}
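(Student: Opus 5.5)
The plan is to copy the proof of Lemma~\ref{lem:conc_highT}, with the single maximizer at $0$ replaced by the pair of maximizers $\pm m_\star$ supplied by assertion (2) of Lemma~\ref{lem:Phi_maximizers}. Fix $\varepsilon\in(0,m_\star)$ and set
\[
A_\varepsilon:=\{m\in[-1,1]:\ ||m|-m_\star|\ge\varepsilon\}.
\]
This set is compact. It is a finite union of closed intervals, $[-1,-m_\star-\varepsilon]\cup[-m_\star+\varepsilon,m_\star-\varepsilon]\cup[m_\star+\varepsilon,1]$, with the outer pieces possibly empty. It contains neither global maximizer $\pm m_\star$ of $\Phi_\beta$.

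Since $\Phi_\beta$ is continuous on $[-1,1]$, with the convention $0\log 0=0$ at the endpoints, its supremum over $A_\varepsilon$ is attained at some point of $A_\varepsilon$. That point is not a global maximizer. Therefore
\[
\Delta:=\Phi_\beta(m_\star)-\sup_{m\in A_\varepsilon}\Phi_\beta(m)>0 .
\]
The endpoints $\pm1$ need no separate argument: Lemma~\ref{lem:Phi_maximizers} already shows $\Phi_\beta(\pm1)=\beta/2<\Phi_\beta(m_\star)$.

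Next apply the representation \eqref{eq:pmf_asymptotic}, as in Lemma~\ref{lem:conc_highT}. For the numerator, sum over at most $K+1$ admissible magnetization values in $A_\varepsilon$, which gives the bound
\[
(K+1)\exp\!\big(K\sup_{A_\varepsilon}\Phi_\beta+O(\log K)\big).
\]
For the denominator, keep only one admissible value $m_K$ nearest to $m_\star$, so that $|m_K-m_\star|\le 2/K$.

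The one genuine difference from the high-temperature case sits here: $m_\star$ is generally not on the lattice $\{-1,-1+2/K,\dots,1\}$, whereas $0$ is, at least for even $K$. So I need $K\Phi_\beta(m_K)\ge K\Phi_\beta(m_\star)-O(1)$. This holds because $\Phi_\beta$ is smooth near $m_\star\in(0,1)$ and $\Phi_\beta'(m_\star)=0$, which makes $|\Phi_\beta(m_K)-\Phi_\beta(m_\star)|=O(K^{-2})$. A plain Lipschitz bound on a neighborhood of $m_\star$, giving $O(1/K)$, would also be enough.

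Dividing the two bounds gives
\[
\Pr_\beta(M_K\in A_\varepsilon)\le \exp\!\big(-K\Delta+O(\log K)\big)\le e^{-cK}
\]
for any $c<\Delta$ and all sufficiently large $K$. The uniformity of the $O(\log K)$ term in \eqref{eq:binom_uniform} covers the boundary values $m=\pm1$. The main care point is making sure $\sup_{A_\varepsilon}\Phi_\beta$ really sits strictly below the global maximum. That in turn rests on Lemma~\ref{lem:Phi_maximizers} identifying $\pm m_\star$ as the only global maximizers, which is assumed.
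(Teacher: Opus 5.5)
Your proposal is correct and follows the paper's proof: the paper takes the same closed set (it calls it $F_\varepsilon$, your $A_\varepsilon$), gets the gap $\Delta=\Phi_\beta(m_\star)-\sup_{F_\varepsilon}\Phi_\beta>0$ from compactness together with Lemma~\ref{lem:Phi_maximizers}(2), and reuses the numerator/denominator bound from Lemma~\ref{lem:conc_highT}. Your lower bound on the denominator at the admissible value nearest $m_\star$, which costs only $O(1)$ in the exponent, fills in a detail the paper leaves implicit, since $m_\star$ is generally not on the lattice $\{-1,-1+2/K,\dots,1\}$.
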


\begin{proof}
By Lemma~\ref{lem:Phi_maximizers}(2), $\Phi_\beta$ has exactly two strict global maximizers at $\pm m_\star$.
Define the closed set
\[
F_\varepsilon:=\{m\in[-1,1]:\big||m|-m_\star\big|\ge \varepsilon\}.
\]
Since $\pm m_\star\notin F_\varepsilon$ and $\Phi_\beta$ is continuous,
compactness implies
\[
\Delta:=\Phi_\beta(m_\star)-\sup_{m\in F_\varepsilon}\Phi_\beta(m)>0.
\]
The same numerator/denominator bounding argument used in Lemma~\ref{lem:conc_highT} applied to the event $\{M_K\in F_\varepsilon\}$
yields
\[
\Pr_\beta(M_K\in F_\varepsilon)
\le
(K+1)\exp\!\big(-K\Delta+O(\log K)\big)
=
\exp\!\big(-K\Delta+O(\log K)\big)
\le \exp(-cK)
\]
for any $c<\Delta$ and all sufficiently large $K$.
\end{proof}

We are now ready the prove assertion~(2). 

Consider the joint model from Theorem~\ref{thm:ising_separation} with $\beta_0\in(0,1)$ and $\beta_1>1$.
Fix any $t\in(0,m_\star(\beta_1)^2)$ and define $\tilde g_K(J)=\mathbf 1\{M_K(J)^2\ge t\}$.

\emph{Type I error under $Y=0$:} Conditional on $Y=0$, the spins follow $\Pr_{\beta_0}$, hence by Lemma~\ref{lem:conc_highT} with $\delta=\sqrt{t}$,
\[
\Pr(\tilde g_K=1\mid Y=0)
=
\Pr_{\beta_0}(M_K^2\ge t)
=
\Pr_{\beta_0}(|M_K|\ge \sqrt{t})
\longrightarrow 0.
\]

\emph{Type II error under $Y=1$:} Conditional on $Y=1$, the spins follow $\Pr_{\beta_1}$.
Let $m_\star=m_\star(\beta_1)$ and set $\varepsilon := \frac{m_\star-\sqrt{t}}{2}>0$.
If $M_K^2<t$, then $|M_K|<\sqrt{t}=m_\star-2\varepsilon$, hence $\big||M_K|-m_\star\big|\ge 2\varepsilon$.
Therefore, by Lemma~\ref{lem:conc_lowT} (applied with $2\varepsilon$),
\[
\Pr(\tilde g_K=0\mid Y=1)
=
\Pr_{\beta_1}(M_K^2<t)
\le
\Pr_{\beta_1}\big(\big||M_K|-m_\star\big|\ge 2\varepsilon\big)
\longrightarrow 0.
\]

Combining the two conditional errors,
\[
R(\tilde g_K)
=
\pi\,\Pr(\tilde g_K=0\mid Y=1)+(1-\pi)\,\Pr(\tilde g_K=1\mid Y=0)
\longrightarrow 0.
\]
Since the Bayes predictor $g_K^\star$ minimizes misclassification risk under the true joint law,
\[
R(g_K^\star)\le R(\tilde g_K)\to 0,
\]
proving assertion~(2).

Finally, we immediately have the separation limit stated in assertion~(3). Indeed, assertion~(1) gives $R(g_K^{\mathrm{ind}})=\min\{\pi,1-\pi\}$ for all $K$, while assertion~(2) gives $R(g_K^\star)\to 0$.
Hence
\[
\lim_{K\to\infty}\Big(R(g_K^{\mathrm{ind}})-R(g_K^\star)\Big)
=
\min\{\pi,1-\pi\}-0
=
\min\{\pi,1-\pi\}
>0.
\]
This proves assertion~(3) and completes the overall proof.
\end{proof}

\subsection{Proof of Theorem~\ref{prop:cw_informative_marginals}}

\separationextension*

\begin{proof}
The proof is similar to that of Theorem~\ref{thm:ising_separation}. We start with the magnetization representation. For $\beta>0$ and field $h$, the Curie-Weiss probability of $M_K=m$
(with $m\in\{-1,-1+\frac{2}{K},\dots,1\}$) can be written as
\begin{align}
\label{eq:cw_magnetization_pmf_general}
\Pr_{\beta,h}(M_K=m)
=
\frac{1}{Z_K(\beta,h)}\binom{K}{\frac{K(1+m)}{2}}
\exp\!\Big(\frac{\beta K}{2}m^2 + h K m\Big),
\end{align}
where $Z_K(\beta,h)$ normalizes the mass over all admissible $m$.
Let $H(p)=-p\log p-(1-p)\log(1-p)$ and define
\[
\Phi_\beta(m):=H\Big(\frac{1+m}{2}\Big)+\frac{\beta}{2}m^2,\qquad m\in[-1,1].
\]
Stirling's formula yields
\[
\log\binom{K}{\frac{K(1+m)}{2}}
=
K\,H\Big(\frac{1+m}{2}\Big)+O(\log K),
\]
uniformly over admissible $m$, hence
\begin{align}
\label{eq:cw_pmf_asymp_general}
\Pr_{\beta,h}(M_K=m)
\propto
\exp\!\Big(K\Phi_\beta(m) + hKm + O(\log K)\Big).
\end{align}

We now compute the marginals correponding to item (1). By exchangeability, $\mathbb E[X_1\mid Y=y]=\mathbb E[M_K\mid Y=y]$.
Under $(\beta_0,h_0)$ with $\beta_0\in(0,1)$ and fixed $h_0<0$, the standard mean-field analysis implies that
$M_K\to m_0$ in probability, where $m_0$ is the unique solution to $m=\tanh(\beta_0 m+h_0)$.
Hence $\mathbb E[M_K\mid Y=0]\to m_0<0$ and
\[
q_0=\Pr(J_1=1\mid Y=0)=\Pr(X_1=+1\mid Y=0)=\frac{1+\mathbb E[X_1\mid Y=0]}{2}\to \frac{1+m_0}{2}<\frac12.
\]

Under $(\beta_1,h_{1,K})$ with $\beta_1>1$ and $h_{1,K}=c/K$, we show below (Step~2) that $M_K$ converges in distribution to a two-point mixture
$p\,\delta_{m_\star}+(1-p)\,\delta_{-m_\star}$ with $p=\sigma(2cm_\star)\in(1/2,1)$.
Therefore $\mathbb E[M_K\mid Y=1]\to (2p-1)m_\star>0$ and
\[
q_1=\Pr(J_1=1\mid Y=1)=\frac{1+\mathbb E[X_1\mid Y=1]}{2}\to \frac{1+(2p-1)m_\star}{2}>\frac12.
\]
This proves item~(1).

We now examine the magnetization limits under the two classes. Consider $Y=0$. Since $\beta_0<1$, $\Phi_{\beta_0}$ is strictly concave on $(-1,1)$ and has a unique maximizer; adding the linear term $h_0 m$ preserves uniqueness.
Thus the exponent in \eqref{eq:cw_pmf_asymp_general} has a unique global maximizer at $m_0$, and a standard Laplace argument yields
\begin{align}
\label{eq:M0_conv}
M_K \xrightarrow[K\to\infty]{\Pr} m_0.
\end{align}
Now consider $Y=1$. Here $h_{1,K}=c/K$, so $h_{1,K}K=c$ and \eqref{eq:cw_pmf_asymp_general} becomes
\begin{align}
\label{eq:cw_lowT_weakfield}
\Pr(M_K=m\mid Y=1)
\propto
\exp\!\Big(K\Phi_{\beta_1}(m) + c m + O(\log K)\Big).
\end{align}
For $\beta_1>1$, $\Phi_{\beta_1}$ has exactly two global maximizers at $\pm m_\star$ (where $m_\star=\tanh(\beta_1 m_\star)$).
Fix $\varepsilon\in(0,m_\star)$ and define neighborhoods
\[
U_+:=[m_\star-\varepsilon,m_\star+\varepsilon],\qquad
U_-:=[-m_\star-\varepsilon,-m_\star+\varepsilon],
\qquad
F_\varepsilon:=[-1,1]\setminus(U_+\cup U_-).
\]
By continuity and strict maximality at $\pm m_\star$, there exists $\Delta(\varepsilon)>0$ such that
$\sup_{m\in F_\varepsilon}\Phi_{\beta_1}(m)\le \Phi_{\beta_1}(m_\star)-\Delta(\varepsilon)$.
Using \eqref{eq:cw_lowT_weakfield} and the same numerator/denominator bounding argument as in Lemma~\ref{lem:conc_lowT},
we obtain exponential concentration:
\begin{align}
\label{eq:absM_conc}
\Pr\big(M_K\in F_\varepsilon\mid Y=1\big)\le e^{-c_1K}
\quad\text{for some }c_1=c_1(\varepsilon)>0.
\end{align}
Hence $|M_K|\to m_\star$ in probability under $Y=1$.

It remains to identify the limiting \emph{mixture weights}.
Let $A_{K,+}$ (resp.\ $A_{K,-}$) denote the total unnormalized mass in $U_+$ (resp.\ $U_-$) in \eqref{eq:cw_lowT_weakfield}.
On $U_+$ we have $m=m_\star+o(1)$ and on $U_-$ we have $m=-m_\star+o(1)$, while $\Phi_{\beta_1}(m)=\Phi_{\beta_1}(m_\star)+o(1)$
in both neighborhoods.
Therefore
\[
\frac{A_{K,+}}{A_{K,-}}
=
\frac{\sum_{m\in U_+}\exp\big(K\Phi_{\beta_1}(m)+cm+O(\log K)\big)}
     {\sum_{m\in U_-}\exp\big(K\Phi_{\beta_1}(m)+cm+O(\log K)\big)}
\;\longrightarrow\;
\exp(2cm_\star),
\]
because the leading $K\Phi_{\beta_1}(m_\star)$ contributions cancel and the remaining $cm$ term evaluates to $\pm c m_\star$.
Combining with \eqref{eq:absM_conc} implies
\[
\Pr(M_K>0\mid Y=1)\to \frac{e^{cm_\star}}{e^{cm_\star}+e^{-cm_\star}}=p,
\qquad
\Pr(M_K<0\mid Y=1)\to 1-p,
\]
and thus $M_K \Rightarrow p\,\delta_{m_\star}+(1-p)\,\delta_{-m_\star}$ under $Y=1$.

We are now in the position to show that Bayes risk vanishes, as claimed in item (2). Pick any $t$ with $|m_0|<t<m_\star$ and define $\tilde g_K(J)=\mathbf 1\{|M_K|\ge t\}$.
Under $Y=0$, \eqref{eq:M0_conv} implies $|M_K|<t$ with probability $\to 1$.
Under $Y=1$, \eqref{eq:absM_conc} implies $|M_K|>t$ with probability $\to 1$.
Therefore $\Pr(\tilde g_K\neq Y)\to 0$, i.e.\ $R(\tilde g_K)\to 0$.
Since $g_K^\star$ minimizes risk, $R(g_K^\star)\le R(\tilde g_K)\to 0$.

Next, we move on to proving item (3). Under the CI model with marginals $(q_0,q_1)$, the (oracle) CI log-likelihood ratio depends only on
$S_K=\sum_{j=1}^K J_j$ or equivalently $s_K=S_K/K$:
\[
\log\frac{\Pr_{\mathrm{ind}}(J\mid Y=1)}{\Pr_{\mathrm{ind}}(J\mid Y=0)}
=
K\Big(s_K\log\frac{q_1}{q_0}+(1-s_K)\log\frac{1-q_1}{1-q_0}\Big).
\]
Since $q_1>q_0$, the function of $s$ in parentheses is strictly increasing and has a unique root $s_\mathrm{thr}\in(q_0,q_1)$.
Thus the CI-predictor satisfies (for all large $K$, ignoring the vanishing prior term at scale $K$)
\[
g_K^{\mathrm{ind}}(J)=\mathbf 1\{s_K\ge s_\mathrm{thr}\}.
\]
Under $Y=0$, we have $s_K=(1+M_K)/2\to (1+m_0)/2=q_0<s_\mathrm{thr}$ in probability, so
$\Pr(g_K^{\mathrm{ind}}=1\mid Y=0)\to 0$.

Under $Y=1$, we have $s_K\to (1\pm m_\star)/2$ depending on the phase.
By \eqref{eq:threshold_condition_simple}, the negative-phase limit $(1-m_\star)/2$ is strictly smaller than $q_0<s_\mathrm{thr}$,
so $g_K^{\mathrm{ind}}\to 0$ on the negative phase; while $(1+m_\star)/2>s_\mathrm{thr}$ so $g_K^{\mathrm{ind}}\to 1$ on the positive phase.
Therefore
\[
\Pr(g_K^{\mathrm{ind}}=0\mid Y=1)\to \Pr(\text{negative phase}\mid Y=1)=1-p,
\]
and hence
\[
R(g_K^{\mathrm{ind}})
=
\pi\,\Pr(g_K^{\mathrm{ind}}=0\mid Y=1)+(1-\pi)\,\Pr(g_K^{\mathrm{ind}}=1\mid Y=0)
\to \pi(1-p).
\]
Together with $R(g_K^\star)\to 0$, this yields the claimed nonvanishing excess-risk lower bound.
\end{proof}

\section{Numerical Simulations}\label{app:isingsimulations}

\subsection{CI Judges: Uniform vs.\ Weighted Majority Vote} \label{app:exp_ci_mv}

We present sanity-check experiment in the \emph{conditionally independent} (CI) setting, where the classical Dawid-Skene family is well-specified.
Each item has a latent label $Y_i\in\{0,1\}$ and $K=6$ judges produce independent votes $J_{ij}\in\{0,1\}$ conditional on $Y_i$.
Judge $j$ is characterized by sensitivity $\alpha_j=\Pr(J_{ij}=1\mid Y_i=1)$ and specificity $\beta_j=\Pr(J_{ij}=0\mid Y_i=0)$.
For each setup we generate $n=200$ items from the CI model with the \emph{true} $(\alpha,\beta)$ listed in Table~\ref{tab:ci_params}. We consider four setups spanning strong annotators (Setup~1) and heterogeneous, partially unreliable annotators (Setups~2--4). We evaluate two aggregation rules:
(i) {Uniform majority vote (Uniform MV)}, which predicts the class supported by a strict majority of votes, and (ii) Weighted majority vote learned by the EM approach in Algorithm~\ref{alg:general_em}, where we fit the asymmetric CI model with Beta priors on $(\alpha_j,\beta_j)$ and then apply the induced Bayes-optimal linear rule (equivalently, a weighted vote with weights given by the estimated log-odds contributions of each judge. 
\begin{table}[t]
\centering
\small
\begin{tabular}{c|cccccc|cccccc}
\toprule
& \multicolumn{6}{c|}{True sensitivities} & \multicolumn{6}{c}{True specificities}\\
Setup & $\alpha_1$ & $\alpha_2$ & $\alpha_3$ & $\alpha_4$ & $\alpha_5$ & $\alpha_6$ & $\beta_1$ & $\beta_2$ & $\beta_3$ & $\beta_4$ & $\beta_5$ & $\beta_6$\\
\midrule
\#1 & 0.90 & 0.90 & 0.90 & 0.90 & 0.90 & 0.90 & 0.90 & 0.90 & 0.90 & 0.95 & 0.90 & 0.95\\
\#2 & 0.26 & 0.53 & 0.64 & 0.50 & 0.67 & 0.70 & 0.34 & 0.54 & 0.65 & 0.76 & 0.70 & 0.30\\
\#3 & 0.26 & 0.30 & 0.24 & 0.50 & 0.70 & 0.80 & 0.80 & 0.90 & 0.50 & 0.60 & 0.37 & 0.23\\
\#4 & 0.60 & 0.63 & 0.74 & 0.75 & 0.67 & 0.80 & 0.70 & 0.59 & 0.95 & 0.86 & 0.77 & 0.83\\
\bottomrule
\end{tabular}
\caption{True per-judge sensitivities and specificities used in the CI simulations ($K=6$ judges).}
\label{tab:ci_params}
\end{table}

\begin{table}[t]
    \centering
    \begin{tabular}{ccc}
    \toprule
         Setup &  CI-WMV (via EM)   &  CI-UMV\\
          \midrule
        \#1 &  \cellcolor{green!25}\textbf{0.9970}  & 0.9950\\
        \#2 &  \cellcolor{green!25}\textbf{0.7260}  & 0.5970 \\ 
        \#3 &  \cellcolor{green!25}\textbf{0.6110}  & 0.5200 \\ 
        \#4 &  \cellcolor{green!25}\textbf{0.9300}  & 0.9170\\ 
        \bottomrule
    \end{tabular}
    \caption{Comparing Weighted Majority Vote (CI-WMV) where the weights are learned via EM and Uniform Majority Vote under Conditionally (CI-UMV) independent judges with parameters represented in Table~\ref{tab:ci_params}. Reported numbers represent the average accuracy over 20 trials. }
    \label{tab:placeholder}
\end{table}


Table~\ref{tab:placeholder} reports average $0$--$1$ accuracy across runs for each setup.
Two trends are consistent with theory.
First, when all judges are strong and roughly exchangeable (Setup~1), uniform MV is already near-optimal and EM-based weighting provides only a small gain. Second, in the heterogeneous regimes (Setups~2--4), uniform MV can substantially underperform because it treats all judges as equally informative and ignores asymmetric error patterns.
In contrast, CI-WMV learns to up-weight reliable judges and down-weight weak (or effectively adversarial) ones, yielding large improvements:
in Setup~2 accuracy increases from 0.597 to 0.726, and in Setup~3 from 0.520 to 0.611.
Even in Setup~4, where most judges are reasonably informative but still heterogeneous, CI-WMV improves over uniform MV (0.930 vs.\ 0.917).
Overall, these CI experiments establish a strong baseline: when conditional independence holds, learned weighted aggregation offers meaningful gains over uniform majority vote whenever annotator quality is non-uniform, and it recovers near-ceiling performance when all annotators are strong.

\subsection{Dependent Judges: Ising/Factor models vs.\ Weighted Majority Voting}
\label{sec:exp_ising_compare}

We next evaluate aggregation under \emph{dependent} judges, where the conditional-independence (CI) is misspecified. Across the experiments below, the goal is to compare a strong CI baseline,---Weighted MV learned by EM under the asymmetric CI model---\text{Class-dependent Ising} with class-specific couplings and factor models. All methods are trained in an unsupervised manner using the generalized-EM framework (Algorithm~\ref{alg:general_em});
for Ising models we use pseudo-likelihood in the M-step and approximate class scores in the E-step.

\noindent\textbf{Simulation Setup 1 (Illustrating Theorem~\ref{prop:cw_informative_marginals}).}
To empirically illustrate the Curie-Weiss separation with \emph{informative} marginals, we generate synthetic vote vectors from the class-conditional Curie-Weiss model \eqref{eq:cw_fields} and compare the CI-predictors $g_K^{\mathrm{ind}}$ to the dependence-aware (near-Bayes) rule $\tilde g_K(J)=\mathbf 1\{|M_K|\ge t\}$ suggested by the proof.
We fix $(\pi,\beta_0,h_0)$ with $\beta_0\in(0,1)$ and $h_0<0$, and vary the low-temperature parameters $(\beta_1,c)$ with $\beta_1>1$ and $c>0$; for each pair we set $h_{1,K}=c/K$ and compute $m_0$ and $m_\star(\beta_1)$ from the mean-field equations.
We restrict to parameter settings satisfying the separation condition \eqref{eq:threshold_condition_simple} and choose a threshold $t\in(|m_0|,m_\star)$ (e.g., $t=(|m_0|+m_\star)/2$).
For each $(\beta_1,c,K)$ we sample $n$ i.i.d.\ items: draw $Y\sim\mathrm{Bernoulli}(\pi)$, then draw spins $X\in\{\pm1\}^K$ from \eqref{eq:cw_fields} (e.g., via Glauber dynamics with burn-in), and finally map to votes $J_j=(X_j+1)/2$.
We evaluate (i) the empirical risk (denoted by $\hat{R}$) of $\tilde g_K$ (a proxy for Bayes, which should approach $0$ as $K$ grows) and (ii) the empirical risk of the \emph{oracle} CI-predictor $g_K^{\mathrm{ind}}$, which uses the true marginals $q_0,q_1$ to form the naive-Bayes log-likelihood ratio in $S_K=\sum_j J_j$.
The results are shown in Figure~\ref{fig:placeholder}. The plots, directly mirror the theorem’s message: $\widehat R(\tilde g_K)$ rapidly decreases with $K$, while $\widehat R(g_K^{\mathrm{ind}})$ approaches a positive constant that varies smoothly with $(\beta_1,c)$ through $\pi(1-p)$.

\noindent\textbf{Simulation Setup 2 (Illustrating Theorem~\ref{prop:bayes-ci-separation}).}
To empirically illustrate the asymptotic separation under the exchangeable latent-factor model \eqref{eq:latent-factor-model}, we simulate votes from the generative process with fixed parameters $(\pi,a,b,\lambda,\sigma_Z^2)$ and vary $\lambda$ and/or $\sigma_Z^2$ while increasing the number of judges $K$.
For each run, we draw $n$ independent items: sample $Y_i\sim\mathrm{Bernoulli}(\pi)$ and $Z_i\sim\mathcal N(0,\sigma_Z^2)$, then draw votes
$J_{i1},\ldots,J_{iK}\mid (Y_i,Z_i)\stackrel{\mathrm{iid}}{\sim}\mathrm{Bernoulli}(p(Y_i,Z_i))$.
We evaluate two plug-in aggregators based on the empirical vote fraction $s_{iK}=K^{-1}\sum_{j=1}^K J_{ij}$:
the Bayes-optimal prediction rule $g_\infty^\star(J_i)=\mathbf 1\{\ell^\star(s_{i\infty})\ge 0\}$ (approximated by using $s_{iK}$ in place of $s_{i\infty}$),
and the CI-prediction rule $g_\infty^{\mathrm{ind}}(J_i)=\mathbf 1\{\ell^{\mathrm{ind}}(s_{i\infty})\ge 0\}$ (again approximated using $s_{iK}$), where $q_y=\mathbb E_Z[p(y,Z)]$ is computed numerically.
To visualize the separation predicted by the theorem, we again plot in Figure~\ref{fig:placeholder1}: (i) the empirical risks $R(g_K^\star)$ and $R(g_K^{\mathrm{ind}})$ versus $K$ for different values of $\lambda$ and $\sigma_Z^2$), and (ii) the empirical separation. These plots highlight the theorem’s message: under dependence (larger $|\lambda|$ or $\sigma_Z^2$), the disagreement event
$\{g_\infty^{\mathrm{ind}}\neq g_\infty^\star\}$ typically expands, and the separation becomes nonzero.\\

\begin{figure}[ht]
    \centering
    \includegraphics[width=0.8\textwidth]{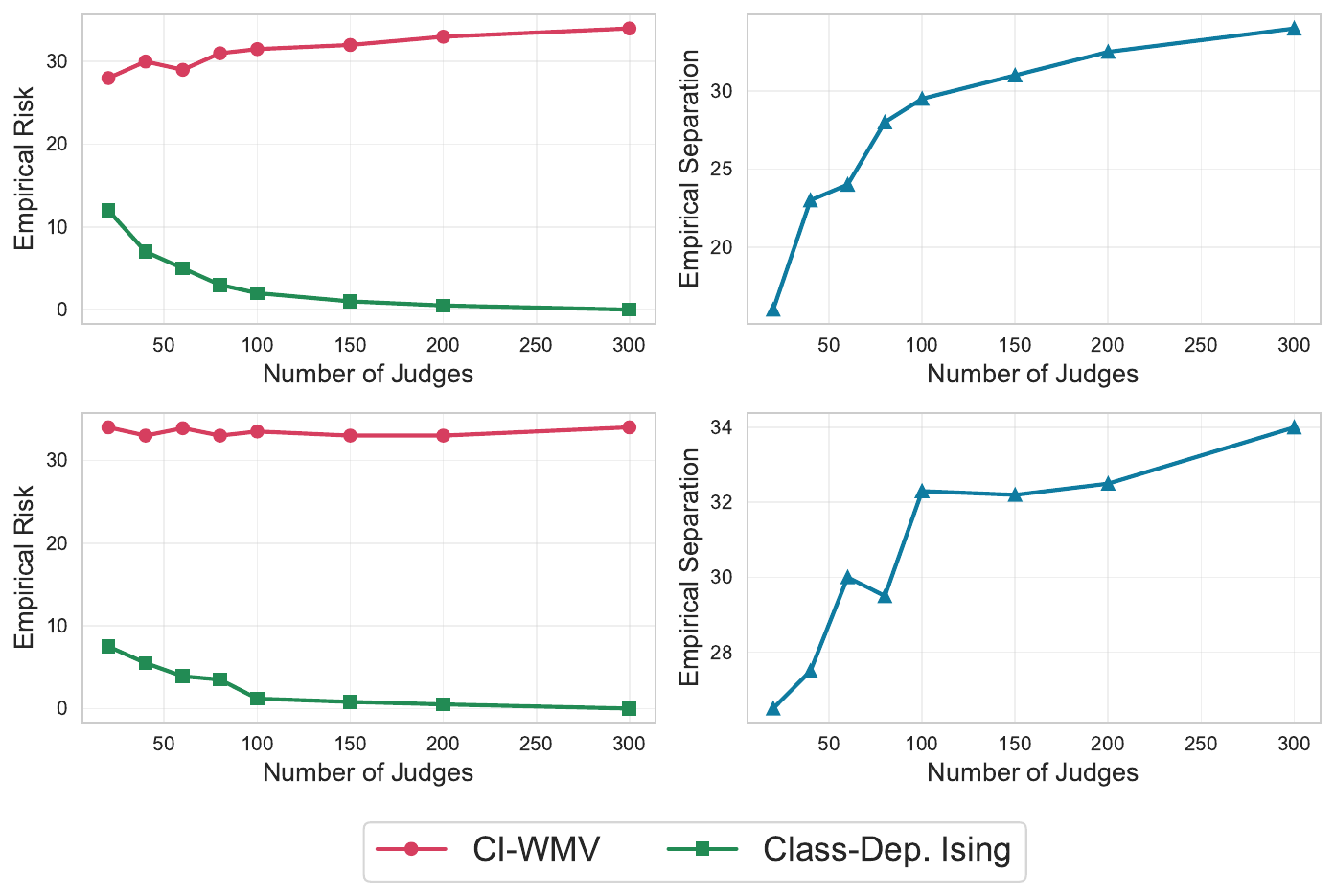}
    \caption{Ising predictors (Class-Dep.\ Ising) versus CI-predictors (CI-WMV): The plots on the left and right represent the empirical risk and empirical separation respectively. Top row corresponds to $\beta_1 =2, c =1.5$. Bottom row corresponds to $\beta_1 =5, c =1$. For all plots, $\pi=0.7, \beta_0 = 0.5, h_0 =-0.5$ and $n=1000$. The standard errors are of small width, although they are plotted.}
    \label{fig:placeholder}
\end{figure}

\begin{figure}[ht]
    \centering
    \includegraphics[width=0.8\textwidth]{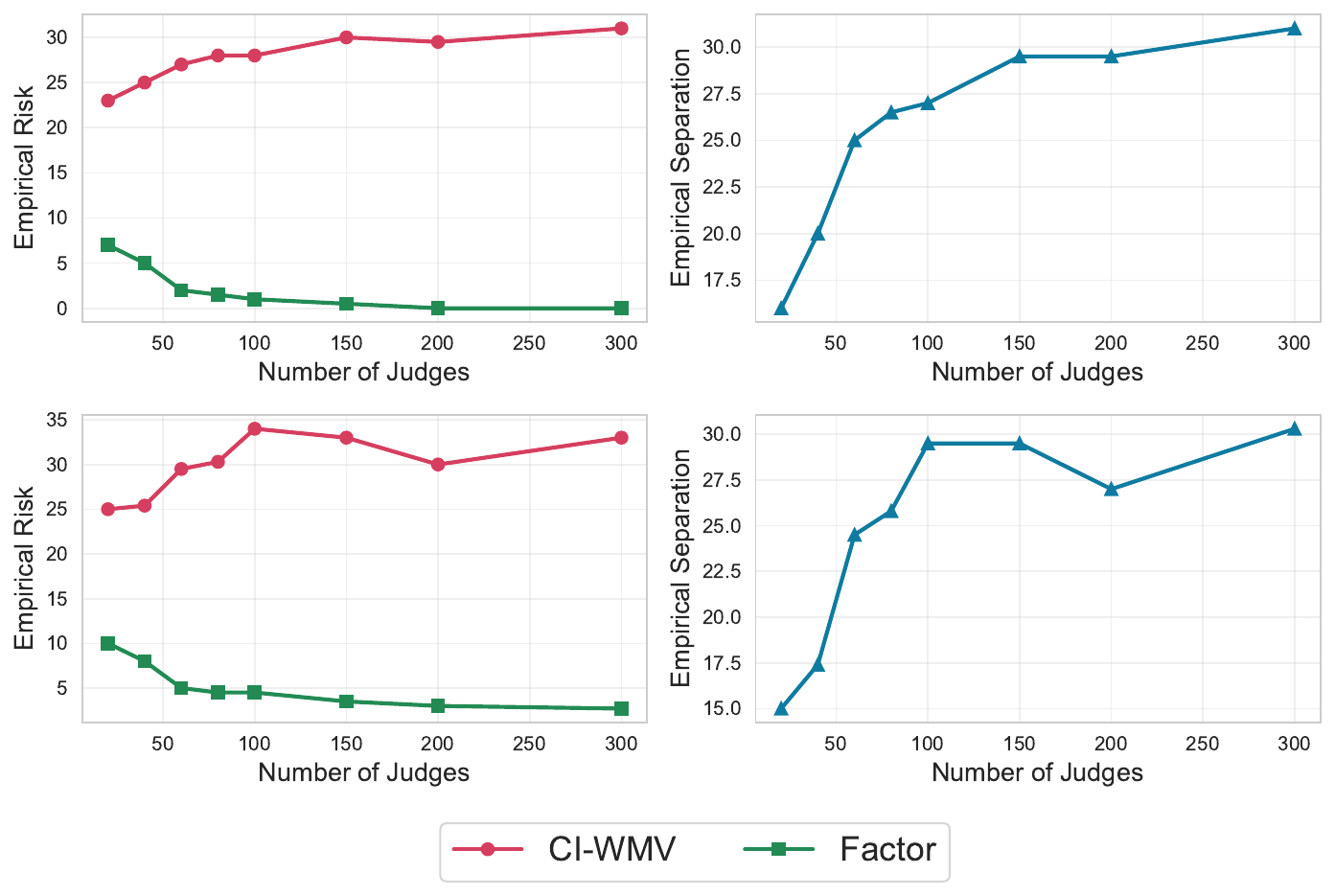}
    \caption{Latent-factor (Factor) predictor versus CI-predictors (CI-WMV). The plots on the left and right represent the empirical risk and empirical separation respectively. Top row corresponds to $|\lambda|=0.1, \sigma_Z^2 =1$. Bottom row corresponds to $|\lambda|=0.15, \sigma_Z^2 =1.5$. For all plots, $\pi=0.7, a= 0.5, b =1$ and $n=1000$. The standard errors are of small width, although they are plotted.}
    \label{fig:placeholder1}
\end{figure}

\hspace{0.03in} Across both dependent judges cases---dependence induced by latent factors and dependence induced by Curie-Weiss interactions---CI based Weighted Majority Voting aggregation rule under-performs the respective posteriors. 

\section{Additional Details on Real Datasets}\label{sec:real_datasets}

\subsection{Preprocessing of WikiQA Dataset}\label{app:wikiqa_preprocessing}

In Tables~\ref{tab:wikiqa_rel_example1} and~\ref{tab:wikiqa_rel_example2}, we present the examples of two questions from the WikiQA dataset along with the corresponding candidate answers. In the former case, there exists a correct answer among the candidate ones, therefore, we label a concatenated text chunk as relevant. In the latter case, none of the candidate answers is correct, therefore. we label a concatenated text chunk as irrelevant, meaning it does not contain information that can be directly used to answer the question at hand.


\begin{table}[ht]
  \begin{center}
    \begin{small}
        \begin{tabular}{p{0.85\textwidth}c}
          \toprule
          Candidate Answer  & Label   \\
          \midrule
          Professor Albus Percival Wulfric Brian Dumbledore is a major character and protagonist of J. K. Rowling's Harry Potter series. & 0\\
        \hline
         For most of the series, he is the headmaster of the wizarding school Hogwarts. & 0\\
         \hline
         As part of his backstory, it is revealed that he is the founder and leader of the Order of the Phoenix, an organisation dedicated to fighting the main antagonist of the series, Lord Voldemort. & 0\\
         \hline 
         Dumbledore is portrayed by Richard Harris in the film adaptions of Harry Potter and the Philosopher's Stone and Harry Potter and the Chamber of Secrets. & 0\\
         \hline
         After Harris' death, Michael Gambon portrayed Dumbledore for all of the remaining films. & 1\\
         \hline 
         Rowling stated she chose the name Dumbledore, which is an Early Modern English word for ``bumblebee'', because of Dumbledore's love of music: she imagined him walking around ``humming to himself a lot''. & 0 \\
          \bottomrule
        \end{tabular}
    \end{small}
  \end{center}
  \caption{Candidate answers for question \texttt{Q1686}: ``Who plays dumbledore in harry potter 6'' in WikiQA dataset. In this example, a correct answer is present among the candidate answers, therefore the text sample obtained by concatenation is labeled as relevant.}
    \label{tab:wikiqa_rel_example1}
\end{table}


\begin{table}[ht]
  \begin{center}
    \begin{small}
        \begin{tabular}{p{0.85\textwidth}c}
          \toprule
          Candidate Answer  & Label \\
          \midrule
          A patient with braces. & 0\\
        \hline
         Dental braces (also known as orthodontic braces, or braces) are devices used in orthodontics that align and straighten teeth and help to position them with regard to a person's bite, while also working to improve dental health. & 0\\
        \hline
        They are often used to correct underbites, as well as malocclusions, overbites, cross bites, open bites, deep bites, crooked teeth, and various other flaws of the teeth and jaw. & 0\\
        \hline
        Braces can be either cosmetic or structural. & 0\\
        \hline
        Dental braces are often used in conjunction with other orthodontic appliances to help widen the palate or jaws and to otherwise assist in shaping the teeth and jaws. & 0\\
          \bottomrule
        \end{tabular}
    \end{small}
  \end{center}
  \caption{Candidate answers for question \texttt{Q2943}: ``What is the average wear time for braces?'' in WikiQA dataset. In this example, a correct answer is not present among the candidate answers, therefore the text sample obtained by concatenation is labeled as irrelevant.}
  \label{tab:wikiqa_rel_example2}
\end{table}

\subsection{Preprocessing of Jigsaw Unintended Bias in Toxicity Classification Dataset}\label{app:jigsaw_preprocessing}

We use the comments from the private leaderboard test set (\texttt{test\_private\_expanded.csv}). The original sample size is 97320. We reduce attention to the comments which had at least five annotators and which are at least 100 characters long. Subsequently, we group the comments into four buckets: $[0, 0.25), [0.25, 0.5), [0.5, 0.75), [0.75, 1]$ based on the corresponding toxicity score, representing the fraction of annotators who marked a given comment as toxic. Finally, we select 1k comments from each bucket at random to form the final dataset. The ground-truth label is set to one (toxic) if and only if at least half of annotators marked the comment as toxic.

\subsection{Prompts}\label{app:prompts}

In this Section, we describe the prompt templates that were used for LLM-as-a-judge evaluations:
\begin{enumerate}
    \item The template for relevance evaluation is provided in Figure~\ref{fig:prompt_relevance}.
    \item The template for toxicity evaluation is provided in Figure~\ref{fig:prompt_toxicity}.
    \item The template for summarization evaluation is provided in Figure~\ref{fig:prompt_summarization}.
\end{enumerate}

\begin{figure}[ht]
    \centering
        \begin{tcolorbox}[]
\begin{lstlisting}
You are comparing a reference text to a question and trying to determine if the reference text contains information relevant to answering the question. Here is the data:

[BEGIN DATA]
************
[Question]: {query}
************
[Reference text]: {reference}
[END DATA]

Compare the Question above to the Reference text. You must determine whether the Reference text contains information that can answer the Question. Please focus on whether the very specific question can be answered by the information in the Reference text.

Your response must be structured in XML format with special characters properly escaped. The parent tag must be 'response' and must have two child fields:
- reasoning: step by step reasoning for your evaluation.
- score: only respond with "True" or "False". Response with "True" if you think the reference text contains an answer to the Question. Respond with "False" if you think the reference text does not contain an answer to the Question.

Strictly follow the response format instructions:

<response>
<reasoning>Your step by step reasoning</reasoning>
<score>The final score</score>
</response>

For example, your response could look like (only use the below as a formatting example):

<response>
<reasoning>Your step by step reasoning for evaluating readability</reasoning>
<score>True</score>
</response>
\end{lstlisting}
\end{tcolorbox}
    \caption{Prompt template for evaluating text relevance. We adopt \href{https://arize.com/docs/phoenix/evaluation/running-pre-tested-evals/retrieval-rag-relevance}{Arize Phoenix evaluation template} with minor changes applied to LLMaaJ response format.}
    \label{fig:prompt_relevance}
\end{figure}

\begin{figure}[ht]
    \centering
        \begin{tcolorbox}[]
\begin{lstlisting}
You are examining written text content. Here is the text:

[BEGIN DATA]
************
[Text]: {text}
************
[END DATA]

Examine the text and determine  whether the text is toxic or not. Toxicity is
defined as any comment that makes hateful statements, demeans or disparages
another user, uses inappropriate language, or threatens physical or emotional
violence. Please focus heavily on the concept of toxicity and do NOT mark
something toxic as non toxic.

Your response must be structured in XML format with special characters properly escaped. The parent tag must be 'response' and must have two child fields:
- reasoning: step by step reasoning for your evaluation.
- score: ONLY respond with "True" or "False". Response with "True" if you think that the text meets the definition of toxic. Respond "False" if you think the text is not toxic, meaning the text does not contain any words, sentiments or meaning that could be considered toxic.

Strictly follow the response format instructions:

<response>
<reasoning>Your step by step reasoning</reasoning>
<score>The final score</score>
</response>

For example, your response could look like (only use the below as a formatting example):

<response>
<reasoning>Your step by step reasoning for evaluating toxicity</reasoning>
<score>True</score>
</response>
\end{lstlisting}
\end{tcolorbox}
    \caption{Prompt template for evaluating text toxicity. We adopt \href{https://arize.com/docs/phoenix/evaluation/running-pre-tested-evals/toxicity}{Arize Phoenix toxicity template} with minor changes applied to LLMaaJ response format.}
    \label{fig:prompt_toxicity}
\end{figure}

\begin{figure}[ht]
    \centering
    \begin{tcolorbox}[]
\begin{lstlisting}
You are comparing the summary text and it's original document and trying to determine if the summary is good. Here is the data:

[BEGIN DATA]
************
[Summary]: {output}
************
[Original Document]: {input}
[END DATA]

Compare the Summary above to the Original Document and determine if the Summary is comprehensive, concise, coherent, and independent relative to the Original Document. Your response must be either True or False. "False" means that the Summary is not comprehensive, concise, coherent, and independent relative to the Original Document. "True" means the Summary is comprehensive, concise, coherent, and independent relative to the Original Document.

Your response must be structured in XML format with special characters properly escaped. The parent tag must be 'response' and must have two child fields:
- reasoning: step by step reasoning for your evaluation.
- score: only respond with "True" or "False". Response with "True" if you think the Summary is comprehensive, concise, coherent, and independent relative to the Original Document. Respond with "False" if you think otherwise.

Strictly follow the response format instructions:

<response>
<reasoning>Your step by step reasoning</reasoning>
<score>The final score</score>
</response>

For example, your response could look like (only use the below as a formatting example):

<response>
<reasoning>Your step by step reasoning for evaluating the summary</reasoning>
<score>True</score>
</response>
\end{lstlisting}
\end{tcolorbox}
    \caption{Prompt template for evaluating text summarization. We adopt \href{https://arize.com/docs/phoenix/evaluation/running-pre-tested-evals/summarization-eval}{Arize Phoenix evaluation template} with minor changes applied to LLMaaJ response format.}
    \label{fig:prompt_summarization}
\end{figure}








\end{document}